\newcounter{definition}
\newenvironment{definition}[1][]{\refstepcounter{definition}\par\bigskip\medskip
   \noindent \textbf{Definition~\thedefinition\ (#1)} \rmfamily}{\medskip}
\newcounter{proposition}
\newenvironment{proposition}[1][]{\refstepcounter{proposition}\par\bigskip\medskip
   \noindent \textbf{Proposition~\theproposition\ #1} \rmfamily}{\medskip}
\newcounter{example}
\newenvironment{example}[1][]{\refstepcounter{example}\par\bigskip\medskip
   \noindent \textbf{Example~\theexample\ (#1)} \rmfamily}{\medskip}
\title{Learning Logic Programs\\From Noisy Failures}     
\author{John Wahlig}             
\begin{document}

\baselineskip=18pt plus1pt

\setcounter{secnumdepth}{3}
\setcounter{tocdepth}{3}

\maketitle                  
\begin{acknowledgements}
I first would like to thank Andrew Cropper for his incredible guidance while supervising this project. His insight and expertise was immeasurable. I would next like to thank Rolf Morel for all of his help, advice, and morale boosting conversations. I would also like to thank Brad Hunter for his insightful discussions and for the pleasure of working beside him. I give utmost thanks to my parents for affording me this incredible opportunity and for their endless support and encouragement. Nothing I have achieved would be possible without their generosity and sacrifice. I would lastly like to thank Stephen, Brian, and Abby for inspiring me everyday to be the best that I can be and Alba for just about everything. 
\end{acknowledgements}   
\begin{abstract}
Inductive Logic Programming (ILP) is a form of machine learning (ML) which in contrast to many other state of the art ML methods typically produces highly interpretable and reusable models. However, many ILP systems lack the ability to naturally learn from any noisy or partially missclassified training data. We introduce the \textit{relaxed learning from failures} approach to ILP, a noise handling modification of the previously introduced \textit{learning from failures} (LFF) approach \cite{popper} which is incapable of handling noise. We additionally introduce the novel Noisy Popper ILP system which implements this relaxed approach and is a modification of the existing Popper system \cite{popper}. Like Popper, Noisy Popper takes a generate-test-constrain loop to search its hypothesis space wherein failed hypotheses are used to construct hypothesis constraints. These constraints are used to prune the hypothesis space, making the hypothesis search more efficient. However, in the relaxed setting, constraints are generated in a more lax fashion as to avoid allowing noisy training data to lead to hypothesis constraints which prune optimal hypotheses. Constraints unique to the relaxed setting are generated via hypothesis comparison. Additional constraints are generated by weighing the accuracy of hypotheses against their sizes to avoid overfitting through an application of the minimum description length. We support this new setting through theoretical proofs as well as experimental results which suggest that Noisy Popper improves the noise handling capabilities of Popper but at the cost of overall runtime efficiency.
\end{abstract}          

\begin{romanpages}          
\tableofcontents            
\listoffigures              
\end{romanpages}            

\begin{chapter}
{Introduction}
\section{Motivation}
A major goal for AI is to achieve human-like intelligence through the imitation of our cognitive abilities \cite{lake2016building}. To this end, AI systems often aim to mimic our automatic inductive capacity in which previous (background) knowledge and prior observations are used to infer upon new observations \cite{mitchell1997machine} - a complex task having applications in numerous domains such as image classification \cite{deng2009imagenet} and autonomous navigation \cite{bojarski2016end}. Notably, humans have the innate ability to filter out outlying or incorrect observations, naturally and accurately handling noisy data or incomplete sets of background knowledge. Many machine learning (ML) systems have been implemented to achieve this inductive behavior, capable of identifying patterns among millions of often noisy datapoints. However, traditional ML methods such as neural networks are typically incapable of expressing their models in forms which are easily comprehensible to humans. Additionally, the knowledge learned by many of these systems lacks transferability and cannot be applied to similar problems. For example, an AI system such as AlphaGo \cite{silver2016mastering} which has learned to effectively play the game of Go on a standard $19 \times 19$ size board may struggle greatly on a board of different size. Without comprehensibility and transferability, these systems fail to achieve true levels of human cognition \cite{lake2016building, mitchell2018never}. \textit{Inductive Logic Programming} \cite{muggleton1991inductive} however has been an approach more capable of meeting these additional requirements. 

\bigskip \noindent Inductive logic programming (ILP) is a form of ML wherein a logic program which defines a target predicate is learned given positive and negative examples of the predicate and background knowledge (BK). The target predicate, BK, and examples are represented as logical statements, typically as logic programs in a language such as Prolog whose notation we will use throughout this paper. More precisely, BK defines predicates and ground-truth atoms that the system may use to define the target predicate program. The aim of the system is to learn a program (or hypothesis) that correctly generalizes as many examples as possible, i.e., entails as many positive examples as possible and does not entail as many negative examples as possible.

\begin{example}[ILP Problem] Consider Michalski’s trains problem consisting of 5 trains moving eastbound and 5 moving westbound. We will use this example throughout the paper and Figure 1.1 below visually depicts the original problem. Each train is comprised of a locomotive pulling a variable number of cars, each with distinct characteristics such as length, number of wheels, shape, number of cargo loads, shape of loads, etc. The goal of the problem is to identify a set of one or more rules which distinguishes the eastbound trains from the westbound. For instance, a solution to the original problem shown in Figure 1.1 would be the following rule: \textit{if a train has a car which is short, has two wheels, and its roof is closed, then it is eastbound and otherwise it is westbound}. This problem is easily described in an ILP setting by letting one set of trains, say eastbound, represent positive examples and westbound trains represent negative examples. BK here defines each train and its characteristics with logical predicates for length, number of wheels, shape, etc. Hypotheses to these problems can be easily described using these logical predicates. For example, the rule above would be written as:

\begin{center}
    \texttt{eastbound(A) :- has$\_$car(A,B), short(B), two$\_$wheels(B), roof$\_$closed(B).}
\end{center}

\noindent Here, the hypothesis is claiming that any eastbound train \texttt{A} must have a car \texttt{B} and \texttt{B} must be short, have two wheels, and its roof must be closed. While most modern ILP systems are able to effectively learn classifiers to solve the original trains problem, more complicated variations have been used to compare system capabilities and predictive accuracies.

\begin{figure}[ht]
\centering
\includegraphics[scale = 1.1]{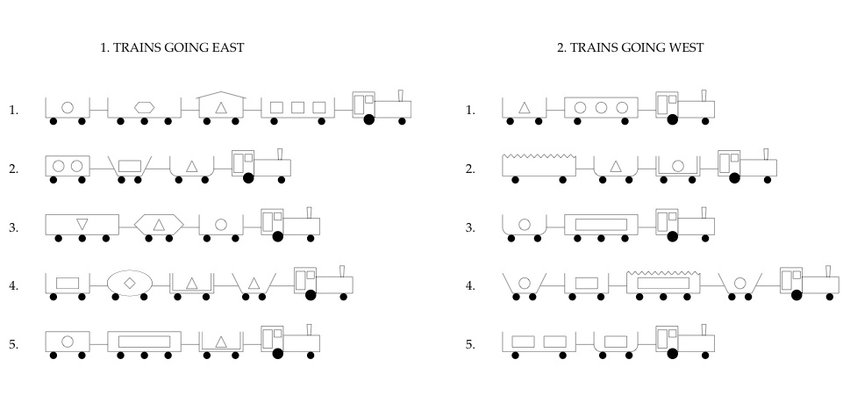}
\caption{Michalski's original east-west trains problem}
\end{figure}
\label{trains}
\end{example}

\bigskip \noindent A strong motivation behind ILP methods is the high level of comprehensibility possessed by logic programs that they return when compared to traditional ML models. Programs described by programming languages possess semantics interpretable by both humans and computers. This falls in line with Michie's \cite{mitchie1988} strong criterion for machine learning and highly explainable AI whereas traditional ML methods often focus solely on improving the weak criterion, i.e., predictive accuracy. Work in this area of ultra-strong machine learning \cite{muggleton2018ultra} has demonstrated how human understanding of a system can be improved by better understanding the  program, thus motivating this desire for comprehensibility. The symbolic nature of the logic programs not only increases their comprehensibility but also allows for ease of lifelong learning and knowledge transfer \cite{torrey2007relational, lin2014bias, cropper2019playgol, cropper2020forgetting}, an essential criteria for human-like AI. Models can be easily reused and complex models can be built up from solving smaller problems whose logic programs are simply placed in the BK. ILP also constitutes a form of code synthesis having applications in code completion, automated programming, and novel program invention. Additionally, unlike many traditional machine learning approaches, ILP can generalize exceptionally well with just a few examples \cite{cropper2020turning}.

\bigskip \noindent At it's core, the problem of ILP is efficiently searching the set of all possible hypotheses, the hypothesis space, which is typically very large. Current ILP systems take many approaches to this problem including but not limited to set covering techniques \cite{quinlan, muggleton1995inverse, nrsample, blockeel1998top, srinivasan2001aleph}, meta-level encodings \cite{cropper2019learning, muggleton2015meta, hexmil}, and neural networks \cite{dilp}, each with various tradeoffs. Simultaneously, as most machine learning methods must, these systems often navigate the issue of noisy datasets (i.e., misclassified examples). While some systems such as TILDE \cite{blockeel1998top}, Progol \cite{muggleton1995inverse}, and Aleph \cite{srinivasan2001aleph} are naturally built to withstand noise to varying degrees, they struggle with building recursive programs, often vital for constructing optimal solutions. Others such as ILASP \cite{ilasp} and Metagol \cite{muggleton2015meta} are inherently incapable of generalizing with noisy data - noise being commonly ignored by ILP systems in exchange for soundness or optimality. However in exchange, ILASP and Metagol are both capable of generating recursive programs and possess varying levels of predicate invention wherein new predicates are created by the system and defined using BK. These both are useful in constructing compact and often optimal solutions. Handling noise is a fundamental issue in machine learning as real-world data is typically endowed with misclassifications and human-errors. As such, machine learning approaches should be able to handle this noise as well as possible, though this problem is never trivial to solve.

\paragraph{Popper} 
Popper \cite{popper} is an ILP system which takes a unique approach of learning from failures (LFF). Popper uses a three stage loop: generate, test, and constrain. However, unlike other three stage approaches \cite{law2018inductive}, Popper uses theta-subsumption \cite{plotkin1972automatic} in conjunction with failed hypotheses to constraint the hypothesis space. Rather than building up a solution to the problem, Popper essentially narrows down the possible solutions significantly enough that the hypothesis space can be efficiently searched. 

\bigskip \noindent In the generate stage, Popper generates a program or hypothesis from the hypothesis space which satisfies a set of hypothesis constraints and subsequently tests the hypothesis against training examples in the test stage. Successful hypotheses are return as solution programs while failed ones are used to generate further hypothesis constraints for subsequent iterations.

\bigskip \noindent While Popper's approach has notable strengths over some existing techniques such as ease of generating recursive programs, it is completely unable to handle noise. Programs generated by Popper necessarily entail all positive examples and no negative ones, clearly overfitting in the presence of any missclassified data. It is the objective of this project to modify Popper's approach to allow it to generalize better to noisy datasets without compromising its overall performance capabilities.

\section{Contribution}
The main contribution of this project is an extension to Popper which can handle noise in exchange for the optimality guarantees of the system. For simplicity, this paper will refer to the original version of Popper as \textit{Normal Popper} and the novel noise handling version of Popper as \textit{Noisy Popper}. The main hypotheses of this project are:
\begin{itemize}
    \item Noisy Popper typically generalizes better than Normal Popper with noisy datasets, being able to return more accurate hypothesis as solutions than Normal Popper which may be incapable of returning any program at all.
    \item Noisy Popper does not lose out in ability to generalize well with noiseless datasets, however, it performs less efficiently than Normal Popper in these environments. 
\end{itemize}

\paragraph{Noise Handling Approach}
Noisy Popper makes several modifications and algorithmic changes to Normal Popper to allow it to better generalize to noise. These changes are as follows:

\begin{itemize}
    \item The first contribution is to alter the LFF framework to one which handles noise. This altered setup is called the Relaxed LFF Framework and it relaxes definitions for solutions and optimally thus also changing how the strictly the hypothesis space is searched.
    \item The second contribution is to introduce theoretically sound constraints in this framework. These sound constraints are used to prune suboptimal hypotheses under the new setting. Some of these use the minimum description length principle to help the system avoid hypotheses which overfit the data. These constraints are described and proved in Propositions~\ref{prop:sound1}-\ref{prop:mdl4} in Chapter 4.
    \item The third contribution is Noisy Popper, which implements this relaxed LFF framework in addition to some enhancements to improve noise handling capacity including any anytime algorithm approach and efficient minimal constraints.
    \item The final contribution is an experimental evaluation of Noisy Popper which demonstrates that Noisy Popper on average generalizes better to noisy datasets than Normal Popper, that Noisy Popper generalizes as well as Normal Popper to noiseless datasets, and how each enhancement used to construct Noisy Popper effects its overall performance.
\end{itemize}

\section{Paper Organization}
This dissertation consists of seven chapters including this brief introduction to the project. The subsequent chapters will be as follows:

\begin{itemize}
    \item Chapter 2 will review related works in the field of ILP including systems which are unable to handle noise, systems which can handle noise, their methods, and how effective they are in practice.
    \item Chapter 3 will cover background information on the LFF framework including a brief review of logic notation and ILP.
    \item Chapter 4 will begin the novel contributions of this paper and discuss the Relaxed LFF framework including the theoretical claims used to justify the framework setup.
    \item Chapter 5 will cover the implementation of Noisy Popper and touch on the implementation of Normal Popper out of necessity.
    \item Chapter 6 will discuss the experiments and empirical results and analysis of the Noisy Popper system.
    \item Chapter 7 is the conclusion and will summarise the paper, its claims and findings, and discuss limitations of the work as well as future work to be considered.
\end{itemize} 
\end{chapter}

\begin{chapter}
{Related Work}
In this chapter, we give a brief overview of the current state of ILP by discussing several systems, their approaches, and their noise handling capacities.

\section{No Noise Handling}
ILP has been a machine learning area of great interest for over three decades \cite{cropper2020turning}. Naturally, many varied approaches to solving ILP problems have been introduced, each with varying degrees of success at handling noisy data, though many take no attempt at all. A common approach to ILP is through the use of metarules \cite{cropper2020logical} which are logical statements defining the syntactic form logic programs may take within the hypothesis space, thus restricting said space. Metagol \cite{muggleton2015meta} is a popular ILP system under this Meta-Interpretive Learning (MIL) setting. Because of the strict nature of these metarules, MIL systems like Metagol often possess higher inductive bias when compared to predicate declarations which simply define which predicates may appear in the head or body of a clause. These predicate declarations are what Popper uses as its language bias (restrictions which define the initial hypothesis space). Metagol additionally allows for automatic predicate invention wherein novel predicates are created using existing predicates and can be used to simplify hypothesis construction. The major drawback of such an approach however is the need for domain expertise as a user typically needs to define the metarules to be used by the system. Additionally, like Popper, Metagol only returns solutions which entail all positive examples and no negative examples, meaning that the system is naturally incapable of generalizing to noise. 

\bigskip \noindent ILASP \cite{ilasp} is another ILP system which cannot handle noise, but takes an Answer Set Programming (ASP) approach. With ASP approaches, the problem itself is encoded as an ASP problem using logical statements or rules. These rules form a type of constraint satisfaction problem which is then solved by a state of the art ASP solver, generating an answer set solution which satisfies the given problem constraints. While effective, these methods carry drawback as all machine learning approaches do. ILASP works in a similar loop to Popper, generating hypothesis and using them to construct ASP constraints to improve the search in subsequent iterations. These constraints are in the form of boolean formulas over the given set of rules. ILASP pre-computes all of these rules using an ASP encoding for the given ILP problem, constructs additional ASP constraints from these encodings, and finally solves an additional ASP problem with these new constraints. Pre-computing all rules is not only computationally expensive, but the system also struggles to learn rules with many body literals. Additionally, ILASP does not typically scale as well as other systems as it requires a large amount of grounding with the programs it generates. With noise, a similar issue to Metagol and Popper exists where the system continues to constrain hypotheses until a solution is found which covers all positive examples and no negative examples.

\bigskip \noindent HEXMIL \cite{hexmil} is an approach which combines the MIL and ASP settings using a HEX-formalism to encode MIL with external sources, reducing the bottleneck produced by the need to ground all atoms. Like the others, this approach fundamentally cannot handle noise as returned hypotheses must entail all positive examples and no negative ones. Contrasting to these approaches, in this project we introduce Noisy Popper which is capable of generalizing to sets of noisy examples as returned solutions may not perfectly entail all positive examples and no negative ones.

\section{Set Covering}
One popular approach to ILP is to use set covering algorithms which progressively learn hypotheses by adding one logical clause at a time, covering a number of positive examples with each. Perhaps the most influential ILP system and one which implements a set covering technique is Progol \cite{muggleton1995inverse}. Its intuitive approach selects a positive example that has not yet been entailed by the program and generates the bottom clause or the most specific clause that entails that example using the minimal Herbrand model of the set of atoms. It then attempts to make this clause as general as possible so that when added to the program constructed so far, it entails as many new positive examples as possible while avoiding entailing negative examples. However, the model contains a noise parameter which controls the quantity of negative examples that are allowed to be entailed. In this way, the system may avoid overfitting the data. However, this hyperparameter leaves much of the noise handling procedure on the user and is not a default mechanism of the system. Significant fine tuning is required for Progol to adequately generalize to noisy datasets, a noticeable burden on the user. Aleph \cite{srinivasan2001aleph} is a popular system based on Progol but built in Prolog. Aleph uses a scoring metric to determine how general to make the bottom clauses. This score can be user defined. As such, it can be selected so that the system is adaptable to noise - a returned solution may not perfectly entail all positive example and no negative examples thus avoiding overfitting. However, like Progol, setting up the hyperparameter environment to accurately learn from noisy data is cumbersome. 

\bigskip \noindent TILDE \cite{blockeel1998top} uses an approach of top-down induction of decision trees \cite{quinlan1986induction} combined with first-order logic to construct a solution as a decision tree. As with traditional binary decision trees, they are constructed to correctly classify the given set of training examples with the left and right splits corresponding to conjunctions in the logical statements constructed, though the model produced is not required to cover all given examples. A tree construction where each example corresponds to a single leaf node/classification is entirely possible and would constitute a form of overfitting, so the system takes steps to avoid this as in a traditional machine learning setting. However, this method again requires fine tuning. Under-pruning the tree can lead to significant overfitting while over-pruning results in small decision trees which do not fit the data well. Noisy Popper however requires no such noise parameters and naturally generalizes to noisy datasets without requiring fine tuning.

\section{Sampling Examples}
Metagol$_{NT}$ \cite{muggleton2018meta} is a noise tolerant extension of the Metagol system, simply acting as a wrapper around the original Metagol algorithm. Metagol$_{NT}$ first generates a random subset of the training examples. It then learns a program which perfectly fits this data using the original Metagol system and finds the accuracy of the resulting program on the remaining unused training data. The system repeats this loop several times and simply returns the program which obtained the highest accuracy. In this way, the returned program will not always perfectly fit all training data as Metagol would, but will often better generalize to noisy datasets. This approach has shown decent results having even been used accurately for some image classification problems \cite{muggleton2018meta}. However, in that same work, the authors address how the approach has limited grasp on noise handling and often fails if noise concentration is too high. The system is largely dependent on a number of factors including the number of training examples, size of the random subsets, and number of candidate programs generated. If too much noise is present in each subset, no program will capture the true underlying data pattern. If the subsets are too small to ensure at least some contain relatively little noise, a similar issues may occur where there are not enough examples to generalize from. Tuning these hyperparameters is not always trivial. Like systems such as Aleph, Metagfol$_{NT}$ also requires a difficult to use noise parameter which determines how many negative examples are permitted to be entailed. Like with most systems, these hyperparameters are difficult to effectively tune, though Noisy Popper lacks them entirely.

\section{Branch and Bound}
ILASP3 \cite{ilasp3} is a noise tolerant version of ILASP taking the form of a branch and bound approach to ILP. Like ILASP, ILASP3 uses an ASP encoding to constrain the search space in a similar generate, test, constrain loop as Popper. With each hypothesis generated, the system tests which positive examples are not covered, determines why, and uses these failed examples to generate additional ASP constrains for the next iteration, pruning the search space. Unlike ILASP, ILASP3 assigns weights to each example. The system then searches for an optimal program which entails the highest sum of weights as possible, rather than simply trying to entail all positive examples and no negative examples. In this way, ILASP3 is designed to handle noisy data. Weights can also be used to correct imbalances in the ratio between positive and negative examples. For example, if there are twice as many positive examples as negative, the negative examples may be weighted twice as much to avoid being ignored as noise by the system, i.e., the system would only focus on entailing as many positive examples as possible since the negative weights are negligible. However, like the original ILASP system, ILASP3 still pre-computes all ASP rules at each iteration leading to large computational cost, causing it to struggle when scaling to large datasets.

\section{Neural Networks}
An alternative to these previous ILP approaches is to take a continuous rather than discrete approach to the problem through the use of neural networks. The $\partial$ILP \cite{dilp} system uses continuous weight values to determine probability distributions of atoms over clauses. The system uses stochastic gradient descent to alter these weights and minimize the cross entropy loss of each classified example. Like with most standard neural network approaches, the system can be tuned with hyperparameters such as a learning rate and initial clause weights. In this way, the system can be trained to handle some amount of noise as the returned program may not have zero loss. As with the previous systems however, this hyperparameter tuning is not always intuitive. Additionally, $\partial$ILP requires program templates to constrain the set of programs searched which is another user defined parameter, requiring some amount of brute-force work in order to generate an efficient search space. 

\section{Applications to Popper}
While the noise handling approaches for these ILP systems are worth studying in their own rights, the unique LFF framework of Popper means that we cannot apply many of these techniques directly. The general concept of scoring hypotheses used in ILASP3 is a concept which can be applied to Popper as a means to compare more than just the accuracy of hypotheses in order to prevent overfitting, e.g., we may want to score a short and highly accurate hypothesis higher than a massive but perfectly accurate one. Ultimately however, a novel approach to noise handling must be taken with Popper, though we aim to show that the theoretical results used can be extended to other systems in the future regardless of whether they fall under the LFF framework. Additionally, many of these noise tolerant systems do so through the use of hyperparameters which often make them cumbersome to use and ineffective under default conditions, i.e., significant tuning is usually required to allow the systems to effectively generalize to noisy data. As such, a goal of the Noisy Popper implementation is to make it as natural of an extension of Normal Popper as possible which requires little to no hyperparameters.

\end{chapter}
\begin{chapter}
{Learning from Failures (LFF) Framework}
This chapter will provide an overview of the LFF framework used by Normal Popper and modified by Noisy Popper. First, we will briefly cover logic programming preliminaries and notation necessary for the rest of the paper, though we will assume some prior knowledge of boolean logic on the part of the reader. Using this notation, we will formally define the ILP problem setting. We will conclude by explaining the LFF framework and its definitions. 

\section{Logic Programming Preliminaries}
To understand the LFF framework, it is necessary to review the framework of logic programming. This section will briefly cover necessary definitions based on those found in \cite{cropper2017efficiently, cropper2020turning}. We will assume some familiarity with the topic, though for a comprehensive overview, interested readers are encouraged to reference \cite{nienhuys1997foundations}.

\subsection{First Order Logic}
We will refer to the following definitions from \cite{cropper2020turning} throughout the paper:

\begin{itemize}
    \item A \textit{variable} is a character string which starts with an uppercase letter, e.g., \texttt{A, B, Var}.
    \item A \textit{function} symbol is a character string which starts with a lowercase letter, e.g., \texttt{f, eastbound, last, head}.
    \item A \textit{predicate} symbol is a character string which starts with a lowercase, like a function symbol. The \textit{arity} \texttt{n} of a predicate symbol represents the number of arguments that it takes and is denoted as \texttt{p/n}, e.g., \texttt{f/1, eastbound/1, last/1, head/2}.
    \item A \textit{constant} symbol is a function or predicate symbol which has arity 0.
    \item A \textit{term} is a variable or constant symbol, or a function or predicate symbol with arity $n$ that is immediately followed by a tuple of $n$ terms.
    \item We call a term \textit{ground} if it contains no variables.
    \item An \textit{atom} is a logical formula $p(t_1, t_2, ..., t_n)$, where $p$ is a predicate symbol of arity $n$ and $t_i$ is a term for $i \in \{1, 2, ..., n\}$, e.g., \texttt{eastbound(train)} where \texttt{eastbound} is a predicate symbol of artiy 1 and \texttt{train} is a constant symbol.
    \item An atom is \textit{ground} is all of its terms are ground, like the example in the definition above.
    \item We represent the \textit{negation} symbol as $\lnot$.
    \item A \textit{literal} is an atom $A$ (a positive literal) or its negation $\lnot A$ (a negative literal), e.g., \texttt{eastbound(train$_1$)} is both an atom and a literal while $\lnot$\texttt{eastbound(train$_1$)} is only a literal as atoms do not contain the negation symbol.
\end{itemize}

\paragraph{Clauses}
We can use these previous definitions as building blocks to construct the logic programs and constraints we will be using.

\begin{definition}[Clause]
A clause if a finite (possibly empty) disjunction of literals. 
\label{def:clause}
\end{definition}

\bigskip \noindent For instance, this following set of literals constitutes a clause:

\begin{center}
    $\{$\texttt{eastbound(A), $\lnot$has$\_$car(A,B), $\lnot$two$\_$wheels(B), $\lnot$roof$\_$closed(B)}$\}$
\end{center}
 
\noindent We assume all variables in a clause are universally quantified, so explicit quantifiers are omitted. As with terms and atoms, clauses are ground if they contain no variables, so the example above is not ground. In logic programming, clauses are typically in reverse implication form:

\begin{center}
    \texttt{h} $\leftarrow$ \texttt{b$_1$ $\land$ b$_2$ $\land$ ... $\land$ b$_n$}.
\end{center}

\bigskip \noindent Put verbally, the above clause states that the literal \texttt{h}, known as the \textit{head literal}, is true only if all literals \texttt{b$_i$}, known as \textit{body literals}, are all true. All the \texttt{b$_i$} literals together are the body of the clause. Note, the head literal must always be a positive literal. We often use shorthand replacing $\leftarrow$ with \texttt{:-} and $\land$ with \texttt{,} to ease writing clauses and make them similar to actual Prolog notation, e.g., the above clause we would write as:

\begin{center}
    \texttt{h :- b$_1$, b$_2$, ..., b$_n$}.
\end{center}

\noindent For simplicity, we will use this Prolog notation throughout this paper. We define a \textit{clausal theory} as a set of clauses. In the LFF setting, we restrict clauses to those which contain no function symbols and where every variable which appears in the head of a clause also appears in its body. These clauses are known as \textit{Datalog} clauses and a set of Datalog clauses constitute a \textit{Datalog theory}. We also define a \textit{Horn clause} as a clause with at most one positive literal, as is the case with the above example. We restrict our setting to only Horn clauses and \textit{Horn theories} which are sets of Horn clauses. \textit{Definite} clauses are Horn clauses with exactly one positive literal while a \textit{Definite logic program} is a set of definite clauses. The logic programs which form our ILP hypothesis spaces will consist of only Datalog definite logic programs.

\paragraph{Substitution}
\textit{Substitution} is an essential logic programming concept and is simply the act of replacing variables $v_0, v_1, ..., v_n$ with terms $t_0, t_1, ..., t_n$. Such a substitution is denoted by: $\theta = \{v_0/t_0, v_1/t_1, ..., v_n/t_n\}$. For instance, the substitution $\theta = \{A/train\}$ to \texttt{eastbound(A) :- has$\_$car(A,B), two$\_$wheels(B), roof$\_$closed(B)} yields \texttt{eastbound(train) :- has$\_$car(train,B), two$\_$wheels(B), roof$\_$closed(B)}. In this example, \texttt{eastbound(train)} would be true if \texttt{train} possess some car \texttt{B} such that \texttt{B} has three wheels and its roof is opened. A substitution $\theta$ \textit{unifies} atoms $A$ and $B$
if $A\theta = B\theta$, i.e., using substitution $\theta$ on atoms $A$ and $B$ obtains equivalent results.

\section{LFF Problem Setting}
This section formally introduces definitions to the LFF framework problem. Most of the definitions are taken from \cite{popper}. Interested readers should refer to this paper for a more thorough explanation.

\subsection{Declaration Bias}
\bigskip \noindent The LFF problem setting is based off of the ILP learning from entailment setting \cite{quinlan} whose goal, as stated in the first chapter, is to take as input sets of positive and negative examples, BK, and a target predicate and return a hypothesis or logic program which in conjunction with the BK entails all positive examples and no negative examples. All ILP approaches boil down to searching a hypothesis space for such a program. For each ILP problem, the hypothesis space is restricted by a \textit{language bias}. Though several language biases exist in ILP, our LFF framework uses \textit{predicate declarations} which declare which predicates are permitted to appear in the head of a clause in a hypothesis and which are permitted to appear in the body. The declarations are defined as follows:

\begin{definition}[Head Declaration]
A \textit{head declaration} is a ground atom of the form $head\_pred(p,a)$ where $p$ is a predicate symbol of arity $a$ \cite{popper}.
\label{def:headdeclaration}
\end{definition}

\bigskip \noindent For example, for our running trains problem, we would have \texttt{head$\_$pred(eastbound,1)}.

\begin{definition}[Body Declaration]
A \textit{body declaration} is a ground atom of the form $body\_pred(p,a)$ where $p$ is a predicate symbol of arity $a$ \cite{popper}.
\label{def:bodydeclatation}
\end{definition}

\bigskip \noindent For example, for the trains example, we would have \texttt{body$\_$pred(has$\_$car,2), \\ body$\_$pred(two$\_$wheels,1)} and \texttt{body$\_$pred(roof$\_$closed,1)} among others.

\bigskip \noindent We can then define a declaration bias $D$ as a pair $(D_h, D_b)$ where $D_h$ is a set of head declatations and $D_b$ is a set of body declarations. The LFF hypothesis space then must only be comprised of programs whose clauses conform to these declaration biases. We define the notion of a \textit{declaration consistent} clause:

\begin{definition}[Declaration Consistent Clause]
Let $D = (D_h, D_b)$ be a declaration bias and $C = h \leftarrow b_1, b_2, ..., b_n$ be a definite clause. We say that $C$ is \textit{declaration consistent} with $D$ if and only if:
\label{def:declarationconsistentclause}
\end{definition}

\begin{itemize}
    \item $h$ is an atom of the form $p(X_1, X_2, ..., X_n)$ such that $head\_pred(p,n) \in D_h$.
    \item every $b_i$ is a literal of the form $p(X_1, X_2, ..., X_m)$ such that $body\_pred(p,m) \in D_b$.
    \item every $X_i$ is a first-order variable.
\end{itemize}
\cite{popper}

\begin{example}[Clause Declaration Consistency]
Let \\ $D = (\{$\texttt{head$\_$pred(eastbound,1)}$\}, \{$\texttt{body$\_$pred(has$\_$car,2), body$\_$pred(two$\_$wheels,1), body$\_$pred(roof$\_$closed,1)}$\})$ be a declaration bias. The following clauses would be declaration consistent with $D$:
\end{example}

\begin{center}
    \begin{tabular}{l}
        \texttt{eastbound(A) :- has$\_$car(A,B).} \\
        \texttt{eastbound(A) :- has$\_$car(A,B), two$\_$wheels(A).} \\
        \texttt{eastbound(A) :- has$\_$car(A,B), roof$\_$closed(B).}
    \end{tabular}
\end{center}

\noindent Conversely, the following clauses are declaration inconsistent with $D$:

\begin{center}
    \begin{tabular}{l}
        \texttt{eastbound(A, B) :- has$\_$car(A,B).} \\
        \texttt{eastbound(A) :- has$\_$car(A,B), eastbound(A).} \\
        \texttt{eastbound(A) :- has$\_$car(A,B), has$\_$load(B,C).}
    \end{tabular}
\end{center}

\noindent With this definition, we can fully define \textit{Declaration consistent hypotheses} which populate our hypothesis space:

\begin{definition}[Declaration Consistent Hypothesis]
Let $D = (D_h, D_b)$ be a declaration bias. A \textit{declaration consistent hypothesis} $H$ is a set of definite clauses where each clause $C \in H$ is declaration consistent with $D$ \cite{popper}. 
\label{def:declarationconsistenthypo}
\end{definition}

\begin{example}[Hypothesis Declaration Consistency]
Again, let $D$ be the same declaration bias as in the example above. Then the following hypotheses are declaration consistent:
\end{example}

\begin{center}
    \begin{tabular}{l}
        \texttt{h$_1$ =} $\left\{\begin{array}{l}
        \texttt{eastbound(A) :- has$\_$car(A,B), two$\_$wheels(B).}\\
        \end{array}\right\}$ \\
        \texttt{h$_2$ =} $\left\{\begin{array}{l}
        \texttt{eastbound(A) :- has$\_$car(A,B), two$\_$wheels(B).}\\
        \texttt{eastbound(A) :- has$\_$car(A,B), roof$\_$closed(B).}
        \end{array}\right\}$
    \end{tabular}
\end{center}

\subsection{Hypothesis Contraints}
\label{sec:hypothesisconstraints}
While declaration biases are how we restrict the initial hypothesis space, the LFF framework revolves around pruning the hypothesis space through hypothesis constraints which we define as in \cite{popper}. We first precisely define a constraint:

\begin{definition}[Constraint]
A \textit{constraint} is a Horn clause without a head, i.e., a denial.
We say that a constraint is violated if all of its body literals are true \cite{popper}.
\label{def:constraint}
\end{definition}

\bigskip \noindent We can proceed with a general definition of a hypothesis constraint:

\begin{definition}[Hypothesis Constraint]
Let $\mathcal{L}$ be a language that defines hypotheses,
i.e., a meta-language. Then a \textit{hypothesis constraint} is a constraint expressed in $\mathcal{L}$ \cite{popper}.
\label{def:hypothesisconstraint}
\end{definition}

\begin{example}[Hypothesis Constraints]
\label{exp:hypothesisconstraints}
In both Normal and Noisy Popper, the meta-language used to encode programs takes a form like this:

\begin{center} 
    \texttt{head$\_$literal(Clause,Pred,Arity,Vars)} \\
\end{center}

\noindent which denotes that the clause \texttt{Clause} possesses a head literal with predicate symbol \texttt{Pred} which has an arity of \texttt{Arity} and whose arguments are defined by \texttt{Vars} (note: \texttt{Vars} would be represented by a tuple of size equal to \texttt{Arity}). The following atom:

\begin{center} 
    \texttt{body$\_$literal(Clause,Pred,Arity,Vars)} \\
\end{center}
 
\noindent analogously defines a body literal appearing in \texttt{Clause}. We can then construct an example of a hypothesis constraint:

\begin{center} 
    \texttt{:- head$\_$literal(C,p,1,$\_$), body$\_$literal(C,p,1,$\_$)} \\
\end{center}

\noindent where '$\_$'s represent wildcards. This constraint simply states that clause \texttt{C} cannot contain a predicate symbol \texttt{p} which appears both in the head and the body of the clause, e.g., the clause \texttt{C = p(A) :- p$_1$(A,B), p(B).}
\end{example}

\bigskip \noindent Like with declaration consistent hypotheses, we can now define a hypothesis which is consistent with all hypothesis constraints:

\begin{definition}[Constrain Consistent Hypothesis]
Let $C$ be a set of hypothesis constraints written in a language $\mathcal{L}$ . A set of definite clauses $H$ is \textit{consistent} with $C$ if, when written in $\mathcal{L}$ , $H$ does not violate any constraint in $C$. \cite{popper}
\label{def:constraintconsistenthypo}
\end{definition}

\subsection{Problem Setting}
\bigskip \noindent Now that we have defined declaration bias and hypothesis constraints, we can fully define the LFF hypothesis space which takes a similar form to most ILP hypothesis spaces:

\begin{definition}[Hypothesis Space] 
Let $D$ be a declaration bias and $C$ be a set of hypothesis constraints. Then, the hypothesis space $\mathcal{H}_{D,C}$ is the set of all declaration and constraint consistent hypotheses. We refer to any element in $\mathcal{H}_{D,C}$ as a \textit{hypothesis} \cite{popper}.
\label{def:hypothesisspace}
\end{definition}

\bigskip \noindent We additionally can define the precise LFF problem:

\begin{definition}[LFF Problem Input]
Our problem input is a tuple $(B,D,C,E^+,E^-)$ where:
\begin{itemize}
    \item $B$ is a Horn program denoting background knowledge
    \item $D$ is a declaration bias
    \item $C$ is a set of hypothesis constraints
    \item $E^+$ is a set of ground atoms denoting positive examples
    \item $E^-$ is a set of ground atoms denoting negative examples
\end{itemize}
\cite{popper}
\label{def:probleminput}
\end{definition}

\bigskip \noindent As in \cite{popper} we will also define several hypothesis outcomes or types commonly used in ILP literature \cite{nienhuys1997foundations} which we will refer to moving forward.

\begin{definition}[Hypothesis Types]
Let $(B,D,C,E^+,E^-)$ be an input tuple and $H \in \mathcal{H}_{D,C}$ be a hypothesis. Then $H$ is:
\begin{itemize}
    \item \textit{Complete} when $\forall e \in E^+, H \cup B \models e$
    \item \textit{Consistent} when $\forall e \in E^-, H \cup B \not \models e$
    \item \textit{Incomplete} when $\exists e \in E^+, H \cup B \not \models e$
    \item \textit{Inconsistent} when $\exists e \in E^-, H \cup B \models e$
    \item \textit{Totally Incomplete} when $\forall e \in E^+, H \cup B \not \models e$
    \item \textit{Totally Inconsistent} when $\forall e \in E^-, H \cup B \models e$
\end{itemize}
\cite{popper}
\label{def:hypothesistypes}
\end{definition}

\bigskip \noindent This terminology also helps us define an LFF solution and LFF failed hypothesis:

\begin{definition}[LFF Solution] 
Given an input tuple $(B,D,C,E^+,E^-)$, a hypothesis $H \in \mathcal{H}_{D,C}$ is a \textit{solution} when $H$ is complete and consistent \cite{popper}.
\label{def:lffsolution}
\end{definition}

\begin{definition}[LFF Failed Hypothesis] 
Given an input tuple \\ $(B,D,C,E^+,E^-)$, a hypothesis $H \in \mathcal{H}_{D,C}$ \textit{fails} (or is a \textit{failed} hypothesis) when $H$ is either incomplete or inconsistent \cite{popper}.
\label{def:lfffailedhypothesis}
\end{definition}

\bigskip \noindent These definition correspond to many ILP system settings we discussed in Chapter's 1 and 2 where a solution entails all positive examples and no negative examples. 

\paragraph{Optimality}
For a given LFF problem, there can naturally be several solutions. For example, consider an east-west trains problem where all eastbound trains are those which possess a car with two wheels and all other trains are westbound. Consider the hypotheses:

\begin{center}
    \begin{tabular}{l}
        \texttt{h$_1$ =} $\left\{\begin{array}{l}
        \texttt{eastbound(A) :- has$\_$car(A,B), two$\_$wheels(B).}\\
        \end{array}\right\}$ \\
        \texttt{h$_2$ =} $\left\{\begin{array}{l}
        \texttt{eastbound(A) :- has$\_$car(A,B), two$\_$wheels(B).}\\
        \texttt{eastbound(A) :- has$\_$car(A,B), two$\_$wheels(B), roof$\_$closed(B).}
        \end{array}\right\}$
    \end{tabular}
\end{center}

\medskip \noindent Both hypotheses would correctly identify all trains. Note that the second clause in \texttt{h$_2$} will entail nothing extra from the first clause, making it redundant. Naturally, we would rather return hypothesis \texttt{h$_1$} as it is simpler, lacking this redudant clause. Though deciding between two solutions is a common and non-trivial problem in ILP, often systems define optimality in terms of length, returning the solution with fewest clauses \cite{muggleton2015meta, hexmil} or literals \cite{corapi2011inductive, ilasp}. While many ILP systems are not guaranteed to return optimal solutions \cite{muggleton1995inverse, srinivasan2001aleph, blockeel1998top}, Normal Popper \cite{popper} is guaranteed to return optimal solutions with minimal number of total literals. Noisy Popper also appeals to this description of optimality as it works closely with the minimum description length (MDL) principle \cite{rissanen1978modeling} which is used to justify several claims later in this paper. As such, we will formally define hypothesis size and solution optimality:

\begin{definition}[Hypothesis Size]
The function $size(H)$ returns the total number of literals in the hypothesis $H$ \cite{popper}.
\label{def:hypothesissize}
\end{definition}

\begin{definition}[LFF Optimal Solution] 
Given an input tuple $(B,D,C,E^+,E^-)$, a hypothesis $H \in \mathcal{H}_{D,C}$ is an \textit{optimal solution} when two conditions hold:
\begin{itemize}
    \item $H$ is a solution
    \item $\forall H' \in \mathcal{H}_{D,C}$ such that $H'$ is a solution, $size(H) \leq size(H')$
\end{itemize} \cite{popper}
\label{def:lffoptimalsolution}
\end{definition}

\subsection{Generalizations and Specializations}
In the LFF framework, the hypothesis constraints are learned from the generalizations and specializations of failed hypotheses. In this way, large sections of the hypothesis space can be pruned for each hypothesis generated and tested. To understand generalizations and specializations, we need to define the notion of $\theta$-subsumption \cite{plotkin1972automatic} which we refer to simply as subsumption.

\begin{definition}[Clausal Subsumption]
A clause $C_1$ \textit{subsumes} a clause $C_2$ if and only if there exists a $\theta$-subsumption such that $C_1 \theta \subseteq C_2$ \cite{popper}.
\label{def:clausalsubsumption}
\end{definition}

\begin{example}[Clausal Subsumption] 
Let \texttt{C$_1$} and \texttt{C$_2$} be defined as:
\begin{center}
    \begin{tabular}{l}
        \texttt{C$_1$ = eastbound(A) :- has$\_$car(A,B).}\\
        \texttt{C$_2$ = eastbound(X) :- has$\_$car(X,Y), two$\_$wheels(Y).}
    \end{tabular}
\end{center}
We say that \texttt{C$_1$} subsumes \texttt{C$_2$} since if $\theta = \{A/B\}$ then \texttt{C$_1$} $\theta \subseteq$ \texttt{C$_2$}.
\end{example}

\bigskip \noindent Importantly, subsumption implies entailment \cite{nienhuys1997foundations}, though the converse does not necessarily hold. Thus, if clause $C_1$ subsumes $C_2$, then $C_1$ must entail at least everything that $C_2$ does. \cite{midelfart1999bounded} extends this idea of subsumption to clausal theories:

\begin{definition}[Theory Subsumption] 
A clausal theory $T_1$ \textit{subsumes} a clausal theory $T_2$, denoted $T_1 \preceq T_2$, if and only if $\forall C_2 \in T_2, \exists C_1 \in T_1$ such that $C_1$ subsumes $C_2$ \cite{popper}.
\label{def:theorysubsumption}
\end{definition}

\begin{example}[Theory Subsumption] 
Let \texttt{h$_1$} and \texttt{h$_2$} be defined as:
\begin{center}
    \begin{tabular}{l}
        \texttt{h$_1$ =} $\left\{\begin{array}{l}
        \texttt{eastbound(A) :- has$\_$car(A,B), two$\_$wheels(B).}
        \end{array}\right\}$ \\
        \texttt{h$_2$ =} $\left\{\begin{array}{l} \\
        \texttt{eastbound(A) :- has$\_$car(A,B), two$\_$wheels(B), roof$\_$closed(B).}
        \end{array}\right\}$ \\
        \texttt{h$_3$ =} $\left\{\begin{array}{l}
        \texttt{eastbound(A) :- has$\_$car(A,B), two$\_$wheels(B).}\\
        \texttt{eastbound(A) :- has$\_$car(A,B), roof$\_$closed(B).}
        \end{array}\right\}$
    \end{tabular}
\end{center}

\noindent Then we can say \texttt{h$_1$} $\preceq$ \texttt{h$_2$}, \texttt{h$_3$} $\preceq$ \texttt{h$_2$}, and \texttt{h$_3$} $\preceq$ \texttt{h$_1$}.
\end{example}

\bigskip \noindent \cite{popper} also proves the following proposition regarding theory subsumption:

\begin{proposition}[(Subsumption implies Entailment)] Let $T_1$ and $T_2$ be clausal theories. If $T_1 \preceq T_2$ then $T_1 \models T_2$ \cite{popper}.
\label{prop:subsumptionimplies}
\end{proposition}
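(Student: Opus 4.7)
The plan is to reduce theory-level entailment to the clause-level fact that subsumption implies entailment, which the excerpt already notes as a standard result from \cite{nienhuys1997foundations}. I would set up the argument model-theoretically: to prove $T_1 \models T_2$, it suffices to show that every model of $T_1$ is also a model of $T_2$, i.e., satisfies every clause in $T_2$.

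First I would fix an arbitrary model $M$ of $T_1$ and an arbitrary clause $C_2 \in T_2$. By Definition~\ref{def:theorysubsumption}, the hypothesis $T_1 \preceq T_2$ gives me some $C_1 \in T_1$ with $C_1$ subsuming $C_2$. Because $M \models T_1$ and $C_1 \in T_1$, I get $M \models C_1$. Now invoking the clause-level fact that subsumption implies entailment (so $C_1 \models C_2$), I conclude $M \models C_2$. Since $C_2$ was arbitrary, $M \models T_2$, and since $M$ was arbitrary, $T_1 \models T_2$.

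The only nontrivial step is the clause-level lemma that $C_1$ subsumes $C_2$ implies $C_1 \models C_2$. The excerpt explicitly states this is known and cites \cite{nienhuys1997foundations}, so I would simply quote it rather than reprove it; the short justification is that if $C_1 \theta \subseteq C_2$ for some substitution $\theta$, then any ground instance of $C_2$ satisfied by falsifying all its body literals and the head would, via $\theta$, force a ground instance of $C_1$ to be falsified too, contradicting $M \models C_1$ (and universal quantification over the variables of $C_1$ lets us specialize freely). Since this is invoked as a black box, the proof of the proposition itself is essentially a one-line bookkeeping argument; the only potential pitfall to be careful about is that theory subsumption is defined clause-by-clause rather than as a single global substitution, but this poses no issue because the model-theoretic argument above handles each $C_2 \in T_2$ independently using its own witness $C_1$.
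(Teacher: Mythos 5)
Your proof is correct: the paper itself states this proposition without proof, importing it from \cite{popper}, and your model-theoretic reduction to the clause-level fact that subsumption implies entailment (fix a model of $T_1$, handle each $C_2 \in T_2$ via its own subsuming witness $C_1 \in T_1$) is exactly the standard argument. The only thing to tidy is the parenthetical sketch of the clause-level lemma, where ``satisfied by falsifying all its body literals and the head'' reads as a contrapositive stated backwards --- a falsified ground instance of $C_2$ falsifies every literal of $C_1\theta \subseteq C_2$ and hence falsifies $C_1$ --- but since you invoke that lemma as a cited black box from \cite{nienhuys1997foundations}, this does not affect the validity of the proof.
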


\bigskip \noindent That is, using the programs above, any example that \texttt{h$_1$} entails is also entailed by \texttt{h$_3$}. Using Definition~\ref{def:theorysubsumption} for theory subsumption, we can define the notion of a \textit{generalization}:

\begin{definition}[Generalization] A clausal theory $T_1$ is a \textit{generalization} of a clausal theory $T_2$ if and only if $T_1 \preceq T_2$ \cite{popper}.
\label{def:generalization}
\end{definition}

\bigskip \noindent For example, again using the programs above, \texttt{h$_3$} is a generalization of \texttt{h$_1$} which is a generalizations of \texttt{h$_2$}. Likewise, we can define the notion of a \textit{specialization}:

\begin{definition}[Specialization] A clausal theory $T_1$ is a \textit{specialization} of a clausal theory $T_2$ if and only if $T_2 \preceq T_1$ \cite{popper}.
\label{def:specialization}
\end{definition}

\bigskip \noindent Again using the previous programs as examples, \texttt{h$_2$} is a specialization of \texttt{h$_1$} which is a specialization of \texttt{h$_3$}.

\bigskip \noindent With these definitions, we can describe in the next section how Normal Popper generates hypothesis constraints using generalizations and specializations from failed hypotheses.

\section{Hypothesis Constraints}
The Normal Popper system breaks down the ILP problem into three separate stages: \textit{generate, test,} and \textit{constrain}. Unlike many ILP approaches which refine a clause \cite{quinlan, muggleton1995inverse, blockeel1998top, srinivasan2001aleph, nrsample} or hypothesis \cite{DBLP:conf/ilp/Bratko99, DBLP:conf/ilp/AthakraviCBR13, muggleton2015meta}, Normal Popper refines the hypothesis space itself by learning hypothesis constraints. In the generate stage, Normal Popper generates a hypothesis which satisfies all current hypothesis constraints. These constraints determine the syntactic form a hypothesis may take. In the subsequent test stage, this hypothesis is then tested against the positive and negative examples provided to the system. Should a hypothesis fail, i.e., it is either \textit{incomplete} or \textit{inconsistent}, the system continues on to the constrain stage. Here, the system learns additional hypothesis constraints from the failed hypothesis to further prune the hypothesis space for future hypothesis generation. There are two general types of constraints that both Normal Popper and Noisy Popper are concerned with: generalizations and specializations. We will discuss both here in addition to a third particular type of constrain called elimination constraints.

\subsection{Generalization Constraints} 
Consider a hypothesis $H$ being tested against $E^-$. If $H$ is inconsistent, that is it entails some or all of the examples in $E^-$, we can conclude that $H$ is too general. That is, $H$ is entailing too many examples and not being restrictive enough. Thus, any solution to the ILP problem is necessarily more restrictive than $H$, i.e., it entails less than $H$. We can prune all generalizations of $H$ as these too must be inconsistent \cite{popper} since they only can entail additional examples from $H$. This leads us to the definition of a generalization constraint:

\begin{definition}[Generalization Constraint] 
A generalization constraint only prunes generalizations of a hypothesis from the hypothesis space \cite{popper}.
\label{def:generalizationconstraint}
\end{definition}

\begin{example}[Generalization Constraints]
Suppose we have the following defined:

\begin{center}
    \begin{tabular}{l}
        $E^-$ = \{\texttt{eastbound(train$_1$).}\}\\
        $\texttt{h = \{eastbound :- has\_car(A,B), two\_wheels(B).\}}$
    \end{tabular}
\end{center}

\noindent Additionally, suppose the BK contains facts:

\begin{center}
    \begin{tabular}{l}
        \texttt{has$\_$car(train$_1$,car$_1$).}\\
        \texttt{two$\_$wheels(car$_1$).}
    \end{tabular}
\end{center}

\noindent We can see how \texttt{h} entails the only negative example, indicating that it is too general. As such, all generalizations of \texttt{h} can be pruned, e.g., programs such as:

\begin{center}
    \begin{tabular}{l}
        \texttt{h$_1$ =} $\left\{\begin{array}{l}
        \texttt{eastbound(A) :- has$\_$car(A,B), two$\_$wheels(B).}\\
        \texttt{eastbound(A) :- has$\_$car(A,B), roof$\_$closed(B).}
        \end{array}\right\}$ \\
        \texttt{h$_2$ =} $\left\{\begin{array}{l}
        \texttt{eastbound(A) :- has$\_$car(A,B), two$\_$wheels(B).}\\
        \texttt{eastbound(A) :- has$\_$car(A,B), has$\_$load(B,C), circle(C).}\\
        \texttt{eastbound(A) :- has$\_$car(A,B), short(B).}
        \end{array}\right\}$
    \end{tabular}
\end{center}

\noindent Because \texttt{h$_1$} $\succeq$ \texttt{h} and \texttt{h$_2$} $\succeq$ \texttt{h}, both \texttt{h$_1$} and \texttt{h$_2$} must also entail this one negative example and therefore cannot be LFF solutions. Note that given hypotheses \texttt{h} and \texttt{h'}, if \texttt{h} $\subseteq$ \texttt{h'} then \texttt{h} is a generalization of \texttt{h'}.
\end{example}

\subsection{Specialization Constraints} 
Next, consider a hypothesis $H$ being tested against $E^+$. If $H$ is incomplete, that is it entails only some or none of the examples in $E^+$, we can conclude that $H$ is too specific. That is, $H$ is entailing too few examples and being overly restrictive. Thus, any solution to the ILP problem is necessarily less restrictive than $H$, i.e., it entails more than $H$. We can prune all specializations of $H$ as these too must be incomplete \cite{popper} since they only can entail fewer examples than $H$. This leads us to the definition of a specialization constraint:

\begin{definition}[Specialization Constraint] 
A specialization constraint only prunes specialization of a hypothesis from the hypothesis space \cite{popper}.
\label{def:specializationconstraint}
\end{definition}

\begin{example}[Specialization Constraints]
Suppose we have the following defined:

\begin{center}
    \begin{tabular}{l}
        $E^+$ = \{\texttt{eastbound(train$_2$).}\}\\
        $\texttt{h = \{eastbound :- has\_car(A,B), two\_wheels(B), roof\_closed(B).\}}$
    \end{tabular}
\end{center}

\medskip \noindent Additionally, suppose the BK contains the facts:

\begin{center}
    \begin{tabular}{l}
        \texttt{has$\_$car(train$_2$,car$_2$).}\\
        \texttt{two$\_$wheels(car$_2$).}
    \end{tabular}
\end{center}

\noindent We can see how \texttt{h} does not entails the only positive example, since \texttt{train$_2$} only contains a car which has three wheels, but does not have its roof closed. This indicates that the hypothesis is too specific. As such, all specializations of \texttt{h} can be pruned, e.g., programs such as:

\begin{center}
    \begin{tabular}{l}
        \texttt{h$_1$ =} $\left\{\begin{array}{l}
        \texttt{eastbound(A) :- has$\_$car(A,B), two$\_$wheels(B), roof$\_$closed(B), short(B).}\\
        \end{array}\right\}$ \\
        \texttt{h$_2$ =} $\left\{\begin{array}{l}
        \texttt{eastbound(A) :- has$\_$car(A,B), two$\_$wheels(B), has$\_$load(B,C), circle(C).}\\
        \end{array}\right\}$
    \end{tabular}
\end{center}

\medskip \noindent Because \texttt{h} $\succeq$ \texttt{h$_1$} and \texttt{h} $\succeq$ \texttt{h$_2$}, both \texttt{h$_1$} and \texttt{h$_2$} must also fail to entail this positive example and therefore cannot be LFF solutions.
\end{example}

\subsection{Elimination Constraints}
\label{sec:eliminationconstraints}
Finally, we can consider a specific case where a hypothesis $H$ is \textit{totally incomplete}. In addition to the normal specialization constraint, we can prune a particular set of hypotheses which contain a version of $H$ within themselves. To precisely define these hypotheses, we will need an additional definition:

\begin{definition}[Separable]
A \textit{separable} hypothesis $G$ is one where no predicate symbol in the head of a clause in $G$ occurs in the body of a clause in $G$ \cite{popper}.
\label{def:separable}
\end{definition}

\begin{example}[Non-separable Hypotheses]
Consider the following hypothesis:

\begin{center}
    \begin{tabular}{l}
        \texttt{h =} $\left\{\begin{array}{l}
        \texttt{eastbound(A) :- has$\_$car(A,B), f(B).}\\
        \texttt{f(B) :- two$\_$wheels(B), roof$\_$closed(B)}
        \end{array}\right\}$
    \end{tabular}
\end{center}

\noindent Hypothesis \texttt{h} is non-separable because the predicate symbol \texttt{f} appears in both a head of a clause and in the body of a clause. \cite{popper} shows that if a hypothesis $H$ is totally incomplete, then neither $H$ nor any specialization of $H$ can appear inside any separable \textit{optimal} solution. Thus, all separable hypotheses containing a specialization of $H$ can be pruned. This leads us to the definition of an elimination constraint:
\end{example}

\begin{definition}[Elimination Constraint]
An elimination constraint only prunes separable hypotheses that contain specialisations of a hypothesis from the hypothesis space \cite{popper}.
\label{def:eliminationconstraint}
\end{definition}

\begin{example}[Elimination Constraints]
Consider the set of positive examples:

\begin{center}
    \begin{tabular}{l}
        $E^+ = $\{\texttt{eastbound(train$_1$)., eastbound(train$_2$).}\}
    \end{tabular}
\end{center}

\noindent and consider the candidate hypothesis \texttt{h}:

\begin{center}
    \begin{tabular}{l}
        \texttt{h =} $\left\{\begin{array}{l}
        \texttt{eastbound(A) :- has$\_$car(A,B), two$\_$wheels(B), roof$\_$closed(B).}
        \end{array}\right\}$
    \end{tabular}
\end{center}

\medskip \noindent Additionally, suppose the BK contains the facts:

\begin{center}
    \begin{tabular}{l}
        \texttt{has$\_$car(train$_1$,car$_1$).}\\
        \texttt{two$\_$wheels(car$_1$).}\\
        \texttt{has$\_$car(train$_2$,car$_2$).}\\
        \texttt{short(car$_2$).}\\
    \end{tabular}
\end{center}

\medskip \noindent Clearly, \texttt{h} is totally incomplete and as such, Popper will add an elimination constraint which will prune all separable hypotheses that contain \texttt{h$_1$} or any of its specializations such as:

\bigskip
{\centering
    \begin{tabular}{l}
        \texttt{h$_1$ =} $\left\{\begin{array}{l}
        \texttt{eastbound(A) :- has$\_$car(A,B), two$\_$wheels(B), roof$\_$closed(B).}\\
        \texttt{eastbound(A) :- has$\_$car(A,B), has$\_$load(B,C), circle(C).}
        \end{array}\right\}$\\
        \texttt{h$_2$ =} $\left\{\begin{array}{l}
        \texttt{eastbound(A) :- has$\_$car(A,B), two$\_$wheels(B), roof$\_$closed(B).}\\
        \texttt{eastbound(A) :- has$\_$car(A,B), short(B).}\\
        \texttt{eastbound(A) :- has$\_$car(A,B), two$\_$wheels(B).}
        \end{array}\right\}$\\
        \texttt{h$_3$ =} $\left\{\begin{array}{l}
        \texttt{eastbound(A) :- has$\_$car(A,B), two$\_$wheels(B), roof$\_$closed(B), short(B).}\\
        \texttt{eastbound(A) :- has$\_$car(A,B), long(B).}
        \end{array}\right\}$
    \end{tabular}
}
\end{example}

\medskip \noindent Note that elimination constraints may prune solutions from the hypothesis space. If $E^-$ is empty in the example above, the hypothesis \texttt{h$_2$} above would be a solution to the problem as it entails all positive examples. However, this hypothesis is not optimal as elimination constraints will never prune optimal solutions. An optimal solution to this example for instance would instead be:

\begin{center}
    \begin{tabular}{l}
        \texttt{h$_4$ =} $\left\{\begin{array}{l}
        \texttt{eastbound(A) :- has$\_$car(A,B), short(B).}\\
        \texttt{eastbound(A) :- has$\_$car(A,B), two$\_$wheels(B).}
        \end{array}\right\}$\\
    \end{tabular}
\end{center}

\bigskip \noindent These basic hypothesis constraints allow Normal Popper to perform exceptionally well on many datasets, even those with very few examples. However, these constraints heavily rely on the absence of noise in the example sets. As the next chapter will discuss, incorrectly labelled examples can cause these hypothesis constraints to prune valuable sections of the hypothesis space.

\section{Summary}
In this chapter, we summarized the LFF framework for ILP problem solving. In doing so, we reviewed necessary logic programming concepts and notations as well as formalized terminology we well use frequently moving forward. We additionally discussed the crucial concepts of subsumption, generalizations, and specialization which both Noisy and Normal Popper base their hypothesis constraints on. Finally, we discussed the constraints Normal Popper implements: generalization constraints, specialization constraints, and elimination constraints giving examples of each. In the next chapter, we discuss the modified problem setting for Noisy Popper which we refer to as the Relaxed LFF Framework.

\end{chapter}
\begin{chapter}
{Relaxed LFF Framework}

The LFF setting described in Chapter 3 has a limitation when handling noise in that solutions must perfectly fit the given examples. Its hypothesis constraints may remove highly accurate hypotheses which do not overfit any noisy data in favor of an LFF solution which do overfit. One approach to avoid this is to ignore or \textit{relax} all hypothesis constraints and take a brute force approach, enumerating all hypothesis until one of adequate accuracy is found. This has an obvious inefficiency limitation. The aim of this project if to find a middle-ground between the two solutions which better generalizes to noisy data. 

\bigskip \noindent From this chapter, we will focus on presenting the novel contributions of the project. Here, we first outline the altered Relaxed LFF Framework from which Noisy Popper is built. We start by defining the altered noisy problem setting. We then describe and prove the sound hypothesis constraints within this new setting. Finally, we describe how we can apply the MDL principle to prune overfitting hypotheses through additional sound constraints which take into account hypothesis size.

\section{Relaxed LFF Problem Setting}
In contrast to the LFF problem setting, in the general relaxed setting we do not necessarily wish to find hypotheses that entail all positive examples and no negative examples. Rather, we wish to find hypotheses which optimize some other metric or \textit{score}. In this manner, we can define the relaxed LFF problem input:

\begin{definition}[Relaxed LFF Problem Input]
Our problem input is a tuple $(B,D,C,E^+,E^-,S)$ where:
\begin{itemize}
    \item $B$ is a Horn program denoting background knowledge
    \item $D$ is a declaration bias
    \item $C$ is a set of hypothesis constraints
    \item $E^+$ is a set of ground atoms denoting positive examples
    \item $E^-$ is a set of ground atoms denoting negative examples
    \item $S$ is a scoring function which takes as input $B, E^+, E^-$ as well as a hypothesis $H \in \mathcal{H}_{D,C}$
\end{itemize}
\label{def:relaxedprobleminput}
\end{definition}

\noindent Note that the hypothesis space in this relaxed setting is unchanged from the LFF setting. From here, we can define an solution in this new setting:

\begin{definition}[Relaxed LFF Solution]
Given an input tuple $(B,D,C,E^+,E^-,S)$, a hypothesis $H \in \mathcal{H}_{D,C}$ is a \textit{solution} when $\forall H' \in \mathcal{H}_{D,C}$, $S(H,B,E^+,E^-) \geq S(H',B,E^+,E^-)$.
\end{definition}

\noindent Note that it is possible to model the LFF setting from Chapter 3 in with this definition. To do so, the scoring function $S$ would be defined as:

{\centering
    \[S(H,B,E^+,E^-) = \left\{\begin{array}{l}
    1 \textrm{ if } H \textrm{ is complete and consistent}\\
    0 \textrm{ otherwise}
\end{array}\right\}\]
}

\noindent As in the LFF setting, we define optimality similarly using the size of a hypothesis:

\begin{definition}[Relaxed LFF Optimal Solution] 
Given an input tuple \\ $(B,D,C,E^+,E^-,S)$, a hypothesis $H \in \mathcal{H}_{D,C}$ is an \textit{optimal solution} when two conditions hold:
\begin{itemize}
    \item $H$ is a solution in the relaxed LFF setting
    \item $\forall H' \in \mathcal{H}_{D,C}$ such that $H'$ is a solution, $size(H) \leq size(H')$
\end{itemize}
\label{def:relaxedlffoptimalsolution}
\end{definition}

\noindent With these definition, we can lay out the theoretical contributions of this project through new hypothesis constraints which remain sound in this new setting.

\section{Relaxed LFF Hypothesis Constraints}
\noindent The main difficulty Normal Popper has when dealing with noise is its strict constraints which prune the hypothesis space. If a hypothesis does not entail even just a single positive hypothesis, it is rejected and all of its specializations are pruned. Similarly if a hypothesis entails just one negative example, all of its generalizations are pruned. While this works extremely well under the normal LFF setting, in the presence of noise, being so strict may prune relaxed LFF solutions which do not fit the noisy data and only the underlying patterns. We can illustrate this type of overpruning through an example:

\begin{example}[Overpruning] 
Consider Normal Popper trying to learn the program:
\[\texttt{h =} \left\{\begin{array}{c}
    \texttt{eastbound(A) :- has$\_$car(A,B), short(B), two$\_$wheels(B).}
\end{array}\right\}\]

\noindent Assume that all examples are correctly labelled except one noisy example: \\ \texttt{eastbound(train$_1$)} $\in E^+$ where \texttt{train$_1$} only possess a single long car. That is, in the BK we have among others the facts:

\begin{center}
    \begin{tabular}{l}
        \texttt{has$\_$car(train$_1$,car$_1$).}\\
        \texttt{long(car$_1$).}\\
    \end{tabular}
\end{center}

\noindent Suppose we generate the hypothesis:
\[\texttt{h$_1$ =} \left\{\begin{array}{c}
    \texttt{eastbound(A) :- has$\_$car(A,B), short(B).}
\end{array}\right\}\]

\noindent This program will entail all positive examples in $E^+$ except for the single noisy example as \texttt{train$_1$} does not possess a short car. As such, in the LFF setting, \texttt{h$_1$} is categorized as too specific. Thus, all specializations of \texttt{h$_1$} are pruned which includes the desired solution \texttt{h}.
\end{example}

\bigskip \noindent This overly strict pruning clearly can lead Normal Popper to overfitting any noisy dataset as even a single incorrectly labelled example can cause heavy pruning of the hypothesis space, potentially eliminating relaxed LFF solutions. This can be avoided by not applying any LFF hypothesis constraints. However, we still wish to improve the efficiency of the hypothesis search by removing hypotheses which can not be relaxed LFF solutions, thus motivating relaxed LFF hypothesis constraints. Any hypothesis constraints used in this setting should be sound under some scoring function. Here we define the accuracy scoring function used in this section. We first define the notions of true positive, true negative, false positive, and false negative.

\begin{definition}[True Positive]
Given an input tuple $(B,D,C,E^+,E^-,S)$ and a hypothesis $H \in \mathcal{H}_{D,C}$, we define the true positive function as \\ $tp(H, B, E^+) = \big|\{e\ |\ e \in E^+$ and $H \cup B \models e\}\big|$. 
\label{def:truepositivescore}
\end{definition}

\begin{definition}[True Negative]
Given an input tuple $(B,D,C,E^+,E^-,S)$ and a hypothesis $H \in \mathcal{H}_{D,C}$, we define the true negative function as \\ $tn(H, B, E^-) = \big|\{e\ |\ e \in E^-$ and $H \cup B \not \models e\}\big|$.
\label{def:truenegativescore}
\end{definition}

\begin{definition}[False Positive]
Given an input tuple $(B,D,C,E^+,E^-,S)$ and a hypothesis $H \in \mathcal{H}_{D,C}$, we define the false positive function as \\ $fp(H, B, E^-) = \big|\{e\ |\ e \in E^-$ and $H \cup B \models e\}\big|$. 
\label{def:falsepositivescore}
\end{definition}

\noindent Equivalently, we may choose to write $fp(H,B,E^-) = |E^-| - tn(H,B,E^-)$

\begin{definition}[False Negative]
Given an input tuple $(B,D,C,E^+,E^-,S)$ and a hypothesis $H \in \mathcal{H}_{D,C}$, we define the false negative function as \\ $fn(H, B, E^+) = \big|\{e\ |\ e \in E^+$ and $H \cup B \not \models e\}\big|$.
\label{def:falsenegativescore}
\end{definition}

\noindent Equivalently, we may choose to write $fn(H,B,E^+) = |E^+| - tp(H,B,E^+)$

\bigskip \noindent We will use these functions throughout the remained of the paper. With these, we can define the method with which hypotheses are scored in this section:

\begin{definition}[Accuracy Score]
Given an input tuple $(B,D,C,E^+,E^-,S_{ACC})$ and a hypothesis $H \in \mathcal{H}_{D,C}$, the function $S_{ACC}(H, E^+, E^-, B) = tp(H,B,E^+) + tn(H,B,E^-)$.
\label{def:accuracyscore}
\end{definition}

\bigskip \noindent Note that this scoring function measures training accuracy rather than test accuracy. We now aim to determine situations in which certain hypotheses are known to be suboptimal under this scoring method in the relaxed LFF setting. First, we will consider constraints constructed by comparing two hypotheses. We motivate this through an example:

\begin{example}[Learning by Comparing Hypotheses]
Consider we have relaxed LFF input $(B,D,C,E^+,E^-,S_{ACC})$ and  previously observed the following hypothesis:
\[\texttt{h$_1$ =} \left\{\begin{array}{c}
    \texttt{eastbound(A) :- has$\_$car(A,B), short(B).}
\end{array}\right\}\]

\noindent which has $tp($\texttt{h$_1$}$,B,E^+$) = 5 and $tn($\texttt{h$_1$}$,B,E^-$) = 3. Now, consider a generalization of this hypothesis:

\[\texttt{h$_2$ =} \left\{\begin{array}{l}
    \texttt{eastbound(A) :- has$\_$car(A,B), short(B).} \\
    \texttt{eastbound(A) :- has$\_$car(A,B), two$\_$wheels(B).}
\end{array}\right\}\]

\bigskip \noindent which identically has $tp($\texttt{h$_2$}$,B,E^+$) = 5 and $tn($\texttt{h$_2$}$,B,E^-$) = 3. We can conclude that the clause \texttt{eastbound(A) :- has$\_$car(A,B), two$\_$wheels(B).} is redundant as it entails no additional positive examples than the clause \texttt{eastbound(A) :- has$\_$car(A,B), short(B).} As such, it is not worthwhile to consider any non-recursive generalizations of \texttt{h$_2$} which add additional clauses to \texttt{h$_2$} as we could simply add these same clauses to \texttt{h$_1$}, producing a hypothesis that scores the same but is smaller as it does not contain the redundant clause. To illustrate, consider this generalization: of \texttt{h$_2$}:

\[\texttt{h$_2'$ =} \left\{\begin{array}{l}
    \texttt{eastbound(A) :- has$\_$car(A,B), short(B).} \\
    \texttt{eastbound(A) :- has$\_$car(A,B), two$\_$wheels(B).} \\
    \texttt{eastbound(A) :- has$\_$car(A,B), three$\_$wheels(B).}
\end{array}\right\}\]

\medskip \noindent which we assume scores $tp($\texttt{h$_2'$}$,B,E^+$) = 8 and $tn($\texttt{h$_2'$}$,B,E^-$) = 4. Now, consider this generalization of \texttt{h$_1$}:
\[\texttt{h$_1'$ =} \left\{\begin{array}{l}
    \texttt{eastbound(A) :- has$\_$car(A,B), short(B).} \\
    \texttt{eastbound(A) :- has$\_$car(A,B), three$\_$wheels(B).}
\end{array}\right\}\]

\medskip \noindent which must also score $tp($\texttt{h$_1'$}$,B,E^+$) = 8 and $tn($\texttt{h$_1'$}$,B,E^-$) = 4. Since \texttt{h$_1'$} and \texttt{h$_2'$} score the same, it is redundant to consider both of them and since $size($\texttt{h$_1'$}$) < size($\texttt{h$_2'$}$)$, we can safely prune \texttt{h$_2'$} as a less optimal hypothesis than \texttt{h$_1'$}. Thus, all generalizations of this form can be safely pruned from the hypothesis space.
\end{example}

\bigskip \noindent This example illustrates how we can compare previously observed hypotheses to any new hypotheses in order to identify hypotheses which are suboptimal under the $S_{ACC}$ scoring. The remainder of this section will outline and prove such suboptimal circumstances.

\bigskip \noindent We start by defining some useful propositions relating generalizations and specializations to scoring:

\begin{proposition}[(Scores of Generalizations)] 
Given problem input $(B,D,C,E^+,E^-,S)$, let $H$ and $H'$ be hypotheses in $\mathcal{H}_{D,C}$ where $H'$ is a generalization of $H$. Then:
\begin{itemize}
    \item $tp(H',B,E^+) \geq tp(H,B,E^+)$ and 
    \item $tn(H',B,E^-) \leq tn(H,B,E^-)$
\end{itemize}
\label{prop:scoresofgeneralizations}
\end{proposition}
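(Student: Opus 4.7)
The plan is to leverage the chain of implications already established in the excerpt: generalization is defined via theory subsumption, and Proposition~\ref{prop:subsumptionimplies} tells us that subsumption implies entailment. From there, monotonicity of classical entailment does almost all the work.

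First I would unpack the hypothesis: since $H'$ is a generalization of $H$, Definition~\ref{def:generalization} gives $H' \preceq H$, and Proposition~\ref{prop:subsumptionimplies} then yields $H' \models H$. Combining this with the background knowledge $B$ by standard monotonicity of entailment, I would argue that $H' \cup B \models H \cup B$, and hence for any ground atom $e$, if $H \cup B \models e$ then $H' \cup B \models e$. This single containment of entailed atoms is the engine that drives both inequalities.

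For the first bullet, I would apply this containment to $E^+$: the set $\{e \in E^+ \mid H \cup B \models e\}$ is a subset of $\{e \in E^+ \mid H' \cup B \models e\}$, so taking cardinalities (Definition~\ref{def:truepositivescore}) gives $tp(H,B,E^+) \leq tp(H',B,E^+)$. For the second bullet, I would use the contrapositive direction on $E^-$: if $H' \cup B \not\models e$ then $H \cup B \not\models e$, so $\{e \in E^- \mid H' \cup B \not\models e\} \subseteq \{e \in E^- \mid H \cup B \not\models e\}$, and Definition~\ref{def:truenegativescore} yields $tn(H',B,E^-) \leq tn(H,B,E^-)$.

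There is no real obstacle here; the only subtlety is keeping the direction of the subsumption straight (recall that $H' \preceq H$ means $H'$ is the \emph{more general} theory, which is why it entails more examples, not fewer), and explicitly invoking monotonicity of entailment when adding $B$ to both sides. Once those two points are stated clearly, the proof reduces to a one-line set containment followed by taking cardinalities.
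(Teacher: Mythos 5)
Your proof is correct and follows exactly the route the paper takes: the paper's own proof is the one-line remark that the result ``follows immediately from Proposition~\ref{prop:subsumptionimplies} as subsumption implies entailment,'' and your argument simply fills in the details of that implication (unpacking $H' \preceq H$, invoking monotonicity to pass from $H' \models H$ to the containment of entailed examples, and taking cardinalities). No differences of substance to report.
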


\begin{proof}
Follows immediately from Proposition~\ref{prop:subsumptionimplies} as subsumption implies entailment.
\end{proof}

\begin{proposition}[(Scores of Specializations)] 
Given problem input $(B,D,C,E^+,E^-,S)$, let $H$ and $H'$ be hypotheses in $\mathcal{H}_{D,C}$ where $H'$ is a specialization of $H$. Then:
\begin{itemize}
    \item $tp(H',B,E^+) \leq tp(H,B,E^+)$ and 
    \item $tn(H',B,E^-) \geq tn(H,B,E^-)$
\end{itemize}
\label{prop:scoresofspecializations}
\end{proposition}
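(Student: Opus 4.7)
The plan is to mirror the proof of Proposition~\ref{prop:scoresofgeneralizations} essentially verbatim, but with the direction of subsumption reversed. By Definition~\ref{def:specialization}, the phrase "$H'$ is a specialization of $H$" unpacks as $H \preceq H'$, i.e.\ $H$ subsumes $H'$. I would then invoke Proposition~\ref{prop:subsumptionimplies} to promote this subsumption to logical entailment, obtaining $H \models H'$. From here, extending by the background knowledge $B$ preserves entailment, so every model of $H \cup B$ is also a model of $H' \cup B$, hence $H \cup B \models H' \cup B$.

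The next step is to transfer this entailment relation to the example sets. For any ground atom $e$, if $H' \cup B \models e$ then by transitivity $H \cup B \models e$. Thus the set of atoms entailed by $H' \cup B$ is contained in the set entailed by $H \cup B$. Restricting this containment to $E^+$ gives $tp(H',B,E^+) \leq tp(H,B,E^+)$ immediately. For the negative inequality, $tn$ counts negative examples \emph{not} entailed, so complementation reverses the containment and yields $tn(H',B,E^-) \geq tn(H,B,E^-)$. This mirrors exactly the two bullets stated in Proposition~\ref{prop:scoresofgeneralizations}, with the inequalities flipped because specializations entail no more than their generalizations.

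The only genuine obstacle is bookkeeping: one must be careful that the specialization $H'$ is the \emph{subsumed} theory rather than the subsumer, which is easy to confuse given that a specialization has syntactically "more" structure (longer bodies, additional literals) while semantically entailing less. Once this direction is pinned down, the proof collapses to a single appeal to Proposition~\ref{prop:subsumptionimplies}, which is presumably why the author will write it as a one-line corollary analogous to the generalization case.
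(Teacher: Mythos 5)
Your proposal is correct and matches the paper's proof, which simply states that the result follows immediately from Proposition~\ref{prop:subsumptionimplies}; you have merely filled in the routine details of unpacking Definition~\ref{def:specialization} and transferring entailment through $B$ to the example sets. The direction of subsumption is handled correctly, so nothing further is needed.
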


\begin{proof}
Follows immediately from Proposition~\ref{prop:subsumptionimplies} as subsumption implies entailment.
\end{proof}

\bigskip \noindent We now prove when relaxed generalization and specialization hypothesis constraints may be applied:

\begin{proposition} 
Given problem input $(B,D,C,E^+,E^-,S_{ACC})$ and hypotheses $H_1$, $H_2$, and $H_3$ in $\mathcal{H}_{D,C}$ where $H_3$ is a generalization of $H_1$, if \\$S_{ACC}(H_2,B,E^+,E^-) - S_{ACC}(H_1,B,E^+,E^-) > fn(H_1,B,E^+)$ then \\$S_{ACC}(H_2,B,E^+,E^-) > S_{ACC}(H_3,B,E^+,E^-)$.
\label{prop:sound1}
\end{proposition}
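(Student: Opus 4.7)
The plan is to derive a tight upper bound on $S_{ACC}(H_3,B,E^+,E^-)$ purely in terms of $S_{ACC}(H_1,B,E^+,E^-)$ and $fn(H_1,B,E^+)$, and then chain it with the given hypothesis on $H_2$.

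First I would expand $S_{ACC}(H_3,B,E^+,E^-) = tp(H_3,B,E^+) + tn(H_3,B,E^-)$ and bound each summand separately. For the true negative term, because $H_3$ is a generalization of $H_1$, Proposition~\ref{prop:scoresofgeneralizations} gives directly $tn(H_3,B,E^-) \leq tn(H_1,B,E^-)$. For the true positive term, I would simply use the universal bound $tp(H_3,B,E^+) \leq |E^+|$, which rewrites as $tp(H_3,B,E^+) \leq tp(H_1,B,E^+) + fn(H_1,B,E^+)$ via the identity noted after Definition~\ref{def:falsenegativescore}. Summing the two bounds yields $S_{ACC}(H_3,B,E^+,E^-) \leq S_{ACC}(H_1,B,E^+,E^-) + fn(H_1,B,E^+)$.

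Second, I would combine this upper bound with the hypothesis of the proposition. The assumption $S_{ACC}(H_2,B,E^+,E^-) - S_{ACC}(H_1,B,E^+,E^-) > fn(H_1,B,E^+)$ rearranges to $S_{ACC}(H_2,B,E^+,E^-) > S_{ACC}(H_1,B,E^+,E^-) + fn(H_1,B,E^+)$, and chaining with the upper bound on $S_{ACC}(H_3,B,E^+,E^-)$ yields the desired strict inequality $S_{ACC}(H_2,B,E^+,E^-) > S_{ACC}(H_3,B,E^+,E^-)$.

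I do not anticipate a genuine obstacle here: the argument is essentially a two-line calculation once one realizes that the only leverage a generalization $H_3$ has over $H_1$ on the accuracy score is to recover at most the $fn(H_1,B,E^+)$ missed positives, and that this gain is fully offset (in the worst case for our bound) by losing no more than $tn(H_1,B,E^-) - tn(H_3,B,E^-) \geq 0$ true negatives. The mild subtlety worth flagging is that Proposition~\ref{prop:scoresofgeneralizations} is what prevents the true negative count from rising when we pass from $H_1$ to its generalization $H_3$, and this is precisely what makes $fn(H_1,B,E^+)$ (rather than a larger quantity involving $|E^-|$) suffice as the gap in the hypothesis.
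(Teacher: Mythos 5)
Your proof is correct and is essentially the paper's own argument, merely reorganized: the paper chains forward from the hypothesis to reach $S_{ACC}(H_2,B,E^+,E^-) > |E^+| + tn(H_1,B,E^-)$ and then applies $tp(H_3,B,E^+) \leq |E^+|$ together with Proposition~\ref{prop:scoresofgeneralizations}, which is exactly the upper bound $S_{ACC}(H_3,B,E^+,E^-) \leq S_{ACC}(H_1,B,E^+,E^-) + fn(H_1,B,E^+)$ you derive first. No substantive difference.
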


\begin{proof}
The improvement in score from $H_1$ to $H_2$ can be quantified by:
\begin{align*}
    S_{ACC}(H_2,B,E^+,E^-) - S_{ACC}(H_1,B,E^+,E^-) &> fn(H_1,B,E^+)\\
    S_{ACC}(H_2,B,E^+,E^-) - [tp(H_1,B,E^+) + tn(H_1,B,E^-)] &> |E^+| - tp(H_1,B,E^+)\\
    S_{ACC}(H_2,B,E^+,E^-) &> |E^+| + tn(H_1,B,E^-) \\
    &\geq tp(H_3,B,E^+) + tn(H_3,B,E^-) \\
    &= S_{ACC}(H_3,B,E^+,E^-)
\end{align*}
The second to last line following from Proposition~\ref{prop:scoresofgeneralizations} and that $H_3$ is a generalization of $H_1$ in addition to the fact $|E^+| \geq tp(H_3,B,E^+)$.
\end{proof}

\begin{proposition} 
Given problem input $(B,D,C,E^+,E^-,S_{ACC})$ and hypotheses $H_1$, $H_2$, and $H_3$ in $\mathcal{H}_{D,C}$ where $H_3$ in is a specialization of $H_1$. If \\$S_{ACC}(H_2,B,E^+,E^-) - S_{ACC}(H_1,B,E^+,E^-) > fp(H_1,B,E^-)$ then \\$S_{ACC}(H_2,B,E^+,E^-) > S_{ACC}(H_3,B,E^+,E^-)$.
\label{prop:sound2}
\end{proposition}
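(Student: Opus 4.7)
The plan is to mirror the algebraic argument used for Proposition~\ref{prop:sound1}, substituting the role of positive examples with that of negative ones and invoking Proposition~\ref{prop:scoresofspecializations} in place of Proposition~\ref{prop:scoresofgeneralizations}. The driving intuition is that, just as a generalization of $H_1$ can gain at most $fn(H_1,B,E^+)$ true positives (while possibly losing true negatives), a specialization of $H_1$ can gain at most $fp(H_1,B,E^-)$ true negatives (while possibly losing true positives). So whenever $H_2$ improves on $H_1$ by strictly more than that bound, it must also beat any specialization $H_3$ of $H_1$.

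Concretely, I would start by expanding $S_{ACC}(H_1,B,E^+,E^-)$ as $tp(H_1,B,E^+) + tn(H_1,B,E^-)$ and rewriting $fp(H_1,B,E^-)$ as $|E^-| - tn(H_1,B,E^-)$ using the equivalent form noted after Definition~\ref{def:falsepositivescore}. Rearranging the hypothesized inequality then yields
\begin{equation*}
S_{ACC}(H_2,B,E^+,E^-) > |E^-| + tp(H_1,B,E^+).
\end{equation*}
From here, I would apply Proposition~\ref{prop:scoresofspecializations} to $H_3$ as a specialization of $H_1$, obtaining $tp(H_1,B,E^+) \geq tp(H_3,B,E^+)$, and combine this with the trivial bound $|E^-| \geq tn(H_3,B,E^-)$ to conclude
\begin{equation*}
S_{ACC}(H_2,B,E^+,E^-) > tp(H_3,B,E^+) + tn(H_3,B,E^-) = S_{ACC}(H_3,B,E^+,E^-).
\end{equation*}

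Since the argument is a direct dualization of Proposition~\ref{prop:sound1}, I do not expect any real obstacles; the only point requiring mild care is matching up the right ``slack'' quantity ($fp$ rather than $fn$) with the right example set ($E^-$ rather than $E^+$) when substituting Proposition~\ref{prop:scoresofspecializations} for Proposition~\ref{prop:scoresofgeneralizations}. Everything else is routine algebra with the accuracy score.
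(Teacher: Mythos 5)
Your proposal is correct and follows essentially the same route as the paper's proof: expand the scores, rewrite $fp(H_1,B,E^-)$ as $|E^-| - tn(H_1,B,E^-)$, rearrange to get $S_{ACC}(H_2,B,E^+,E^-) > |E^-| + tp(H_1,B,E^+)$, and then bound $S_{ACC}(H_3,B,E^+,E^-)$ using Proposition~\ref{prop:scoresofspecializations} together with $|E^-| \geq tn(H_3,B,E^-)$. In fact your derivation is slightly cleaner, since the paper's displayed second line contains a typo (it writes $|E^-| - tp(H_1,B,E^+)$ where $|E^-| - tn(H_1,B,E^-)$ is meant) that your version avoids.
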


\begin{proof}
The improvement in score from $H_1$ to $H_2$ can be quantified by:
\begin{align*}
    S_{ACC}(H_2,B,E^+,E^-) - S_{ACC}(H_1,B,E^+,E^-) &> fp(H_1,B,E^-)\\
    S_{ACC}(H_2,B,E^+,E^-) - [tp(H_1,B,E^+) + tn(H_1,B,E^-)] &> |E^-| - tp(H_1,B,E^+)\\
    S_{ACC}(H_2,B,E^+,E^-) &> tp(H_1,B,E^+) + |E^-| \\
    &\geq tp(H_3,B,E^+) + tn(H_3,B,E^-) \\
    &= S_{ACC}(H_3,B,E^+,E^-)
\end{align*}
The second to last line following from Proposition~\ref{prop:scoresofspecializations} and that $H_3$ is a specialization of $H_1$ in addition to the fact $|E^-| \geq tn(H_3,B,E^-)$.
\end{proof}

\begin{proposition}
Given problem input  $(B,D,C,E^+,E^-,S_{ACC})$ and hypotheses $H_1$ and $H_2$ in $\mathcal{H}_{D,C}$ where $H_2$ is a generalization of $H_1$, if $tp(H_1,B,E^+) = tp(H_2,B,E^+)$, then given any non-recursive hypothesis $H_2' = H_2 \cup C$ for some non-empty set of clauses $C$, there exists a generalization of $H_1$, say $H_1'$, such that \\ $S_{ACC}(H_1',B,E^+,E^-) \geq S_{ACC}(H_2',B,E^+,E^-)$.
\label{prop:sound3}
\end{proposition}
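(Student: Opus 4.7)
The natural candidate is $H_1' = H_1 \cup C$, and the plan is to show this works in three moves: (i) verify $H_1'$ is a generalization of $H_1$; (ii) observe that $H_2'$ is itself a generalization of $H_1'$ and invoke Proposition~\ref{prop:scoresofgeneralizations} to conclude $tn(H_1',B,E^-) \geq tn(H_2',B,E^-)$; and (iii) use the non-recursiveness of $H_2'$ together with the hypothesis $tp(H_1,B,E^+) = tp(H_2,B,E^+)$ to force $tp(H_1',B,E^+) = tp(H_2',B,E^+)$, so that the two inequalities combine to yield $S_{ACC}(H_1',B,E^+,E^-) \geq S_{ACC}(H_2',B,E^+,E^-)$.

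For (i), since $H_1 \subseteq H_1'$, every clause of $H_1$ sits inside $H_1'$ and trivially subsumes itself, so $H_1' \preceq H_1$ by Definition~\ref{def:theorysubsumption}. For (ii), each clause of $C \subseteq H_1'$ lies in $H_2'$ and subsumes itself, while each clause of $H_1 \subseteq H_1'$ is, by the hypothesis $H_2 \preceq H_1$, subsumed by some clause of $H_2 \subseteq H_2'$; hence $H_2' \preceq H_1'$, and Proposition~\ref{prop:scoresofgeneralizations} immediately gives the $tn$ inequality as well as the free direction $tp(H_1',B,E^+) \leq tp(H_2',B,E^+)$.

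For (iii), write $P(H) = \{e \in E^+ : H \cup B \models e\}$. Proposition~\ref{prop:subsumptionimplies} applied to $H_2 \preceq H_1$ gives $P(H_1) \subseteq P(H_2)$; since $|P(H_1)| = tp(H_1,B,E^+) = tp(H_2,B,E^+) = |P(H_2)|$ on the finite set $E^+$, we in fact have $P(H_1) = P(H_2)$. The remaining direction $P(H_2') \subseteq P(H_1')$ is where non-recursiveness enters: any $e \in P(H_2')$ admits a derivation from $H_2' \cup B$ whose final step uses a single clause $c \in H_2'$ with head unifying with $e$, and because no head predicate of $H_2'$ occurs in any body, the body literals of that instance of $c$ depend only on $B$, so $\{c\} \cup B \models e$. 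If $c \in C$ then $c \in H_1'$ and $e \in P(H_1')$ by the same derivation; if $c \in H_2$ then $e \in P(H_2) = P(H_1) \subseteq P(H_1')$. Combining both inclusions gives the equality of $tp$ and hence the desired score inequality.

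The main obstacle is exactly the single-clause decomposition used in (iii): one must justify that in a non-recursive definite program $H_2'$, every entailed atom of a head predicate is already entailed by $\{c\} \cup B$ for some individual $c \in H_2'$. This is a standard property of the least Herbrand model of a non-recursive definite program, but it is the only non-syntactic ingredient of the proof, and it is precisely what prevents clauses in $C$ from interacting with the difference between $H_1$ and $H_2$ through a shared recursive body literal; without the non-recursive hypothesis on $H_2'$, the transfer of the equal-$tp$ property from $(H_1,H_2)$ to $(H_1',H_2')$ would break down.
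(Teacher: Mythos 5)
Your proof is correct and follows essentially the same route as the paper's: take $H_1' = H_1 \cup C$, get the $tn$ inequality from $H_2' \preceq H_1'$ via Proposition~\ref{prop:scoresofgeneralizations}, and transfer the equal-$tp$ property to the primed pair using the non-recursiveness of $H_2'$. You are in fact more explicit than the paper about the one non-trivial ingredient — the single-clause decomposition of entailment in a non-recursive definite program — which the paper's proof uses only implicitly when it asserts that $H_1'$ gains the same $p$ additional positive examples from $C$ as $H_2'$ does.
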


\begin{proof}
Let $H_1' = H_1 \cup C$. Also let $n = tn(H_1,B,E^-) - tn(H_2,B,E^-)$, \\ i.e., $S_{ACC}(H_2,B,E^+,E^-) = S_{ACC}(H_1,B,E^+,E^-) - n$ (note, $n \geq 0$ by by Proposition~\ref{prop:scoresofgeneralizations}). Assume that $C$ entails $p$ additional positive examples from $H_2$ and $n'$ more negative examples. That is $p = tp(H_2',B,E^+) - tp(H_2,B,E^+)$ and \\ $n' = tn(H_2,B,E^-) - tn(H_2',B,E^-)$ (again noting that $n' \geq 0$ and $p \geq 0$ by Proposition~\ref{prop:scoresofgeneralizations}). Since $H_1'$ contains $C$, it must also entail $p$ additional positive examples from $H_1$ and at most $n + n'$ additional negative examples as it may entail the negative examples that $H_2$ entailed but $H_1$ did not. Thus, we have \\ $S_{ACC}(H_2',B,E^+,E^-) = S_{ACC}(H_1,B,E^+,E^-) + p - n - n'$ and \\ $S_{ACC}(H_1',B,E^+,E^-) \geq S_{ACC}(H_1,B,E^+,E^-) + p - n - n'$ \\ (note that $S_{ACC}(H_1',B,E^+,E^-)$ has an upper bound of $S_{ACC}(H_1,B,E^+,E^-) + p - n$). Thus, $S_{ACC}(H_1',B,E^+,E^-) \geq S_{ACC}(H_2',B,E^+,E^-)$.
\end{proof}

\begin{proposition}
Given problem input $(B,D,C,E^+,E^-,S_{ACC})$ and hypotheses $H_1$ and $H_2$ where $H_1 \subseteq H_2$, if $tp(H_1,B,E^+) = tp(H_2,B,E^+)$, given any non-recursive specialization of $H_2$, $H_2'$, there exists a specialization of $H_1$, say $H_1'$, such that $S_{ACC}(H_1',B,E^+,E^-) \geq S_{ACC}(H_2',B,E^+,E^-)$.
\label{prop:sound5}
\end{proposition}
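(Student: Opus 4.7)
The plan is to construct an explicit specialization $H_1'$ of $H_1$ from $H_2'$ clause-by-clause, in the spirit of the proof of Proposition~\ref{prop:sound3} but adapted to specializations rather than additions. Since $H_2'$ is a specialization of $H_2$, every $c' \in H_2'$ is subsumed by at least one clause of $H_2$. We partition $H_2'$ into two groups: the clauses of $H_2'$ that are subsumed by some clause in $H_1$, which we copy verbatim into $H_1'$; and the clauses subsumed only by clauses in $H_2 \setminus H_1$, which will be replaced by carefully crafted refinements of clauses in $H_1$.

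For the second group, we exploit the assumption $tp(H_1,B,E^+) = tp(H_2,B,E^+)$. Combined with the non-recursive setting, this forces every positive example entailed by a clause in $H_2 \setminus H_1$ to also be entailed by some clause in $H_1$. Hence for each such $c'$ and each positive example $e$ that $c'$ entails, we select a witness clause $c^*_e \in H_1$ that also entails $e$, and add to $H_1'$ the ``combined'' clause $c^{**}_e$ obtained by unifying the heads of $c^*_e$ and $c'$ and concatenating their body literals with the two sets of non-head variables kept disjoint and fresh. Because $c^*_e$ is a syntactic subset of $c^{**}_e$ after head-unification, $c^*_e$ subsumes $c^{**}_e$, so $H_1'$ is indeed a specialization of $H_1$.

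It then remains to verify $S_{ACC}(H_1',B,E^+,E^-) \geq S_{ACC}(H_2',B,E^+,E^-)$. For the $tp$ side, the fresh-variable discipline makes the existential witnesses of the two bodies independent, so an example is entailed by $c^{**}_e$ iff it is entailed by both $c^*_e$ and $c'$; in particular $e$ remains covered. Together with the copied clauses, this yields $tp(H_1',B,E^+) \geq tp(H_2',B,E^+)$. The same independence argument on negatives shows that every negative entailed by $c^{**}_e$ is also entailed by $c'$, and the copied clauses trivially have their negatives inside those of $H_2'$, so $tn(H_1',B,E^-) \geq tn(H_2',B,E^-)$. Adding the two inequalities yields the conclusion.

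The hard part will be managing the variable scoping of the combined clause: naively identifying all variables of $c^*_e$ with those of $c'$ would force both bodies to share a single witness and could drop $e$ from the coverage of $c^{**}_e$ entirely. Keeping non-head variables fresh between the two bodies is what simultaneously guarantees $e \in Pos(c^{**}_e)$ and $Neg(c^{**}_e) \subseteq Neg(c')$, and the non-recursiveness of $H_2'$ is essential throughout since it is what permits the clause-by-clause reasoning about entailment used when aggregating over $H_1'$ and $H_2'$.
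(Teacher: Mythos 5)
Your proof is correct, but it takes a genuinely different route from the paper's. The paper writes $H_2 = H_1 \cup C$ and obtains its witness by \emph{deleting} from $H_2'$ every clause descended from $C$, keeping only the (possibly specialized) clauses descended from $H_1$; it then claims $tp(H_1',B,E^+) = tp(H_2',B,E^+)$ on the grounds that $C$ was redundant over $H_1$, and gets $tn(H_1',B,E^-) \geq tn(H_2',B,E^-)$ essentially because $H_1' \subseteq H_2'$. You instead keep the clauses of $H_2'$ subsumed by $H_1$ and \emph{replace} each remaining clause $c'$ by product clauses $c^{**}_e$ conjoining $c'$ with an $H_1$-witness for each positive example $e$ it covers, the two bodies kept variable-disjoint so that the combined clause covers exactly the intersection of their coverages. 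The trade-off is real. The paper's witness is a subset of $H_2'$, so it is no larger than $H_2'$ and automatically stays inside the bounded hypothesis space $\mathcal{H}_{D,C}$, which is what the pruning application actually needs; however, its $tp$ equality is not airtight, since redundancy of $C$ relative to $H_1$ need not survive specialization --- if $H_2'$ specializes the $H_1$-part heavily while leaving a $C$-clause nearly intact, the deleted clause can cover positives that the surviving clauses of $H_1'$ no longer do. Your product construction closes exactly that gap (positive coverage is preserved witness-by-witness, and negatives can only shrink), at the price of a witness with up to one clause per clause--example pair whose bodies are concatenations of two bodies; such an $H_1'$ may violate the literal, variable, and clause bounds defining $\mathcal{H}_{D,C}$, which is harmless for the existential statement as written but matters if the proposition is read as licensing pruning within the hypothesis space. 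Both arguments rely equally on the implicit assumption that, for non-recursive programs, each example is entailed by a single clause together with the background knowledge.
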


\begin{proof}
We can write $H_2 = H_1 \cup C$ for some set of clauses $C$. Since $tp(H_1,B,E^+) = tp(H_2,B,E^+)$, the clauses in $C$ entail no additional positive examples from $H_1$, making them redundant to the clauses of $H_1$. That is, every positive example entailed by the clauses in $C$ is already entailed by the clauses in $H_1$. We can construct $H_1'$ as such: if a clause $c$ is in both $H_1$ and $H_2'$, it is also in $H_1'$. If a clause $c$ is in $H_1$ but not in $H_2'$ and instead has been replaced by a specified version of the clause, call it $c'$, then $c'$ is also in $H_1'$. Any clauses in $C$ or specified versions of these clauses are not in $H_1'$. In this way, $H_1'$ will make the same specifications as $H_2$ did to create $H_2'$ on the clauses in $H_1$. Since $H_1'$ makes the same specifications to clauses in $H_1$ as $H_2'$ does, and since $C$ entails no extra positive examples nor will any of its specific clauses in $H_2'$, then $tp(H_1',B,E^+) = tp(H_2',B,E^+)$. Additionally, by Proposition~\ref{prop:scoresofgeneralizations}, $tn(H_1,B,E^-) \geq tn(H_2,B,E^-)$. Since $H_1'$ makes the same specifications as $H_2'$, then $tn(H_1',B,E^-) \geq tn(H_2',B,E^-)$ noting that at best any specifications $H_2'$ makes to clauses in $C$ can only mean $C$ entails no negative examples in $H_2'$. Thus, $S_{ACC}(H_1',B,E^+,E^-) \geq S_{ACC}(H_2',B,E^+,E^-)$.
\end{proof}

\begin{proposition}
Given problem input $(B,D,C,E^+,E^-,S_{ACC})$ and hypotheses $H_1$ and $H_2$ in $\mathcal{H}_{D,C}$ where $H_2$ is a specialization of $H_1$, if $tn(H_1,B,E^-) = tn(H_2,B,E^-)$, then given any non-recursive hypothesis $H_2' = H_2 \cup C$ for some set of clauses $C$ there exists a generalization of $H_1$, say $H_1'$, such that $S_{ACC}(H_1',B,E^+,E^-) \geq S_{ACC}(H_2',B,E^+,E^-)$.
\label{prop:sound4}
\end{proposition}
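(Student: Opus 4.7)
The plan is to take $H_1' = H_1 \cup C$ as the witness generalization of $H_1$, mirroring the construction used in the proof of Proposition~\ref{prop:sound3}. First I would verify that $H_1'$ is indeed a generalization of $H_1$: since $H_1 \subseteq H_1'$, every clause of $H_1$ trivially subsumes itself in $H_1'$, giving $H_1' \preceq H_1$ by Definition~\ref{def:theorysubsumption}, so $H_1'$ is a generalization of $H_1$ by Definition~\ref{def:generalization}.

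The next step is to exploit the hypothesis $tn(H_1,B,E^-) = tn(H_2,B,E^-)$ together with the specialization relationship. Since $H_2$ is a specialization of $H_1$, we have $H_1 \preceq H_2$, and Proposition~\ref{prop:subsumptionimplies} then gives $H_1 \models H_2$. In particular, every negative example entailed by $H_2 \cup B$ is also entailed by $H_1 \cup B$. Equal cardinality upgrades this inclusion to set equality, so $H_1$ and $H_2$ entail exactly the same set of negative examples from $E^-$.

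The central step is to use the non-recursive assumption on $H_2'$ to decompose entailment additively: for any example $e$, $(H \cup C) \cup B \models e$ iff $H \cup B \models e$ or $C \cup B \models e$. Applying this to both $H_1' = H_1 \cup C$ and $H_2' = H_2 \cup C$, the set of negatives entailed by each $H_i'$ is the union of those entailed by $H_i$ and those entailed by $C$. Since $H_1$ and $H_2$ entail identical negatives, so do $H_1'$ and $H_2'$, yielding $tn(H_1',B,E^-) = tn(H_2',B,E^-)$. For positives, $H_1 \models H_2$ implies that the positives entailed by $H_2 \cup B$ are contained in those entailed by $H_1 \cup B$; adding the common $C$ on both sides preserves containment, giving $tp(H_1',B,E^+) \geq tp(H_2',B,E^+)$. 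Adding the two inequalities yields $S_{ACC}(H_1',B,E^+,E^-) \geq S_{ACC}(H_2',B,E^+,E^-)$.

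The hard part is the rigorous justification of the additive decomposition of entailment, which is precisely where the non-recursive assumption matters: without it, clauses of $C$ could interact with those of $H_1$ or $H_2$ to yield derivations not obtainable from either part alone, and the set-level equalities used above would no longer hold. This is the same subtlety implicit in the proofs of Propositions~\ref{prop:sound3} and~\ref{prop:sound5}, and I would invoke it in the same way here.
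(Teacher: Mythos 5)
Your proof is correct and takes essentially the same route as the paper's: both choose $H_1' = H_1 \cup C$ as the witness generalization and lift the equality $tn(H_1,B,E^-) = tn(H_2,B,E^-)$ and the inequality $tp(H_1,B,E^+) \geq tp(H_2,B,E^+)$ (from Proposition~\ref{prop:scoresofspecializations}) to the pair $(H_1', H_2')$. You additionally spell out two steps the paper asserts without argument --- that the cardinality equality of the $tn$ scores must first be upgraded to equality of the underlying sets of entailed negatives via the subsumption relation, and that transferring the scores to $H_1'$ and $H_2'$ rests on the additive decomposition of entailment afforded by non-recursiveness --- which is a genuine gain in rigor over the paper's own one-line justification.
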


\begin{proof}
Let $H_1' = H_1 \cup C$. Then, since $tn(H_1,B,E^-) = tn(H_2,B,E^-)$, we know that $tn(H_1',B,E^-) = tn(H_2',B,E^-)$. Since $H_2$ is a specialization of $H_1$, by Proposition~\ref{prop:scoresofspecializations}, $tp(H_1,B,E^+) \geq tp(H_2,B,E^+)$ which means that $tp(H_1',B,E^+) \geq tp(H_2',B,E^+)$. Thus, $S_{ACC}(H_1',B,E^+,E^-) \geq S_{ACC}(H_2',B,E^+,E^-)$.
\end{proof}

\bigskip \noindent Propositions~\ref{prop:sound3}-\ref{prop:sound4} notably only demonstrate $S_{ACC}$ suboptimality for \emph{non-recursive} generalizations and specializations. Though it may appear that additional clauses or literals are redundant and do not improve either the $tp$ or $tn$ scores, they may set up base cases for an $S_{ACC}$-optimal recursive program. We demonstrate this through an example:

\begin{example}[Non-recursive Case Motivation]
Consider trying find a program for the target predicate \texttt{alleven/1} which when given a list of integers returns \texttt{True} if all of them are even and \texttt{False} otherwise. For instance, \texttt{alleven([2,4,10,8]).} evaluates to \texttt{True} and \texttt{alleven([1,2,3]).} evaluates to \texttt{False}. 

\bigskip \noindent Assume that all examples are noiseless and suppose that the BK $B$ contains the following:

\begin{center}
    \begin{tabular}{l}
        \texttt{head([H|$\_$],H).}, i.e., returns \texttt{True} only if \texttt{H} is the first element of the given list\\
        \texttt{tail([$\_$|T],T).}, i.e., returns \texttt{True} only if \texttt{T} is the given list with the first element removed\\
        \texttt{empty([]).}, i.e. returns \texttt{True} only if the given list is empty \\
        \texttt{zero(0).}, i.e., returns \texttt{True} only if the given integer is 0\\
        \texttt{even(A) :- 0 is A mod 2}, i.e. returns \texttt{True} only if the given integer is even
    \end{tabular}
\end{center}

\noindent Suppose we have previously seen hypothesis \texttt{h$_1$ = \{alleven(A) :- head(A,B), even(B).\}} which entailed all positive examples and some negative examples. Now, consider a second hypothesis \texttt{h$_2$}:

\begin{center}
    \begin{tabular}{l}
        \texttt{h$_2$ =} $\left\{\begin{array}{l}
        \texttt{alleven(A) :- head(A,B), even(B).}\\
        \texttt{alleven(A) :- empty(A).}
        \end{array}\right\}$
    \end{tabular}
\end{center}

\noindent which likewise entails all positive examples and the same number of negative examples. By Proposition~\ref{prop:sound5}, since \texttt{h$_1$} $\subseteq$ \texttt{h$_2$} and $tp($\texttt{h$_1$},$B,E^+) = tp($\texttt{h$_2$},$B,E^+)$, all non-recursive specializations of \texttt{h$_2$} are not $S_{ACC}$-optimal. We must specify \emph{non-recursive} as the recursive specialization  \texttt{h$_3$} where:

\begin{center}
    \begin{tabular}{l}
        \texttt{h$_3$ =} $\left\{\begin{array}{l}
        \texttt{alleven(A) :- head(A,B), even(B), tail(A,C), alleven(C).}\\
        \texttt{alleven(A) :- empty(A).}
        \end{array}\right\}$ \\
    \end{tabular}
\end{center}

\noindent is a solution and would entail all positive examples and no negative examples, thus being $S_{ACC}$-optimal. Similar results hold for Propositions~\ref{prop:sound3} and \ref{prop:sound4} where pruning generalizations may remove $S_{ACC}$-optimal recursive programs.
\end{example}

\bigskip \noindent Additionally, we can identify some sound constraints that do not rely on comparing hypotheses:

\begin{proposition}
Given problem input $(B,D,C,E^+,E^-,S_{ACC})$ and hypothesis $H \in \mathcal{H}_{D,C}$ where $tp(H,B,E^+) = |E^+|$, if hypothesis $H' \in \mathcal{H}_{D,C}$ is a generalization of $H$, then $S_{ACC}(H,B,E^+,E^-) \geq S_{ACC}(H',B,E^+,E^-)$.
\label{prop:sound6}
\end{proposition}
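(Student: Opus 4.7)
The plan is to invoke Proposition~\ref{prop:scoresofgeneralizations} and exploit the saturation of the true-positive count. The hypothesis $H$ entails every positive example by assumption, so $tp(H,B,E^+) = |E^+|$ is the maximum possible value of the $tp$ function. Any further generalization therefore cannot strictly increase $tp$, and the direction of inequality given by Proposition~\ref{prop:scoresofgeneralizations} pins $tp(H',B,E^+)$ exactly equal to $tp(H,B,E^+)$.

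First I would observe the trivial upper bound $tp(H',B,E^+) \leq |E^+|$, and combine it with the lower bound $tp(H',B,E^+) \geq tp(H,B,E^+) = |E^+|$ obtained from Proposition~\ref{prop:scoresofgeneralizations} (since $H'$ is a generalization of $H$). This forces equality: $tp(H',B,E^+) = tp(H,B,E^+)$. Next I would apply the second clause of Proposition~\ref{prop:scoresofgeneralizations} to conclude $tn(H',B,E^-) \leq tn(H,B,E^-)$, since generalizing can only introduce additional entailed negative examples.

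Finally I would add the two comparisons to obtain
\[
S_{ACC}(H',B,E^+,E^-) = tp(H',B,E^+) + tn(H',B,E^-) \leq tp(H,B,E^+) + tn(H,B,E^-) = S_{ACC}(H,B,E^+,E^-),
\]
which is exactly the desired inequality. There is no real obstacle here; the entire argument is a one-line consequence of Proposition~\ref{prop:scoresofgeneralizations} together with the observation that $tp$ cannot exceed $|E^+|$, so the only subtlety is explicitly noting that $H$'s completeness saturates the positive-example side of the score and leaves only the negative-example side free to move, and that it can move only downward under generalization.
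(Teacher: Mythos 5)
Your proposal is correct and follows exactly the paper's own argument: both use Proposition~\ref{prop:scoresofgeneralizations} to force $tp(H',B,E^+) = |E^+| = tp(H,B,E^+)$ and $tn(H',B,E^-) \leq tn(H,B,E^-)$, then sum the two. No differences worth noting.
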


\begin{proof}
Since $H'$ is a generalization of $H$, by Proposition~\ref{prop:scoresofgeneralizations},\\  $tp(H,B,E^+) \leq tp(H',B,E^+) \Rightarrow  tp(H',B,E^+) = |E^+|$ and \\ $tn(H,B,E^-) \geq tn(H',B,E^-)$. Thus, $S_{ACC}(H,B,E^+,E^-) \geq S_{ACC}(H',B,E^+,E^-)$.
\end{proof}

\begin{proposition}
Given problem input $(B,D,C,E^+,E^-,S_{ACC})$ and hypothesis $H \in \mathcal{H}_{D,C}$ where $tn(H,B,E^-) = |E^-|$, if hypothesis $H' \in \mathcal{H}_{D,C}$ is a specialization of $H$, then $S_{ACC}(H,B,E^+,E^-) \geq S_{ACC}(H',B,E^+,E^-)$.
\label{prop:sound7}
\end{proposition}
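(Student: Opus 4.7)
The plan is to mirror the strategy used for Proposition~\ref{prop:sound6}, with the roles of true positives and true negatives swapped, leveraging Proposition~\ref{prop:scoresofspecializations} in place of Proposition~\ref{prop:scoresofgeneralizations}. The intuition is symmetric: if $H$ already attains the maximum possible true-negative count (it correctly rejects every negative example), then specializing $H$ cannot increase $tn$ any further, while by the specialization proposition it can only decrease $tp$. Hence the score can only go down.

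Concretely, I would first invoke Proposition~\ref{prop:scoresofspecializations} on $H$ and $H'$ to obtain the two inequalities $tp(H',B,E^+) \leq tp(H,B,E^+)$ and $tn(H',B,E^-) \geq tn(H,B,E^-)$. The second inequality, combined with the trivial upper bound $tn(H',B,E^-) \leq |E^-|$ that holds by Definition~\ref{def:truenegativescore}, together with the hypothesis $tn(H,B,E^-) = |E^-|$, forces the chain $|E^-| = tn(H,B,E^-) \leq tn(H',B,E^-) \leq |E^-|$, so that $tn(H',B,E^-) = tn(H,B,E^-)$.

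Having pinned down the true-negative term, I would then unfold the definition of $S_{ACC}$ (Definition~\ref{def:accuracyscore}) and write
\[
S_{ACC}(H,B,E^+,E^-) = tp(H,B,E^+) + tn(H,B,E^-) \geq tp(H',B,E^+) + tn(H',B,E^-) = S_{ACC}(H',B,E^+,E^-),
\]
where the middle inequality combines the $tp$ inequality from step one with the equality on $tn$ just established. This closes the argument.

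I do not expect any real obstacle here: the only subtlety is to remember to use the cap $tn(H',B,E^-) \leq |E^-|$ explicitly, since without it the raw inequality $tn(H',B,E^-) \geq tn(H,B,E^-)$ points in the wrong direction for the score comparison. Once that cap is observed, the result is immediate from Propositions~\ref{prop:subsumptionimplies} and~\ref{prop:scoresofspecializations}, and the proof should be at most a few lines, structurally identical to Proposition~\ref{prop:sound6}.
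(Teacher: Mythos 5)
Your proposal is correct and follows essentially the same route as the paper: both invoke Proposition~\ref{prop:scoresofspecializations} to get $tp(H',B,E^+) \leq tp(H,B,E^+)$ and $tn(H',B,E^-) \geq tn(H,B,E^-)$, then use the hypothesis $tn(H,B,E^-) = |E^-|$ together with the cap $tn(H',B,E^-) \leq |E^-|$ to pin $tn(H',B,E^-) = |E^-|$ and conclude. Your write-up is if anything slightly more careful, since you make the upper bound on $tn(H',B,E^-)$ explicit where the paper leaves it implicit.
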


\begin{proof}
Since $H'$ is a specialization of $H$, by Proposition~\ref{prop:scoresofspecializations},\\ $tn(H,B,E^-) \leq tn(H',B,E^-) \Rightarrow tn(H',B,E^-) = |E^-|$ and \\ $tp(H,B,E^+) \geq tp(H',B,E^+)$. Thus, $S_{ACC}(H,B,E^+,E^-) \geq S_{ACC}(H',B,E^+,E^-)$.
\end{proof}

\subsection{Hypothesis Constraints Applications}
These propositions all apply in any ILP setting, however, in our relaxed LFF setting, they determine sets of programs which should be pruned should certain conditions hold:

\begin{itemize}
    \item Proposition~\ref{prop:sound1} implies that if a hypothesis $H$'s accuracy score is at least $fn(H',B,E^+)$ greater than that of some hypothesis $H'$, we may prune all generalizations of $H'$ as they cannot be $S_{ACC}$-optimal.
    \item Proposition~\ref{prop:sound2} implies that if a hypothesis $H$'s accuracy score is at least $fp(H',B,E^-)$ greater than that of some hypothesis $H'$, we may prune all specializations of $H'$ as they cannot be $S_{ACC}$-optimal.
    \item Proposition~\ref{prop:sound3} implies that if a hypothesis $H$ is a generalization of a hypothesis $H'$ and both have equal $tp$ values, we may prune all non-recursive superset of $H$ as they cannot be $S_{ACC}$-optimal. 
    \item Proposition~\ref{prop:sound5} implies that if a hypothesis $H$ is a superset of a hypothesis $H'$ and both have equal $tp$ values, we may prune all non-recursice specializations of $H$ as they cannot be $S_{ACC}$-optimal.
    \item Proposition~\ref{prop:sound4} implies that if a hypothesis $H$ is a specialization of a hypothesis $H'$ and both have equal $tn$ values, we may prune all non-recursive generalizations of $H$ as they cannot be $S_{ACC}$-optimal.
    \item Proposition~\ref{prop:sound6} implies that if a hypothesis $H$ entails all positive examples, we may prune all larger generalizations of $H$ as they cannot be $S_{ACC}$-optimal.
    \item Proposition~\ref{prop:sound7} implies that if a hypothesis $H$ entails no negative examples, we may prune all larger specializations of $H$ as they cannot be $S_{ACC}$-optimal. 
\end{itemize}

\section{Relaxed LFF Hypothesis Constraints with Hypothesis Size}

Since an LFF solution must entail all positive examples and no negative examples, it is likely to significantly overfit the data in the presence of noise. In ILP, overfitting often is seen through overly large programs with extra clauses  specifically covering noisy examples, as we can illustrate:

\begin{example}[Large Overfitting Hypotheses]
\label{exp:overfitting}
Suppose our sets of examples for a east-west trains problem are as follows:

\begin{center}
    \begin{tabular}{l}
        $E^+$ = \{\texttt{eastbound(train$_1$).}, \texttt{eastbound(train$_2$).}, \texttt{eastbound(train$_3$)}\}\\
        $E^-$ = \{\texttt{eastbound(train$_4$.})\}\\
    \end{tabular}
\end{center}

with background knowledge:

\begin{center}
    \begin{tabular}{l}
        \texttt{has$\_$car(train$_1$, car$_1$)., two$\_$wheels(car$_1$)., long(car$_1$).,}\\
        \texttt{has$\_$car(train$_2$, car$_2$)., two$\_$wheels(car$_2$)., roof$\_$closed(car$_2$).,}\\
        \texttt{has$\_$car(train$_3$, car$_3$)., three$\_$wheels(car$_3$)., short(car$_3$).,}\\
        \texttt{has$\_$car(train$_4$, car$_4$)., three$\_$wheels(car$_4$).}\\
    \end{tabular}
\end{center}

\noindent Also suppose that the \texttt{eastbound(train$_3$).} fact is noisy and should truly be a negative example. If it had been correctly classified, we see that an LFF optimal solution to this problem would be:

\[\texttt{h$_1$ =} \left\{\begin{array}{c}
    \texttt{eastbound(A) :- has$\_$car(A,B), two$\_$wheels(B).}
\end{array}\right\}\]

\noindent However, because of the noisy fact, we are required to add an additional clause to entail this fact. So, an LFF optimal solution to this noisy problem could be:

\[\texttt{h$_2$ =} \left\{\begin{array}{l}
    \texttt{eastbound(A) :- has$\_$car(A,B), two$\_$wheels(B).}\\
    \texttt{eastbound(A) :- has$\_$car(A,B), three$\_$wheels(B), short(B).}
\end{array}\right\}\]

\noindent Note that \texttt{h$_2$} has nearly over double the literals as \texttt{h$_1$}. Like in many machine learning methods, the size or complexity of the model can be correlated with overfitting \cite{mitchell1997machine}. In the worst case for an ILP problem, we could generate a program in which there is exactly one clause entailing a single positive fact, meaning that the entire program would contain $|E^+|$ clauses. For instance, a naive program for the example above would be:

\[\texttt{h$_3$ =} \left\{\begin{array}{l}
    \texttt{eastbound(A) :- has$\_$car(A,B), two$\_$wheels(B), long(B).}\\
    \texttt{eastbound(A) :- has$\_$car(A,B), two$\_$wheels(B), roof$\_$closed(B).}\\
    \texttt{eastbound(A) :- has$\_$car(A,B), three$\_$wheels(B), short(B).}
\end{array}\right\}\]

\medskip \noindent or equivalently, we generate a model which simply remembers all positive examples, storing each in its entirety. But in \texttt{h$_3$}, we can clearly see how the first two clauses could simply be combined into \texttt{eastbound(A) :- has$\_$car(A,B), two$\_$wheels(B).} which is a generalization of both. Additionally, a naive hypothesis such as this may generate clauses which subsume each other, making the subsumed clauses redundant. Most importantly, such a hypothesis would not generalize at all to data outside of the training set: unless any inputs given to the program were in the training set, the program will always return false. Though constructing such hypotheses is trivial, it will overfit any noisy data, is not useful, and is impractically large given large $E^+$. 
\end{example}

\paragraph{Application of Minimum Description Length Principle} This provides clear motivation why optimality in ILP is typically tied to the size of programs. With the presence of noise, the relaxed LFF setting has to be particularly cognizant of overfitting and avoid exceptionally large programs or programs where single clauses entail one single positive example. To this end, we look to apply the \textit{minimum description length} (MDL) principle, a common method used for machine learning model selection \cite{quinlan1989inferring, alves2004improving, Gr_nwald_2019}. At a high level, the MDL principle states that the optimal hypothesis or theory is one where the sum of the theory length and the length of the training data when encoded with that theory is a minimum. If we apply this idea to Example~\ref{exp:overfitting} above, hypothesis \texttt{h$_2$} fits all four examples perfectly, but has a size of seven literals. Hypothesis \texttt{h$_1$} however fits three of the examples with a size of only three literals. Taking the sum of correctly fit examples and programs length yields totals of 11 for \texttt{h$_1$} and 6 for \texttt{h$_1$}. Under the MDL setting, we would claim \texttt{h$_1$} is more optimal than \texttt{h$_2$} for this particular encoding. 
An interested reader should consult \cite{rissanen1978modeling} and \cite{Gr_nwald_2019} for more details on MDL and its applications to machine learning. 

\bigskip \noindent In order to apply the MDL principle, we define an alternative hypothesis scoring method which takes into account the hypothesis size, recalling from Definition~\ref{def:hypothesissize} $size(H)$ equals the number of literals in hypothesis $H$.

\begin{definition}[MDL Score]
Given an input tuple $(B,D,C,E^+,E^-)$ and a hypothesis $H \in \mathcal{H}_{D,C}$, the function $S_{MDL} = tp(H,B,E^+) + tn(H,B,E^-) - size(H)$.
\end{definition}

\bigskip \noindent Note that this definition is similar to what the Aleph refers to as its \textit{compression} evaluation function \cite{srinivasan2001aleph}. With this definition, we can define several additional circumstances when hypotheses are known to be suboptimal under $S_{MDL}$ scoring. First, we consider comparing two hypotheses with one another:

\begin{proposition}
Given problem input $(B,D,C,E^+,E^-,S_{MDL})$ and hypotheses $H_1$, $H_2$, and $H_3$ in $\mathcal{H}_{D,C}$ where $H_3$ is a generalization of $H_1$, if \\ $S_{MDL}(H_2,B,E^+,E^-) - S_{MDL}(H_1,B,E^+,E^-) > fn(H_1,B,E^+) - (size(H_3) - size(H_2))$, then $S_{MDL}(H_2,B,E^+,E^-) > S_{MDL}(H_3,B,E^+,E^-)$.
\label{prop:mdl1}
\end{proposition}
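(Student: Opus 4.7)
The plan is to mirror the proof of Proposition~\ref{prop:sound1} in the $S_{MDL}$ setting, chasing an upper bound on $S_{MDL}(H_3,B,E^+,E^-)$ and a matching lower bound on $S_{MDL}(H_2,B,E^+,E^-)$. First I would expand the premise using the definitions $S_{MDL}(H) = tp(H,B,E^+) + tn(H,B,E^-) - size(H)$ and $fn(H_1,B,E^+) = |E^+| - tp(H_1,B,E^+)$. The $tp(H_1,B,E^+)$ terms cancel between the two sides, so isolating $S_{MDL}(H_2,B,E^+,E^-)$ leaves a bound essentially of the form $|E^+| + tn(H_1,B,E^-) - size(H_3)$ plus a residual involving $size(H_1)$ and $size(H_2)$ inherited from the $-(size(H_3) - size(H_2))$ correction.

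For the upper bound on $H_3$, I would appeal to Proposition~\ref{prop:scoresofgeneralizations}: since $H_3$ is a generalization of $H_1$, $tn(H_3,B,E^-) \leq tn(H_1,B,E^-)$, and trivially $tp(H_3,B,E^+) \leq |E^+|$, yielding $S_{MDL}(H_3,B,E^+,E^-) \leq |E^+| + tn(H_1,B,E^-) - size(H_3)$. Chaining this upper bound with the lower bound on $S_{MDL}(H_2)$ should deliver the strict inequality. An alternative phrasing I would work out in parallel, which cleanly decouples the size from the accuracy accounting, starts from the identity
\[
S_{MDL}(H_2) - S_{MDL}(H_3) = \bigl(S_{ACC}(H_2) - S_{ACC}(H_3)\bigr) + \bigl(size(H_3) - size(H_2)\bigr)
\]
and rewrites the premise as a lower bound on $S_{ACC}(H_2) - S_{ACC}(H_1)$, at which point Proposition~\ref{prop:sound1}'s machinery applies almost verbatim. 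This second route also makes transparent \emph{why} the term $-(size(H_3) - size(H_2))$ shows up in the hypothesis: it is exactly the $S_{MDL}$ penalty that $H_3$ incurs over $H_2$ from the $size$ component.

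The main obstacle I anticipate is the size bookkeeping. A careless rearrangement of the premise appears to leave an unwanted $size(H_2) - size(H_1)$ slack, and I would want to verify whether the stated correction absorbs this slack unconditionally, or only under some implicit relationship between $size(H_1)$ and $size(H_2)$ (for instance, in the typical use-case where $H_2$ has already been observed to beat $H_1$ and so has at least as many literals). For this reason I would carry out the algebra twice, once directly on $S_{MDL}$ and once through the $S_{ACC}+size$ identity above, so that any hidden hypothesis is exposed. Outside of this bookkeeping step, the argument is a routine adaptation of Proposition~\ref{prop:sound1}, with the two monotonicity facts from Proposition~\ref{prop:scoresofgeneralizations} doing all the nontrivial work.
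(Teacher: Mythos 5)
Your plan follows the paper's own proof step for step: expand the premise, cancel the $tp(H_1,B,E^+)$ terms, isolate $S_{MDL}(H_2,B,E^+,E^-)$, and bound $S_{MDL}(H_3,B,E^+,E^-)$ from above by $|E^+| + tn(H_1,B,E^-) - size(H_3)$ via Proposition~\ref{prop:scoresofgeneralizations}. But the obstacle you flag in your final paragraph is genuine, and it is the one point on which the argument cannot be completed as stated. Running the algebra on the premise exactly as printed gives
\[
S_{MDL}(H_2,B,E^+,E^-) > |E^+| + tn(H_1,B,E^-) - size(H_3) + \bigl(size(H_2) - size(H_1)\bigr),
\]
and the residual $size(H_2) - size(H_1)$ can be negative, in which case the chain down to $S_{MDL}(H_3,B,E^+,E^-)$ breaks. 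Indeed the proposition as printed is false: take $|E^-| = 0$, $|E^+| = 10$, with $tp$ equal to $10$ for all three hypotheses, $size(H_1) = 100$, $size(H_3) = 5$ (a small generalization of a large $H_1$, which theory subsumption permits), and $size(H_2) = 8$. Then $fn(H_1,B,E^+) = 0$, the premise reads $92 > 3$ and holds, yet $S_{MDL}(H_2,B,E^+,E^-) = 2 < 5 = S_{MDL}(H_3,B,E^+,E^-)$.

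What the paper actually proves is the variant with $size(H_1)$ in place of $size(H_2)$: the first displayed line of its proof silently rewrites the hypothesis as $\cdots > fn(H_1,B,E^+) - (size(H_3) - size(H_1))$, after which the $size(H_1)$ terms cancel and the argument you sketch closes with no residual. That corrected premise is equivalent to $size(H_3) > |E^+| + tn(H_1,B,E^-) - S_{MDL}(H_2,B,E^+,E^-)$, which is exactly the threshold the paper then derives in its ``exact size of $H_3$'' computation and uses in the corresponding pruning rule; the companion Proposition~\ref{prop:mdl2} has the same statement/proof mismatch. So your instinct to run the bookkeeping twice is the right one: the first pass exposes the unwanted slack, and the second pass, on the $size(H_1)$ version (or equivalently under the extra assumption $size(H_2) \geq size(H_1)$), yields a complete proof. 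Your alternative route through the identity $S_{MDL} = S_{ACC} - size$ together with Proposition~\ref{prop:sound1} is sound as well and reduces to the same accounting.
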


\begin{proof}
The improvement in the MDL score from $H_1$ to $H_2$ can be quantified by:
\scriptsize
\begin{align*}
    S_{MDL}(H_2,B,E^+,E^-) - S_{MDL}(H_1,B,E^+,E^-) &> fn(H_1,B,E^+) - (size(H_3) - size(H_1))\\
    S_{MDL}(H_2,B,E^+,E^-) - [tp(H_1,B,E^+) + tn(H_1,B,E^-) - size(H_1)] &> |E^+| - tp(H_1,B,E^+) - (size(H_3) - size(H_1))\\
    S_{MDL}(H_2,B,E^+,E^-) &> |E^+| + tn(H_1,B,E^-) - size(H_3)\\
    &\geq tp(H_3,B,E^+) + tn(H_3,B,E^-) - size(H_3)\\
    &= S_{MDL}(H_3,B,E^+,E^-)
\end{align*}
\normalsize
The second to last line following from Proposition~\ref{prop:scoresofgeneralizations} and that $H_3$ is a generalization of $H_1$ in addition to the fact $|E^+| \geq tp(H_3,B,E^+)$
\end{proof}

\bigskip \noindent Given $S_{MDL}(H_2,B,E^+,E^-)$ and $S_{MDL}(H_1,B,E^+,E^-)$, we can quantify the exact size of $H_3$ when this holds:

\begin{align*}
    S_{MDL}(H_2,B,E^+,E^-) &> |E^+| + tp(H_1,B,E^+) - size(H_3)\\
    S_{MDL}(H_2,B,E^+,E^-) - |E^+| - tn(H_1,B,E^-) &> -size(H_3)\\
    |E^+| + tn(H_1,B,E^-) - S_{MDL}(H_2,B,E^+,E^-) &< size(H_3)
\end{align*}

\begin{proposition} 
Given problem input $(B,D,C,E^+,E^-,S_{MDL})$ and hypotheses $H_1$, $H_2$, and $H_3$ in $\mathcal{H}_{D,C}$ where $H_3$ is a specialization of $H_1$, if \\ $S_{MDL}(H_2,B,E^+,E^-) - S_{MDL}(H_1,B,E^+,E^-) > fp(H_1,B,E^-) - (size(H_3) - size(H_2))$, then $S_{MDL}(H_2,B,E^+,E^-) > S_{MDL}(H_3,B,E^+,E^-)$.
\label{prop:mdl2}
\end{proposition}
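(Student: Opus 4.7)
The plan is to prove this in direct parallel to Proposition~\ref{prop:mdl1}, with the replacements generalization $\to$ specialization and $fn \to fp$ throughout. I would start from the stated premise, keeping the correction term as the written $(size(H_3) - size(H_2))$, and first unfold $S_{MDL}(H_1,B,E^+,E^-) = tp(H_1,B,E^+) + tn(H_1,B,E^-) - size(H_1)$ on the left, then rewrite $fp(H_1,B,E^-) = |E^-| - tn(H_1,B,E^-)$ on the right. The two $tn(H_1,B,E^-)$ contributions cancel, and collecting the remaining pieces rearranges the premise into a lower bound on $S_{MDL}(H_2,B,E^+,E^-)$ expressed in terms of $tp(H_1,B,E^+)$, $|E^-|$, $size(H_3)$, and the residual $size(H_2) - size(H_1)$ introduced by the stated correction.

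Next I would invoke Proposition~\ref{prop:scoresofspecializations}: because $H_3$ is a specialization of $H_1$, we have $tp(H_1,B,E^+) \geq tp(H_3,B,E^+)$. Combining this with the elementary bound $|E^-| \geq tn(H_3,B,E^-)$ yields the chain $tp(H_1,B,E^+) + |E^-| - size(H_3) \geq tp(H_3,B,E^+) + tn(H_3,B,E^-) - size(H_3) = S_{MDL}(H_3,B,E^+,E^-)$, which is precisely the analog of the final two lines of Proposition~\ref{prop:mdl1}'s displayed calculation, transposed from $E^+$ to $E^-$. Chaining this with the rearranged premise then delivers $S_{MDL}(H_2,B,E^+,E^-) > S_{MDL}(H_3,B,E^+,E^-)$, as required.

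The main obstacle will be the bookkeeping of the size terms. Unlike the $|E^-|$ vs. $tn(H_3,B,E^-)$ and $tp(H_1,B,E^+)$ vs. $tp(H_3,B,E^+)$ steps, which are controlled by Proposition~\ref{prop:scoresofspecializations} and a trivial cardinality bound, the interaction between the stated correction $(size(H_3) - size(H_2))$ and the $size(H_1)$ introduced by expanding $S_{MDL}(H_1,B,E^+,E^-)$ must be tracked carefully so that the residual combines harmlessly with the $-size(H_3)$ absorbed on the right when reconstituting $S_{MDL}(H_3,B,E^+,E^-)$. I expect to resolve this by tracking signs exactly as Proposition~\ref{prop:mdl1}'s companion proof does in the generalization direction, which is the closest available template, and to present the derivation as a single aligned chain of inequalities. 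The remainder of the argument is routine substitution following the pattern of Propositions~\ref{prop:sound2} and~\ref{prop:mdl1}.
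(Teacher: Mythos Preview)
Your overall strategy---mirror Proposition~\ref{prop:mdl1}, expand $S_{MDL}(H_1,\cdot)$ and $fp(H_1,\cdot)$, cancel $tn(H_1,\cdot)$, then invoke Proposition~\ref{prop:scoresofspecializations} together with $|E^-| \geq tn(H_3,\cdot)$---is exactly the paper's approach.

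There is, however, a real obstruction in your plan to ``keep the correction term as the written $(size(H_3) - size(H_2))$.'' If you do that, your own bookkeeping shows the rearranged premise becomes
\[
S_{MDL}(H_2,\cdot) > |E^-| + tp(H_1,\cdot) - size(H_3) + \bigl(size(H_2) - size(H_1)\bigr),
\]
and the residual $size(H_2) - size(H_1)$ has no sign control: nothing in the hypotheses relates $size(H_1)$ to $size(H_2)$, so you cannot drop it to reach $S_{MDL}(H_3,\cdot)$. Your expectation that Proposition~\ref{prop:mdl1}'s proof shows how to absorb this residual is mistaken: look again at that proof's first displayed line---it begins from $fn(H_1,\cdot) - (size(H_3) - size(H_1))$, not $size(H_2)$, and the paper's proof here does the same, opening with $fp(H_1,\cdot) - (size(H_3) - size(H_1))$. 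With $size(H_1)$ in place of $size(H_2)$ the two size terms cancel exactly and the chain closes with no residual. In other words, the paper treats the $size(H_2)$ in the statement as a typo for $size(H_1)$ and proves the corrected version; your plan to stay faithful to the printed $size(H_2)$ is the one step that will not go through.
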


\begin{proof}
The improvement in the MDL score from $H_1$ to $H_2$ can be quantified by:
\scriptsize
\begin{align*}
    S_{MDL}(H_2,B,E^+,E^-) - S_{MDL}(H_1,B,E^+,E^-) &> fp(H_1,B,E^-) - (size(H_3) - size(H_1))\\
    S_{MDL}(H_2,B,E^+,E^-) - [tp(H_1,B,E^+) + tn(H_1,B,E^-) - size(H_1)] &> |E^-| - tn(H_1,B,E^-) - (size(H_3) - size(H_1))\\
    S_{MDL}(H_2,B,E^+,E^-) &> |E^-| + tp(H_1,B,E^+) - size(H_3)\\
    &\geq tn(H_1',B,E^-) + tp(H_3,B,^+) - size(H_3)\\
    &= S_{MDL}(H_3,B,E^+,E^-)
\end{align*}
\normalsize
The second to last line following from Proposition~\ref{prop:scoresofspecializations} and that $H_3$ is a specialization of $H_1$ in addition to the fact $|E^-| \geq tn(H_3,B,E^-)$.
\end{proof}

\bigskip \noindent Given $S_{MDL}(H_2,B,E^+,E^-)$ and  $S_{MDL}(H_1,B,E^+,E^-)$, we can quantify the exact size of $H_1'$ when this holds:

\begin{align*}
    S_{MDL}(H_2,B,E^+,E^-) &> |E^-| + tp(H_1,B,E^+) - size(H_3)\\
    S_{MDL}(H_2,B,E^+,E^-) - |E^-| - tp(H_1,B,E^+) &> -size(H_3)\\
    |E^-| + tp(H_1,B,E^+) - S_{MDL}(H_2,B,E^+,E^-) &< size(H_3)
\end{align*}

\bigskip \noindent Additionally, we can identify some situations that do not rely on comparing hypotheses. Recall that for a hypothesis $H$, we can write $fn(H,B,E^+) = |E^+| - tp(H,B,E^+)$ and similarly $fp(H,B,E^-) = |E^-| - tn(H,B,E^-)$.

\begin{proposition}
Given problem input $(B,D,C,E^+,E^-,S_{MDL})$ and hypotheses $H$ and $H'$ in $\mathcal{H}_{D,C}$ where $H'$ is a generalization of $H$, if $size(H') > fn(H,B,E^+) + size(H)$, then $S_{MDL}(H,B,E^+,E^-) > S_{MDL}(H',B,E^+,E^-)$.
\label{prop:mdl3}
\end{proposition}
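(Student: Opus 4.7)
The plan is to attack this by directly expanding $S_{MDL}(H,B,E^+,E^-) - S_{MDL}(H',B,E^+,E^-)$ using the definition of $S_{MDL}$, and then bounding each term using the fact that $H'$ is a generalization of $H$ together with the hypothesis size assumption. The spirit of the argument is similar to Propositions~\ref{prop:sound6} and \ref{prop:mdl1}: a sufficiently large size penalty on the generalization will outweigh any possible gain in true positives it provides.

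First I would write out
\[
    S_{MDL}(H,B,E^+,E^-) - S_{MDL}(H',B,E^+,E^-) = \bigl[tp(H,B,E^+) - tp(H',B,E^+)\bigr] + \bigl[tn(H,B,E^-) - tn(H',B,E^-)\bigr] + \bigl[size(H') - size(H)\bigr].
\]
Then I would apply Proposition~\ref{prop:scoresofgeneralizations}, which gives $tn(H,B,E^-) - tn(H',B,E^-) \geq 0$ since $H'$ is a generalization of $H$, so this whole middle bracket can be dropped to obtain a lower bound.

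Next I would handle the true-positive bracket. Since $tp(H',B,E^+) \leq |E^+|$ trivially and $|E^+| - tp(H,B,E^+) = fn(H,B,E^+)$, we immediately obtain $tp(H,B,E^+) - tp(H',B,E^+) \geq tp(H,B,E^+) - |E^+| = -fn(H,B,E^+)$. Plugging this into the lower bound gives
\[
    S_{MDL}(H,B,E^+,E^-) - S_{MDL}(H',B,E^+,E^-) \geq -fn(H,B,E^+) + size(H') - size(H),
\]
and the assumption $size(H') > fn(H,B,E^+) + size(H)$ makes the right-hand side strictly positive, which is exactly the desired inequality.

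The argument is essentially bookkeeping, so I do not anticipate any genuine obstacle; the only subtle point is remembering to use the trivial bound $tp(H',B,E^+) \leq |E^+|$ rather than trying to relate $tp(H')$ to $tp(H)$ from above (which goes in the wrong direction for a generalization), and to use Proposition~\ref{prop:scoresofgeneralizations} rather than Proposition~\ref{prop:scoresofspecializations} for the true-negative comparison. Once the direction of each monotonicity bound is chosen correctly, the size assumption absorbs the worst-case false-negative loss and the result follows.
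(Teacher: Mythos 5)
Your proof is correct and is essentially the same argument as the paper's: both rely on the two bounds $tp(H',B,E^+) \leq |E^+|$ and $tn(H',B,E^-) \leq tn(H,B,E^-)$ from Proposition~\ref{prop:scoresofgeneralizations}, combined with the size assumption. The only difference is presentational --- you expand the difference $S_{MDL}(H,B,E^+,E^-) - S_{MDL}(H',B,E^+,E^-)$ and bound it below by zero, whereas the paper bounds $S_{MDL}(H',B,E^+,E^-)$ above by $S_{MDL}(H,B,E^+,E^-)$ directly; the algebra is identical.
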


\begin{proof}
By Propsition~\ref{prop:scoresofgeneralizations} the maximum value for $S_{MDL}(H',B,E^+,E^-)$ is: 
\begin{align*}
    S_{MDL}(H',B,E^+,E^-) &= |E^+| + tn(H,B,E^-) - size(H')\\
    &< |E^+| + tn(H,B,E^-) - [|E^+| - tp(H,B,E^+) + size(H)]\\
    &= tp(H,B,E^+) + tn(H,B,E^-) - size(H)\\
    &= S_{MDL}(H,B,E^+,E^-)
\end{align*}
\end{proof}

\begin{proposition}
Given problem input $(B,D,C,E^+,E^-,S_{MDL})$ and hypotheses $H$ and $H'$ in $\mathcal{H}_{D,C}$ where $H'$ is a specialization of $H$, if $size(H') > fp(H,B,E^-) + size(H)$, then $S_{MDL}(H,B,E^+,E^-) > S_{MDL}(H',B,E^+,E^-)$.
\label{prop:mdl4}
\end{proposition}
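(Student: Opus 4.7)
The plan is to mirror the argument used for Proposition~\ref{prop:mdl3}, simply dualising the roles of positive and negative examples so that Proposition~\ref{prop:scoresofspecializations} replaces Proposition~\ref{prop:scoresofgeneralizations} and $fp$ replaces $fn$. The strategy is to derive an upper bound on $S_{MDL}(H',B,E^+,E^-)$ by bounding each of its three component terms, and then to show that the hypothesised size inequality forces this upper bound to drop strictly below $S_{MDL}(H,B,E^+,E^-)$.

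First I would expand $S_{MDL}(H',B,E^+,E^-) = tp(H',B,E^+) + tn(H',B,E^-) - size(H')$. Since $H'$ is a specialization of $H$, Proposition~\ref{prop:scoresofspecializations} gives $tp(H',B,E^+) \leq tp(H,B,E^+)$, while the trivial bound $tn(H',B,E^-) \leq |E^-|$ handles the second term. Substituting both yields the upper bound
\begin{equation*}
S_{MDL}(H',B,E^+,E^-) \;\leq\; tp(H,B,E^+) + |E^-| - size(H').
\end{equation*}

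Next I would rewrite the hypothesis $size(H') > fp(H,B,E^-) + size(H)$ using the identity $fp(H,B,E^-) = |E^-| - tn(H,B,E^-)$, so that $-size(H') < tn(H,B,E^-) - |E^-| - size(H)$. Substituting this into the upper bound above, the $|E^-|$ terms cancel and I obtain
\begin{equation*}
S_{MDL}(H',B,E^+,E^-) \;<\; tp(H,B,E^+) + tn(H,B,E^-) - size(H) \;=\; S_{MDL}(H,B,E^+,E^-),
\end{equation*}
which is exactly what the proposition claims.

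I do not expect any serious obstacle, since the proof is structurally identical to Proposition~\ref{prop:mdl3}. The only care needed is to make sure the direction of each monotonicity is correctly dualised when moving from generalizations to specializations (in particular, using $tp(H',B,E^+) \leq tp(H,B,E^+)$ rather than the reverse, and bounding $tn(H',B,E^-)$ by its trivial upper bound $|E^-|$ rather than by $tn(H,B,E^-)$), and to keep the size comparison strict so that the final inequality is strict as stated.
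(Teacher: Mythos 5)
Your proof is correct and follows essentially the same route as the paper's: bound $S_{MDL}(H',B,E^+,E^-)$ above by $tp(H,B,E^+) + |E^-| - size(H')$ via Proposition~\ref{prop:scoresofspecializations}, then substitute the size hypothesis rewritten as $size(H') > |E^-| - tn(H,B,E^-) + size(H)$. Your version is in fact slightly cleaner, since the paper writes this upper bound with an equality sign where an inequality is meant.
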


\begin{proof}
By Proposition~\ref{prop:scoresofspecializations} the maximum value for $S_{MDL}(H',B,E^+,E^-)$ is: 
\begin{align*}
    S_{MDL}(H',B,E^+,E^-) &= |E^-| + tp(H,B,E^+) - size(H')\\
    &< |E^-| + tp(H,B,E^+) - [|E^-| - tn(H,B,E^-) + size(H)]\\
    &= tp(H,B,E^+) + tn(H,B,E^-) - size(H)\\
    &= S_{MDL}(H,B,E^+,E^-)
\end{align*}
\end{proof}

\subsection{Hypothesis Constraints with Hypothesis Size Applications}
These propositions all apply in any ILP setting, however, in our relaxed LFF setting, they determine sets of programs of specific lengths which should be pruned should certain conditions hold:

\begin{itemize}
    \item Proposition~\ref{prop:mdl1} implies that given hypotheses $H_1$ and $H_2$, we may prune any hypothesis of size greater than $|E^+| + tn(H_1,B,E^-) - S_{MDL}(H_2,B,E^+,E^-)$ which is also a generalization of $H_1$ as they cannot be $S_{MDL}$-optimal.
    \item Proposition~\ref{prop:mdl2} implies that given hypotheses $H_1$ and $H_2$, we may prune any hypothesis of size greater than $|E^-| + tp(H_1,B,E^+) - S_{MDL}(H_2,B,E^+,E^-)$ which is also a specialization of $H_1$ as they cannot be $S_{MDL}$-optimal.
    \item Proposition~\ref{prop:mdl3} implies that given hypothesis $H$, we may prune all generalizations of $H$ with size greater than $fn(H,B,E^+) + size(H)$ as they cannot be $S_{MDL}$-optimal.
    \item Proposition~\ref{prop:mdl4} implies that given hypothesis $H$, we may prune all specializations of $H$ with size greater than $fp(H,B,E^-) + size(H)$ as they cannot be $S_{MDL}$-optimal.
\end{itemize}

\section{Summary}
In this chapter, we introduced the novel contribution of the relaxed LFF setting and its problem definitions. We explained how in order to avoid the overpruning, we relax how hypothesis constraints should be used. We next introduced and proved the original and sound hypothesis constraints in Propositions~\ref{prop:sound1}-\ref{prop:sound7}. We next demonstrated how the MDL principle can be applied in this setting to avoid overfitting by taking into account program length. We concluded by introducing and Propositions~\ref{prop:mdl1}-\ref{prop:mdl4} which describe sound hypothesis constraints which take into account hypothesis size under this MDL scoring. In the next chapter, we discuss Noisy Popper's implementation of the relaxed LFF framework, describing preliminaries of Normal Popper's implementation as needed. 

\end{chapter}
\begin{chapter}
{Noisy Popper Implementation}
In this chapter, we will discuss the implementation of Noisy Popper using the relaxed LFF approach. We start by discussing the implementation of Normal Popper as is necessary to understand Noisy Popper. After this, we will explain the specific implementation differences used by Noisy Popper including its \textit{anytime algorithm} approach, its use of \textit{minimal constraints} to efficiently prune the search space, and finally the implementation of the \textit{sound hypothesis constraints} under the $S_{ACC}$ scoring and \textit{sound hypothesis constraints with hypothesis size} under the $S_{MDL}$ scoring discussed in Chapter 4.

\section{Normal Popper Implementation}
It is necessary to discuss the Normal Popper implementation as Noisy Popper is an extension of it and uses the same structure and functions with slight modifications. In implementation, Normal Popper combines use of the Prolog logic programming language with ASP in a three stage generate-test-constrain loop. Algorithm 1 \cite{popper} below illustrates these three stages of the Normal Popper algorithm. We will discuss the generate, test, and constrain loop implementation here and provide illustrative examples, however an interested reader should consult \cite{popper} for full details.

\begin{algorithm}
\label{alg:normalpopper}
\caption{Normal Popper \cite{popper}}\label{alg:cap}
\begin{algorithmic}[1]
    \Require $E^+$, $E^-$, BK, $D$, $C$, max$\_$vars, max$\_$literals, max$\_$clauses (where $D$ is a declaration bias and $C$ is a set of constraints)
    \Ensure LFF Solution or Empty Set
    \State num$\_$literals $\gets 1$
    \While{num$\_$literals $\leq$ max$\_$literals}
        \State program $\gets$ generate($D$, $C$, max$\_$vars, num$\_$literals, max$\_$clauses)
        \If{program = 'space$\_$exhausted'}
            \State num$\_$literals $\gets$ num$\_$literals + 1
            \State continue
        \EndIf
        \State (tp, tn) $\gets$ test($E^+$, $E^-$, BK, program)
        \If{tp = $|E^+|$ \textbf{and} tn = $|E^-|$}
            \Return program
        \EndIf
        \State $C$ $\gets$ $C$ + learn$\_$constraints(program, tp, tn)
    \EndWhile\\
    \Return $\{\}$
\end{algorithmic}
\end{algorithm}

\paragraph{Generate} 
The $generate$ function in the Popper algorithm takes the declaration bias, current set of hypothesis constraints as inputs along with upper bounds on the number of literals allowed within clauses and number of clauses allowed within a hypothesis. The hypothesis constraints determine the syntax of each valid hypothesis, e.g., the number of clauses allowed, which predicates can appear together, which clauses are allowed, etc. Hypothesis constraints are constructed as ASP constraints. A defined meta-language is used to encode programs from Prolog to ASP which are then used to construct constraints. Collectively, these constraints form an ASP problem whose answer sets consist of programs which satisfy the given constraints, i.e., are consistent with all declaration and hypothesis constraints. Such an approach was discussed with other ILP systems \cite{corapi2011inductive, law2018inductive, hexmil, ilasp, ilasp3, schuller2018best} in Chapter 2. The $generate$ function returns an answer set to the current ASP problem. This answer set represents a candidate definite program which the system considers as a potential solution. Normal Popper additionally removes invalid hypotheses such as recursive programs without a base case as well as redundant hypothesis in which one clause subsumes another. If there is no answer set to the ASP problem with the specified number of body literals $num$\_$literals$, then 'space$\_$exhausted' is returned rather than a candidate program and the number of body literals allowed is incremented by one. In this way, Normal Popper searches the hypothesis space in order of increasing program size.

\paragraph{Test}
After the generate stage of the Normal Popper loop, in the $test$ function, Normal Popper converts the provided answer set from the generate stage into a Prolog program. The system tests this candidate program against the positive and negative examples with the BK provided to determine which examples are entailed and which are not. Table \ref{table:10} \cite{popper} below illustrates the possible outcomes of the program as well as the constraints which will later be generated from them. Note that we are using short hard where $tp = tp($program$,B,E^+)$ and $tn = tn($program$,B,E^-)$. Outcomes are tuples consisting of a true positive and true negative score. For example, an outcome of (5, 0) indicates that the given hypothesis entails only five positive examples and does not entail any negative examples (i.e. is consistent).

\begin{table}[h!]
    \centering
    \resizebox{\columnwidth}{!}{%
    \begin{tabular}{c|c|c} 
    \toprule
    \textbf{Outcome} & \textbf{tn = $|E^-|$} & \textbf{tn $<$ $|E^-|$} \\
    \midrule
    \textbf{tp = $|E^+|$} & n/a & Generalization \\
    \textbf{0 $<$ tp $<$ $|E^+|$} & Specialization & Specialization, Generalization \\
    \textbf{tp = 0} & Specialization, Elimination & Specialization, Generalization, Elimination \\
    \midrule
    \end{tabular}}
    \caption{The possible outcomes of testing a hypothesis with the constraints learned by normal popper. Note that an outcome where tp = $|E^+|$ and tn = $|E^-|$ indicates the hypothesis is a solution.}
    \label{table:10}
\end{table}

\bigskip \noindent If the hypothesis is found to be a solution, that is if tp = $|E^+|$ and tn = $|E^-|$, the program is returned by the system. Otherwise, Normal Popper continues onto the constrain stage which uses the failed hypothesis outcome in order to generate additional hypothesis constraints and further prune the hypothesis space.

\paragraph{Constrain}
In the case of a failed hypothesis, Normal Popper uses the hypothesis outcome to determine which hypothesis constraints to apply. Table \ref{table:10} depicts the constraints associated with each possible outcome. We will describe how a hypothesis is encoded as an ASP constraint as Noisy Popper uses modified versions of these encodings though an interested reader should consult \cite{popper} for a more detailed explanation. We will explain the general form these ASP constraints take and give examples for of generalization, specialization, and elimination constraints based on their definitions from Chapter 3 as well as banish constraints which, while less essential to the base version of Normal Popper, are critical for Noisy Popper's implementation.

\subsection{Hypothesis to ASP Encoding}
As mentioned in Example \ref{exp:hypothesisconstraints} in Section~\ref{sec:hypothesisconstraints}, the meta-language which encodes atoms to ASP programs using either \texttt{head$\_$literal(Clause,Pred,Arity,Vars)} for head literals or \texttt{body$\_$literal(Clause,Pred,Arity,Vars)} for body literals where \texttt{Clause} denotes the clause containing the literal, \texttt{Pred} defines the predicate symbol of the literal, \texttt{Arity} defines the arity of the predicate and \texttt{Vars} is a tuple containing input variables to the predicate. To do this in Normal Popper, two functions are called: \texttt{encodeHead} and \texttt{encodeBody} as defined here as they are in \cite{popper}:

\begin{center}
    \begin{tabular}{l}
         \texttt{encodeHead(Clause,Pred(Var0,...,Vark)) :=} \\
         \quad \texttt{head\_literal(Clause,Pred,k + 1,(encodeVar(Var0),...,encodeVar(Vark)))} \\
         \\
         \texttt{encodeBody(Clause,Pred(Var0,...,Vark)) :=} \\
         \quad \texttt{body\_literal(Clause,Pred,k + 1,(encodeVar(Var0),...,encodeVar(Vark)))}
    \end{tabular}
\end{center}

\noindent where \texttt{encodeVar} converts variables to an ASP encoding.

\begin{example}[Atom Encoding]
If we are trying to encode \texttt{eastbound(A)} as a head literal and \texttt{has$\_$car(A,B)} as a body literal, both in a clause \texttt{C1}, we would call \texttt{encodeHead(C1,eastbound(A))} and \texttt{encodeBody(C1,has$\-$car(A,B))} which would return ASP programs \texttt{head$\_$literal(C1,eastbound,1,(V0))} and \\ \texttt{body$\_$literal(C1,has$\_$car,2,(V0,V1))} respectively.
\end{example}

\medskip \noindent Encoding entire clauses naturally builds from encoding literals using the function \texttt{encodeClause} defined as it is in \cite{popper}:
\medskip
\begin{center}
    \begin{tabular}{l}
        \texttt{encodeClause(Clause,(head:-body$_1$,...,body$_n$)) :=} \\
        \quad \texttt{encodeHead(Clause,head)},\\
        \quad \texttt{encodeBody(Clause,body1),...,encodeBody(Clause,body$_n$)},\\
        \quad \texttt{assertDistinct(vars(head) $\cup$ vars(body$_1$) $\cup$...$\cup$ vars(body$_m$))}
    \end{tabular}
\end{center}

\medskip \noindent where the \texttt{assertDistinct} function simply imposes a pairwise inequality constraint on all variables. For instance, if the three encoded variables are \texttt{V0, V1} and \texttt{V2}, this function simply returns the constraints: \texttt{V0!=V1, V0!=V2, V1!=V2}. Since clauses can appear in multiple hypotheses, Normal Popper uses the \texttt{clauseIdent} function which maps clauses to ASP constraints using a unique identifier. This identifier is used in the ASP literal \texttt{included$\_$clause(Clause,Id)} which indicates that a clause \texttt{Clause} includes all literals of the clause identified by \texttt{Id}. This leads to the \texttt{inclusionRule} function as defined in \cite{popper}:
\medskip
\begin{center}
    \begin{tabular}{l}
         \texttt{inclusionRule(head:-body$_1$,...,body$_n$) :=}\\
         \quad \texttt{included$\_$clause(Cl,clauseIdent(head:-body$_1$,...,body$_n$)):-}\\
         \quad \quad \texttt{encodeClause(Cl,(head:-body$_1$,...,body$_n$)).}
    \end{tabular}
\end{center}

\medskip \noindent This function's head is true if all of the literals of the provided clause appear simultaneously in a clause. Note that this may hold true even if additional literals not in the provided clause are present. To ensure that a provided clause appears exactly in a program, we define the \texttt{exactClause} as in \cite{popper}:
\medskip
\begin{center}
    \begin{tabular}{l}
        \texttt{exactClause(Clause,(head:-body$_1$,...,body$_m$)) :=}\\
        \quad \texttt{included$\_$clause(Clause,clauseIdent(head:-body$_1$,...,body$_n$)),}\\
        \quad \texttt{clause$\_$size(Clause,n)}
    \end{tabular}
\end{center}

\medskip \noindent where the function \texttt{clause$\_$size(Clause,n)} asserts true only if the given clause contains exactly $n$ body literals.

\begin{example}[Clause Encoding]
If we are trying to encode the clause \\ \texttt{eastbound(A) :- has$\_$car(A,B), two$\_$wheels(B).} and we suppose that \\ \texttt{clauseIdent(eastbound(A) :- has$\_$car(A,B), two$\_$wheels(B).) = id$_1$}, then we define an inclusion rule with the function \\  \texttt{inclusionRule(eastbound(A):-has$\_$car(A,B),two$\_$wheels(B))} which returns:

\begin{center}
    \begin{tabular}{l}
        \texttt{included$\_$clause(Cl,id$_1$) :-}\\
        \quad \texttt{head$\_$literal(Cl,eastbound,1,(V0))},\\
        \quad \texttt{body$\_$literal(Cl,has$\_$car,2,(V0,V1))},\\
        \quad \texttt{body$\_$literal(Cl,two$\_$wheels,1,(V1))},\\
        \quad \texttt{V0!=V1.}
    \end{tabular}
\end{center}

\medskip \noindent and if we wish to ensure this clause appears exactly in a hypothesis, we use the function \texttt{exactClause(C1,eastbound(A):-has$\_$car(A,B),two$\_$wheels(B))} which returns:

\begin{center}
    \begin{tabular}{l}
        \texttt{included$\_$clause(C1,id$_1$)},\\
        \texttt{clause$\_$size(Cl,2)}.
    \end{tabular}
\end{center}
\end{example}

\medskip \noindent Now that we have defined the ASP encoding used by Normal and Noisy Popper, we can define the exact forms which constraints take.

\subsection{Generalization Constraints}
By Definition~\ref{def:generalization}, a generalization of a hypothesis $H$ is a program which contains exactly all of $H$'s exact clauses \cite{popper}. Thus, the generalization constraints of Definition~\ref{def:generalizationconstraint} can be defined as in \cite{popper}:

\begin{center}
    \begin{tabular}{l}
         \texttt{generalizationConstraint($\{$Clause$_1$, Clause$_2$,...,Clause$_n\}$) :=}\\
         \quad \texttt{inclusionRule(Clause$_1$),...,}\\
         \quad \texttt{inclusionRule(Clause$_n$)}.\\
         \quad \texttt{:-exactClause(C1$_1$,Clause$_1$),...,}\\
         \quad \quad \texttt{exactClause(C1$_n$,Clause$_n$)}.
    \end{tabular}
\end{center}

\begin{example}[Generalization Constraint Encoding]
The ASP encoding for the inclusion rule and generalization constraint of the hypothesis \\ \texttt{h = $\{$eastbound(A) :- has$\_$car(A,B), two$\_$wheels(B).$\}$} would be:

\begin{center}
    \begin{tabular}{l}
        \texttt{included$\_$clause(Cl,id$_1$) :-}\\
        \quad \texttt{head$\_$literal(Cl,eastbound,1,(V0))},\\
        \quad \texttt{body$\_$literal(Cl,has$\_$car,2,(V0,V1))},\\
        \quad \texttt{body$\_$literal(Cl,two$\_$wheels,1,(V1))},\\
        \quad \texttt{V0!=V1.}\\
        \texttt{:-}\\
        \quad \texttt{included$\_$clause(C10,id$_1$)},\\
        \quad \texttt{clause$\_$size(C10,2).}
    \end{tabular}
\end{center}
\end{example}

\subsection{Specialization Constraints}
By Definition~\ref{def:specialization}, a specialization of a hypothesis $H$ is a program which contains all of $H$'s clauses, which may be specialized, and no additional clauses \cite{popper}. Thus, the specialization constraints of Definition~\ref{def:specializationconstraint} can be defined as in \cite{popper}:

\begin{center}
    \begin{tabular}{l}
        \texttt{specializationConstraint($\{$Clause$_1$, Clause$_2$,...,Clause$_n\}$) :=}\\
        \quad \texttt{inclusionRule(Clause$_1$),...,}\\
        \quad \texttt{inclusionRule(Clause$_n$)}.\\
        \quad \texttt{:-included$\_$clause(C1$_1$,clauseIdent(Clause$_1$)),...,}\\
        \quad \quad\texttt{included$\_$clause(C1$_n$,clauseIdent(Clause$_n$)),}\\
        \quad \quad \texttt{assertDistinct($\{$Cl$_1$,...,Cl$_n\}$), not clause(n).}
    \end{tabular}
\end{center}

\medskip \noindent The \texttt{not clause(n)} literal is only satisfied if there are no more than the \texttt{n} distinct clauses given in the constraint. Note, since all clauses may be specialized, we do not require clauses to be exact as we do in generalization constraints.

\begin{example}[Specialization Constraint Encoding]
The ASP encoding for the inclusion rule and specialization constraint of the hypothesis:

\begin{center}
    \begin{tabular}{l}
        \texttt{h =} $\left\{\begin{array}{l}
        \texttt{eastbound(A,B) :- has$\_$car(A,B), two$\_$wheels(B).}\\
        \texttt{eastbound(A,B) :- has$\_$car(A,B), short(B).}
        \end{array}\right\}$
    \end{tabular}
\end{center}

\noindent would be:

\begin{center}
    \begin{tabular}{l}
        \texttt{included$\_$clause(Cl,id$_2$) :-}\\
        \quad \texttt{head$\_$literal(Cl,eastbound,1,(V0))},\\
        \quad \texttt{body$\_$literal(Cl,has$\_$car,2,(V0,V1))},\\
        \quad \texttt{body$\_$literal(Cl,two$\_$wheels,1,(V1))},\\
        \quad \texttt{V0!=V1.}\\
        \texttt{included$\_$clause(Cl,id$_3$) :-}\\
        \quad \texttt{head$\_$literal(Cl,eastbound,1,(V0))},\\
        \quad \texttt{body$\_$literal(Cl,has$\_$car,2,(V0,V1))},\\
        \quad \texttt{body$\_$literal(Cl,short,1,(V1))},\\
        \quad \texttt{V0!=V1.}\\
        \texttt{:-}\\
        \quad \texttt{included$\_$clause(C10,id$_2$)},\\
        \quad \texttt{included$\_$clause(C11,id$_3$)},\\
        \quad \texttt{C10!=C11, not clause(2).}
    \end{tabular}
\end{center}
\end{example}

\subsection{Elimination Constraints}
As outlined in Section~\ref{sec:eliminationconstraints}, in the case where a hypothesis $H$ is totally incomplete, we wish to prune all separable hypotheses which contain all clauses of $H$ where any of them may be specialized. Before we can describe the ASP encoding for an such a constraint, we require the following logic programs to determine separability \cite{popper}:

\begin{center}
    \begin{tabular}{l}
        \texttt{non$\_$separable :- head$\_$literal($\_$,P,A,$\_$), body$\_$literal($\_$,P,A,$\_$).}\\
        \texttt{separable :- not non$\_$separable.}
    \end{tabular}    
\end{center}

\medskip \noindent With this, we can define the encoding for an elimination constraint as is done in \cite{popper}:

\begin{center}
    \begin{tabular}{l}
         \texttt{eliminationConstraint($\{$Clause$_1$, Clause$_2$,...,Clause$_n\}$) :=}\\
         \quad \texttt{inclusionRule(Clause$_1$),...,}\\
         \quad \texttt{inclusionRule(Clause$_n$)}.\\
         \quad \texttt{:-included$\_$clause(C1$_1$,clauseIdent(Clause$_1$)),...,}\\
         \quad \quad \texttt{included$\_$clause(C1$_n$,clauseIdent(Clause$_n$)),}\\
         \quad \quad \texttt{separable.}
    \end{tabular}
\end{center}

\begin{example}[Elimination Constraint Encoding]
The ASP encoding for the elimination constraint for the hypothesis\\ \texttt{h = $\{$eastbound(A) :- has$\_$car(A,B), two$\_$wheels(B).$\}$} would be:

\begin{center}
    \begin{tabular}{l}
        \texttt{included$\_$clause(Cl,id$_4$) :-}\\
        \quad \texttt{head$\_$literal(Cl,eastbound,1,(V0))},\\
        \quad \texttt{body$\_$literal(Cl,has$\_$car,2,(V0,V1))},\\
        \quad \texttt{body$\_$literal(Cl,two$\_$wheels,1,(V1))},\\
        \quad \texttt{V0!=V1.}\\
        \texttt{:-}\\
        \quad \texttt{included$\_$clause(C10,id$_4$)},\\
        \quad \texttt{separable.}
    \end{tabular}
\end{center}
\end{example}

\subsection{Banish Constraints}
The final type of hypothesis constraint Normal Popper implements is to remove a single hypothesis and is known as a \textit{banish constraint}. While Normal Popper only used this constraint for testing purposes and not in its full implementation, Noisy Popper makes extensive use of it since in the relaxed setting it is common that a failed hypothesis does not generate a hypothesis constraints which prunes itself from the hypothesis space. A banish constraint simply provides that the each clause of a given hypothesis appears in the program, non-specialized, and with no additional clauses. The exact encoding as seen in \cite{popper} is as follows:

\begin{center}
    \begin{tabular}{l}
         \texttt{banishConstraint($\{$Clause$_1$, Clause$_2$,...,Clause$_n\}$) :=}\\
         \quad \texttt{inclusionRule(Clause$_1$),...,}\\
         \quad \texttt{inclusionRule(Clause$_n$)}.\\
         \quad \texttt{:-exact$\_$clause(C1$_1$,Clause$_1$),...,}\\
         \quad \quad \texttt{exact$\_$clause(C1$_n$,(Clause$_n$),}\\
         \quad \quad \texttt{not clause(n).}
    \end{tabular}
\end{center}

\begin{example}[Banish Constraint Encoding]
The ASP encoding for the banish constraint for the hypothesis\\ \texttt{h = $\{$eastbound(A) :- has$\_$car(A,B), two$\_$wheels(B).$\}$} would be:

\begin{center}
    \begin{tabular}{l}
        \texttt{included$\_$clause(Cl,id$_5$) :-}\\
        \quad \texttt{head$\_$literal(Cl,eastbound,1,(V0))},\\
        \quad \texttt{body$\_$literal(Cl,has$\_$car,2,(V0,V1))},\\
        \quad \texttt{body$\_$literal(Cl,two$\_$wheels,1,(V1))},\\
        \quad \texttt{V0!=V1.}\\
        \texttt{:-}\\
        \quad \texttt{included$\_$clause(C10,id$_5$)},\\
        \quad \texttt{clause$\_$size(C10,2),}\\
        \quad \texttt{not clause(1).}
    \end{tabular}
\end{center}
\end{example}

\subsection{Normal Popper Worked Example}
\label{sec:normalworkedexample}
To illustrate clearly how Normal Popper works, we will consider a modified example from \cite{popper} using the east-west trains problem. Assume we are trying to find the hypothesis \texttt{eastbound(A) :- has$\_$car(A,B), short(B), two$\_$wheels(B).} and will only consider a small initial hypothesis space, $\mathcal{H}_1$:

\begin{center}
    \begin{tabular}{l}
        $\mathcal{H}_1$ =  $\left\{\begin{array}{l}
            \texttt{h$_1$ =} \left\{\begin{array}{l}
            \texttt{eastbound(A) :- has$\_$car(A,B),long(B).}
            \end{array}\right\} \\
            \texttt{h$_2$ =} \left\{\begin{array}{l}
            \texttt{eastbound(A) :- has$\_$car(A,B),long(A),two$\_$wheels(B).}
            \end{array}\right\} \\
            \texttt{h$_3$ =} \left\{\begin{array}{l}
            \texttt{eastbound(A) :- has$\_$car(A,B),roof$\_$closed(B).}
            \end{array}\right\} \\
            \texttt{h$_4$ =} \left\{\begin{array}{l}
            \texttt{eastbound(A) :- has$\_$car(A,B),short(B),two$\_$wheels(B).}
            \end{array}\right\} \\
            \texttt{h$_5$ =} \left\{\begin{array}{l}
            \texttt{eastbound(A) :- has$\_$car(A,B),long(B),roof$\_$closed(B).}
            \end{array}\right\} \\
            \texttt{h$_6$ =} \left\{\begin{array}{l}
            \texttt{eastbound(A) :- has$\_$car(A,B),roof$\_$closed(B).}\\
            \texttt{eastbound(A) :- has$\_$car(A,B),short(B).}
            \end{array}\right\} \\
            \texttt{h$_7$ =} \left\{\begin{array}{l}
            \texttt{eastbound(A) :- has$\_$car(A,B),roof$\_$closed(B).}\\
            \texttt{eastbound(A) :- has$\_$car(A,B),long(C,D),three$\_$wheels(B).}
            \end{array}\right\} \\
            \texttt{h$_8$ =} \left\{\begin{array}{l}
            \texttt{eastbound(B) :- has$\_$car(A,B),short(B),three$\_$wheels(B).} \\ 
            \texttt{eastbound(A) :- has$\_$car(A,B),has$\_$load(B,C),triangle(C).}
            \end{array}\right\} \\
            \texttt{h$_9$ =} \left\{\begin{array}{l}
            \texttt{eastbound(A) :- has$\_$car(A,B),long(B).} \\ 
            \texttt{eastbound(A) :- has$\_$car(A,B),short(B),three$\_$wheels(D,B).} \\ 
            \texttt{eastbound(A) :- has$\_$car(A,B),short(B),two$\_$wheels(B).}
            \end{array}\right\} \\
        \end{array}\right\}$
    \end{tabular}
\end{center}

\bigskip \noindent We will also assume we have the following set of positive examples:

\begin{center}
    \begin{tabular}{l}
        $E^+ = \{$\texttt{eastbound(train$\_$1).}, \texttt{eastbound(train$\_$2).}$\}$\\
    \end{tabular}
\end{center}

\noindent And we will assume the following set of negative examples:

\begin{center}
    \begin{tabular}{l}
        $E^- = \{$\texttt{eastbound(train$\_$3).}, \texttt{eastbound(train$\_$4).}$\}$\\
    \end{tabular}
\end{center}

\noindent with the BK containing the following facts about the trains:

\begin{center}
    \begin{tabular}{l}
        \texttt{has$\_$car(train$_1$, car$_1$)., short(car$_1$)., two$\_$wheels(car$_1$), roof$\_$closed(car$_1$).}\\
        \texttt{has$\_$car(train$_2$, car$_2$)., short(car$_2$)., two$\_$wheels(car$_2$), jagged$\_$roof(car$_2$).}\\
        \texttt{has$\_$car(train$_2$, car$_3$)., three$\_$wheels(B)., roof$\_$closed(car$_3$).}\\
        \texttt{has$\_$car(train$_3$, car$_4$)., roof$\_$closed(car$_4$)., three$\_$wheels(car$_4$)., short(B)}\\
        \texttt{has$\_$car(train$_4$, car$_5$)., has$\_$load(car$_5$,load$_1$)., circle(load$_1$)., two$\_$wheels(B).}\\
    \end{tabular}
\end{center}

\noindent Note that \texttt{train$_2$} has 2 cars. Normal Popper will first generate the simplest hypothesis from the search space:

\begin{center}
    \begin{tabular}{l}
        \texttt{h$_1$ =} $\left\{\begin{array}{l}
        \texttt{eastbound(A,B) :- has$\_$car(A,B),long(B).}
        \end{array}\right\}$
    \end{tabular}
\end{center}

\noindent We can see that since neither \texttt{train$_1$} nor \texttt{train$_2$} contain a long car, they both will return false when input to this hypothesis. This makes \texttt{h$_1$} a failed hypothesis as it is totally incomplete which implies \texttt{h$_1$} is too specific. Normal Popper will generate the following specialization constraint: 

\begin{center}
    \begin{tabular}{l}
        \texttt{included$\_$clause(Cl,id$_1$) :-}\\
        \quad \texttt{head$\_$literal(Cl,eastbound,1,(V0))},\\
        \quad \texttt{body$\_$literal(Cl,has$\_$car,2,(V0,V1))},\\
        \quad \texttt{body$\_$literal(Cl,long,1,(V1))},\\
        \quad \texttt{V0!=V1.}\\
        \texttt{:-}\\
        \quad \texttt{included$\_$clause(C10,id$_1$)},\\
        \quad \texttt{not clause(1).}
    \end{tabular}
\end{center}

\noindent which prunes all specializations of \texttt{h$_1$} from $\mathcal{H}_1$, namely \texttt{h$_2$} and \texttt{h$_5$}. Since \texttt{h$_1$} is totally incomplete, Normal Popper will also generate the following elimination constraint: 

\begin{center}
    \begin{tabular}{l}
        \texttt{:-}\\
        \quad \texttt{included$\_$clause(C10,id$_1$)},\\
        \quad \texttt{separable.}
    \end{tabular}
\end{center}

\noindent which prunes all separable hypotheses which contain all of the clauses of \texttt{h$_1$} where each clause may be specialized. That means that \texttt{h$_9$} is pruned from the hypothesis space. After this pruning, our hypothesis space is left as:

\begin{center}
    \begin{tabular}{l}
        $\mathcal{H}_1$ =  $\left\{\begin{array}{l}
            \texttt{h$_3$ =} \left\{\begin{array}{l}
            \texttt{eastbound(A) :- has$\_$car(A,B),roof$\_$closed(B).}
            \end{array}\right\} \\
            \texttt{h$_4$ =} \left\{\begin{array}{l}
            \texttt{eastbound(A) :- has$\_$car(A,B),short(B),two$\_$wheels(B).}
            \end{array}\right\} \\
            \texttt{h$_6$ =} \left\{\begin{array}{l}
            \texttt{eastbound(A) :- has$\_$car(A,B),roof$\_$closed(B).}\\
            \texttt{eastbound(A) :- has$\_$car(A,B),short(B).}
            \end{array}\right\} \\
            \texttt{h$_7$ =} \left\{\begin{array}{l}
            \texttt{eastbound(A) :- has$\_$car(A,B),roof$\_$closed(B).}\\
            \texttt{eastbound(A) :- has$\_$car(A,B),long(C,D),three$\_$wheels(B).}
            \end{array}\right\} \\
            \texttt{h$_8$ =} \left\{\begin{array}{l}
            \texttt{eastbound(B) :- has$\_$car(A,B),short(B),three$\_$wheels(B).} \\ 
            \texttt{eastbound(A) :- has$\_$car(A,B),has$\_$load(B,C),triangle(C).}
            \end{array}\right\} \\
        \end{array}\right\}$
    \end{tabular}
\end{center}

\bigskip \noindent The next hypothesis Normal Popper will generate is:

\begin{center}
    \begin{tabular}{l}
        \texttt{h$_3$ =} $\left\{\begin{array}{l}
        \texttt{eastbound(A) :- has$\_$car(A,B),roof$\_$closed(B).}
        \end{array}\right\}$
    \end{tabular}
\end{center}

\noindent When we test this hypothesis, we find that it does entail both positive examples as both trains contain a car with a closed roof. However, \texttt{h$_3$} also entails the negative example \texttt{train$_3$}. This implies that the hypothesis is too general and thus Normal Popper will generate the following generalization constraint: 

\begin{center}
    \begin{tabular}{l}
        \texttt{included$\_$clause(Cl,id$_2$) :-}\\
        \quad \texttt{head$\_$literal(Cl,eastbound,1,(V0))},\\
        \quad \texttt{body$\_$literal(Cl,has$\_$car,2,(V0,V1))},\\
        \quad \texttt{body$\_$literal(Cl,roof$\_$closed,1,(V1))},\\
        \quad \texttt{V0!=V1.}\\
        \texttt{:-}\\
        \quad \texttt{included$\_$clause(C10,id$_2$)},\\
        \quad \texttt{clause$\_$size(C10,2).}
    \end{tabular}
\end{center}

\noindent which prunes all generalizations of \texttt{h$_3$}, namely \texttt{h$_6$} and \texttt{h$_7$}. Now, our hypothesis space is left as:

\begin{center}
    \begin{tabular}{l}
        $\mathcal{H}_1$ =  $\left\{\begin{array}{l}
            \texttt{h$_4$ =} \left\{\begin{array}{l}
            \texttt{eastbound(A) :- has$\_$car(A,B),short(B),two$\_$wheels(B).}
            \end{array}\right\} \\
            \texttt{h$_8$ =} \left\{\begin{array}{l}
            \texttt{eastbound(B) :- has$\_$car(A,B),short(B),three$\_$wheels(B).} \\ 
            \texttt{eastbound(A) :- has$\_$car(A,B),has$\_$load(B,C),triangle(C).}
            \end{array}\right\} \\
        \end{array}\right\}$ 
    \end{tabular}
\end{center}

\bigskip \noindent Finally, Normal Popper will generate hypothesis \texttt{h$_4$} which successfully entails all positive examples and no negative examples, making it a solution to the problem and is thus returned.

\bigskip \noindent The following sections will discuss how these constraints are adapted into Noisy Popper and the modifications Noisy Popper makes to better handle noise.

\section{Anytime Algorithm}

The first large obstacle Normal Popper presents when trying to handle noisy can be observed in Theorem 1 from \cite{popper}:

\paragraph{Theorem 1 (Optimality)}: [Normal] Popper returns an optimal solution if one exists \cite{popper}.

\bigskip \noindent Thus, given any set of examples, Normal Popper will either return a solution which entails all examples in $E^+$ and no examples in $E^-$, overfitting if noise is present, or no hypothesis at all which equates to an empty hypothesis, i.e., a program which always returns true. To avoid returning no hypothesis, Noisy Popper is constructed as an \textit{anytime algorithm} in which a hypothesis can be returned by the system at any point in its runtime, regardless of whether or not that hypothesis is an optimal solution. 

\bigskip \noindent The approach taken in Noisy Popper consists of maintaining the \textit{best hypothesis seen so far}. That is, each hypothesis generated by Popper is scored by the $S_{ACC}$ function from Definition~\ref{def:accuracyscore} and the hypothesis of highest score is maintained by the system. In the case that the Popper algorithm is halted early or the entirety of the hypothesis space is exhausted without finding a solution, the best hypothesis so far is returned. Otherwise, if an LFF solution is found which necessarily has maximum $S_{ACC}$ score, that solution is returned as it would be in Normal Popper. This change to the Normal Popper algorithm can be see in Algorithm 2 below.

\begin{algorithm}
\caption{Noisy Popper}\label{alg:noisy}
\begin{algorithmic}[1]
    \Require $E^+$, $E^-$, $B$, $D$, $C$, $t$, max$\_$vars, max$\_$literals, max$\_$programs, max$\_$clauses (where $B$ is a set of background knowledge, $D$ is a declaration bias, $C$ is a set of constraints, and $t$ is the minimal constraint threshold)
    \Ensure Hypothesis Constraint Consistent Logic Program or Empty Set
    \State num$\_$literals $\gets 1$
    \State num$\_$programs $\gets 1$
    \State best$\_$hypothesis $\gets null$
    \State program$\_$list $\gets$ []
    \While{num$\_$literals $\leq$ max$\_$literals \textbf{and} num$\_$programs $\leq$ max$\_$programs}
        \State program $\gets$ generate($D$, $C$, max$\_$vars, num$\_$literals, max$\_$clauses)
        \If{program = 'space$\_$exhausted'}
            \State num$\_$literals $\gets$ num$\_$literals + 1
            \State continue
        \EndIf
        \State (tp, tn) $\gets$ test($E^+$, $E^-$, $B$, program)
        \If{tp = $|E^+|$ \textbf{and} tn = $|E^-|$}
            \Return program
        \EndIf
        \If{$S_{ACC}$(program, $B$, $E^+$, $E^-$) $>$ $S_{ACC}$(best$\_$hypothesis, $B,E^+,E^-$)}
            \State best$\_$program $\gets$ program
        \EndIf
        \If{tp $>$ $t|E^+|$}
            \State tp $\gets$ $|E^+|$
        \EndIf
        \If{tn $>$ $t|E^-|$}
            \State tn $\gets$ $|E^-|$
        \EndIf
        \State $C$ $\gets$ $C$ + learn$\_$constraints(program, tp, tn)
        \State $C$ $\gets$ $C$ + learn$\_$sound$\_$constraints(program,program$\_$list, $B$, $E^+$, $E^-$, tp, tn)
        \State $C$ $\gets$ $C$ + learn$\_$size$\_$constraints(program, program$\_$list, $B$, $E^+$, $E^-$, tp, tn)
        \State append(program$\_$list, program)
    \EndWhile\\
    \Return best$\_$program
\end{algorithmic}
\end{algorithm}

\bigskip \noindent Note, Noisy Popper takes a \textit{max$\_$programs} parameter which essentially gives a timeout to the algorithm, forcing it to return whatever the current best hypothesis is after that many programs have been considered.

\section{Minimal Constraints} 
An effective strategy to apply constraints is to do so minimally and only in cases where we intuitively know that the hypotheses considered are poor. For instance, if a hypothesis entails no positive examples, we can confidently conclude that the hypothesis is too specific, even if a portion of those examples are noisy. Likewise, if a hypothesis entails all negative examples, we can conclude that it is much too general regardless of noise. To this end, given a hypothesis $H$, we consider applying the typical hypothesis constraints of Normal Popper as follows:
\begin{itemize}
    \item if $H$ is totally incomplete (i.e., tp = $0$) prune all specializations of $H$ and all separable hypotheses which contain a specialization of $H$.
    \item if $H$ is totally inconsistent (i.e., tn = $0$) prune all generalizations of $H$.
    \item Otherwise, only prune $H$ from the hypothesis space
\end{itemize}

\bigskip \noindent Though these constraints are not sound as they may prune $S_{ACC}$-optimal hypotheses they have proven very effective in practice greatly improve the efficiency of the system by pruning significant chunks of the hypothesis space.

\paragraph{Minimal Constraint Threshold} 
These minimal constraints arbitrarily, though reasonably, choose a threshold at which to prune at 0, i.e., Noisy Popper should prune normally if the $tp$ or $tn$ scores equal zero. However, Noisy Popper implements this threshold as an optional hyperparameter $0 \leq t \leq 1$ representing the percentage of positive (resp. negative) examples which if entailed (resp. not entailed) by a hypothesis, no pruning will occur. More specifically, typical constraints of Normal Popper are applied for a hypothesis $H$ as follows:  
\begin{itemize}
    \item if tp $\leq t|E^+|$ prune all specializations of $H$. If tp = $0$ prune all separable hypotheses which contain a specialization of $H$.
    \item if tn $\leq t|E^-|$ prune all generalizations of $H$.
    \item Otherwise, only prune $H$ from the hypothesis space
\end{itemize}

\noindent This implementation can be seen in lines 17-22 of the Noisy Popper algorithm which alters the values of $tp$ and $tn$ to $|E^+|$ and $|E^-|$ respectively should they exceed the threshold amounts. These modified $tp$ and $tn$ values are used as arguments for the standard $learn\_constraints$ function on line 23 which produces hypothesis constraints as it did in Algorithm 1. It is common that a program may have $tp$ and $tn$ altered to $|E^+|$ and $|E^-|$ respectively, i.e., complete relaxation of Normal Popper's constraints. In these situations, the $learn$\_$constraints$ function only generates a \emph{banish constraint} to ensure that that hypothesis is removed from the search space. Without this, the algorithm may consider that same hypothesis infinitely. Noisy Popper was designed to limit the use of hyperparameters as they make many ILP systems cumbersome to use effectively. By default, $t=0$ which is often the most effective setting.

\section{Sound Hypothesis Constraints}
The $learn\_sound\_constraints$ function in line 24 of the Noisy Popper algorithm is implemented in Algorithm 3 seen below. The algorithm compares previously generated programs with the newly generated one to build up a set of hypothesis constraints which are ultimately added to the ASP constraint set $C$. For simplicity, we will often refer to these specific constraints as \textit{sound constraints}. Note that $|E^+|-1$ is used to essentially convey "some but not all positive examples" and $|E^-|-1$ likewise conveys "some but not all negative examples". Lines 3-5 correspond to Proposition~\ref{prop:sound1} and prunes generalizations of a previously seen hypothesis if they cannot have an $S_{ACC}$ score higher than the new hypothesis. Likewise, lines 6-8 correspond to Proposition~\ref{prop:sound2} and prune specializations of a previously seen hypothesis if they cannot have an $S_{ACC}$ score higher than the new hypothesis.

\bigskip \noindent Lines 9-11 correspond to Propositions~\ref{prop:sound3} and \ref{prop:sound5} and prune non-recursive supersets and non-recursive specializations of the new hypothesis which has a true positive value equal to a previous hypothesis if the new hypothesis is a superset of the previously seen one. Lines 14-16 pertain just to Proposition~\ref{prop:sound3} as we cannot prune specializations of the new hypothesis if it is not a superset of the previously hypothesis. Lines 18-21 correspond to Proposition~\ref{prop:sound4} and prune generalizations of the new hypothesis which has a true negative value equal to a previously seen hypothesis if the new hypothesis is a specialization of the previously seen one.

\bigskip \noindent Lastly, lines 23-28 correspond to Propositions~\ref{prop:sound6} and \ref{prop:sound7} and prune all generalizations of the new hypothesis if it entails all positive examples and all specializations of the new hypothesis if it entails no negative examples. 

\begin{algorithm}
\caption{Learn Sound Hypothesis Constraints}\label{alg:sound}
\begin{algorithmic}[1]
    \Require program, program$\_$list, $B$, $E^+$, $E^-$, tp, tn (where $B$ is a set of background knowledge, tp = $tp$(program,$B,E^+$) and tn = $tn$(program,$N,E^-$) as calculated in Algorithm 1)
    \Ensure Set of hypothesis constraints (may be empty)
    \State constraints $\gets$ $\{\}$
    \For{p in program$\_$list}
        \If{$S_{ACC}($program,$B,E^+,E^-) - S_{ACC}($p,$B,E^+,E^-) > |E^+| - tp($p,$B,E^+$)}
            \State constraints $\gets$ constraints + learn$\_$constraints(p, $|E^+|$, $|E^-|-1$)
        \EndIf
        \If{$S_{ACC}($program,$B,E^+,E^-)- S_{ACC}($p,$B,E^+,E^- > |E^-| - tn($p,$B,E^-$)}
            \State constraints $\gets$ constraints + learn$\_$constraints(p, $|E^+|-1$, $E^-$)
        \EndIf
            
        \If{is$\_$generalization(program, p) \textbf{and} $tp($p$,B,E^+)$ = tp}
            \If{p $\subseteq$ program}
                \State constraints $\gets$ constraints + 
                \State \quad \quad learn$\_$constraints$\_$non$\_$rec(program, $|E^+|-1$, $|E^-|-1$)
            \Else
                \State constraints $\gets$ constraints + 
                \State \quad \quad learn$\_$constraints$\_$non$\_$rec(program, $|E^+|-1$, $|E^-|$)
            \EndIf
        \EndIf
        
        \If{is$\_$specialization(program, p) \textbf{and} $tn($p,$B,E^-)$ = tn}
            \State constraints $\gets$ constraints + 
            \State \quad \quad learn$\_$constraints$\_$non$\_$rec(program, $|E^+|$, $|E^-|-1$))
        \EndIf
    \EndFor
    \If{tp = $|E^+|$}
        \State constraints $\gets$ constraints + learn$\_$constraints(program, $|E^+|$, $|E^-|-1$)
    \EndIf
    \If{tn = $|E^-|$}
        \State constraints $\gets$ constraints + learn$\_$constraints(program, $|E^+|-1$, $|E^-|$)
    \EndIf\\
    \Return constraints
\end{algorithmic}
\end{algorithm}

\bigskip \noindent In Algorithm 3, while the standard $learn\_constraints$ function is used to generate ASP constraints as normal, a specific variant function $learn\_constraints\_non\_rec$ is also used to learn constraints which specifically do not prune recursive hypotheses. In the ASP encoding, this simply requires adding \texttt{not recursive} to the constraint. So, a generalization constraint which does not prune recursive hypotheses would be defined as:

\begin{center}
    \begin{tabular}{l}
         \texttt{generalizationConstraintNonRec($\{$Clause$_1$, Clause$_2$,...,Clause$_n\}$) :=}\\
         \quad \texttt{inclusionRule(Clause$_1$),...,}\\
         \quad \texttt{inclusionRule(Clause$_n$)}.\\
         \quad \texttt{:-exactClause(C1$_1$,Clause$_1$),...,}\\
         \quad \quad \texttt{exactClause(C1$_n$,Clause$_n$)},\\
         \quad \quad \texttt{not recursive.}
    \end{tabular}
\end{center}

\bigskip \noindent and likewise, an analogous specialization constraint would be defined as:

\begin{center}
    \begin{tabular}{l}
        \texttt{specializationConstraintNonRec($\{$Clause$_1$, Clause$_2$,...,Clause$_n\}$) :=}\\
        \quad \texttt{inclusionRule(Clause$_1$),...,}\\
        \quad \texttt{inclusionRule(Clause$_n$)}.\\
        \quad \texttt{:-included$\_$clause(C1$_1$,clauseIdent(Clause$_1$)),...,}\\
        \quad \quad\texttt{included$\_$clause(C1$_n$,clauseIdent(Clause$_n$)),}\\
        \quad \quad \texttt{assertDistinct($\{$Cl$_1$,...,Cl$_n\}$),} \\
        \quad \quad \texttt{not clause(n), not recursive.}
    \end{tabular}
\end{center}

\bigskip \noindent Additionally, the $is\_generalization$ and $is\_specialization$ functions are specially implemented for Noisy Popper and check for a version of subsumption as Definitions~\ref{def:generalization} and \ref{def:specialization} outline, i.e., if every clause in $H_1$ is subsumed by some clause in $H_2$, then $H_2$ is a generalization of $H_1$ and $H_1$ is a specialization of $H_2$. Notably, checking subsumption is NP-complete \cite{kapur1986np} and these functions do not check for variable substitutions.

\section{Sound Constraints with Hypothesis Size}
The $learn\_size\_constraints$ function in line 25 of the Noisy Popper algorithm is implemented below in Algorithm 4. Like with Algorithm 3, this algorithm builds a list of hypothesis constraints using hypothesis size by comparing previously generated programs with the newly generated one. We will often refer to these constraints as simply \textit{size constraints}. These constraints are ultimately added to the ASP constraint set $C$ in the Noisy Popper algorithm and used to prune the hypothesis space of particularly large hypotheses.

\begin{algorithm}
\caption{Learn Sound Hypothesis Constraints with Hypothesis Size}\label{alg:size}
\begin{algorithmic}[1]
    \Require program, program$\_$list, $B$, $E^+$, $E^-$, tp, tn (where $B$ is a set of background knowledge, tp = $tp$(program,$B,E^+$) and tn = $tn$(program,$N,E^-$) as calculated in Algorithm 1)
    \Ensure Set of hypothesis constraints (may be empty)
    \State constraints $\gets$ $\{\}$
    \For{p in program$\_$list}
        \State gen$\_$size$_1$ $\gets$ $|E^+| + tn($p,$B,E^-)$ $-$ $S_{MDL}($program,$B,E^+,E^-)$
        \State constraints $\gets$ constraints +
        \State \quad \quad learn$\_$constraints$\_$with$\_$size(p, $|E^+|$, $|E^-|-1$ gen$\_$size$_1$)
        \State spec$\_$size$_1$ $\gets$ $|E^-| + tp$(p,$B,E^+)$ $-$ $S_{MDL}($program,$B,E^+,E^-)$
        \State constraints $\gets$ constraints +
        \State \quad \quad learn$\_$constraints$\_$with$\_$size(p, $|E^+|-1$, $|E^-|$, spec$\_$size$_1$)
    \EndFor
        
    \State gen$\_$size$_2$ $\gets$ $|E^+| - $ tp + $size($program)
    \State constraints $\gets$ constraints +
    \State \quad \quad learn$\_$constraints$\_$with$\_$size(program, $|E^+|$, $|E^-|-1$, gen$\_$size$_2$)
    \State spec$\_$size$_2$ $\gets$ $|E^-| -$ tn + $size($program)
    \State constraints $\gets$ constraints +
    \State \quad \quad learn$\_$constraints$\_$with$\_$size(program, $|E^+|-1$, $|E^-|$, spec$\_$size$_2$)\\
    \Return constraints
\end{algorithmic}
\end{algorithm}

\bigskip \noindent Lines 3-5 correspond to Proposition~\ref{prop:mdl1} and prune all generalizations of a previously seen hypotheses of particular size as they cannot have an $S_{MDL}$ score greater than that of the newly generated program and are thus not $S_{MDL}$-optimal. Likewise, lines 6-8 correspond to Proposition~\ref{prop:mdl2} and prune all specializations of a previously seen hypothesis of particular size as they cannot be $S_{MDL}$-optimal. Note that even in the case where the new hypothesis performs exceptionally poorly, these will still generate hypothesis constraints and remove exceptionally large generalizations and specializations from the hypothesis space. Lines 9-11 correspond to Proposition~\ref{prop:mdl3} and prunes all generalizations of the new hypothesis $H$ with size greater than $fn(H,B,E^+) + size(H)$ as these can never have a greater $S_{MDL}$ score than $H$ and are thus not $S_{MDL}$-optimal. Likewise, lines 12-14 correspond to Proposition~\ref{prop:mdl4} and prunes all specializations of the new hypothesis $H$ with size greater than $fp(H,B,E^-) + size(H)$ as these cannot be $S_{MDL}$-optimal. 

\bigskip \noindent Like with Algorithm 3, Algorithm 4 also introduces a new modified version of the $learn\_constraints$ called $learn\_constraints\_with\_size$ which generates variants of the typical ASP constraints, taking an additional size argument. These ASP constraints only prune hypothesis which have a size greater than the given size argument. A generalization constraint which only prunes hypotheses of particular size would be defined as:

\begin{center}
    \begin{tabular}{l}
         \texttt{generalizationConstraintWithSize($\{$Clause$_1$, Clause$_2$,...,Clause$_n\}$, size) :=}\\
         \quad \texttt{inclusionRule(Clause$_1$),...,}\\
         \quad \texttt{inclusionRule(Clause$_n$)}.\\
         \quad \texttt{:-exactClause(C1$_1$,Clause$_1$),...,}\\
         \quad \quad \texttt{exactClause(C1$_n$,Clause$_n$)},\\
         \quad \quad \texttt{program$\_$size(N), size < N.}
    \end{tabular}
\end{center}

\noindent where \texttt{program$\_$size(N)} holds true only if the number of body literals in the given program equals \texttt{N}. Likewise, an analogous specialization constraint would be defined as:

\begin{center}
    \begin{tabular}{l}
        \texttt{specializationConstraintWithSize($\{$Clause$_1$, Clause$_2$,...,Clause$_n\}$, size) :=}\\
        \quad \texttt{inclusionRule(Clause$_1$),...,}\\
        \quad \texttt{inclusionRule(Clause$_n$)}.\\
        \quad \texttt{:-included$\_$clause(C1$_1$,clauseIdent(Clause$_1$)),...,}\\
        \quad \quad\texttt{included$\_$clause(C1$_n$,clauseIdent(Clause$_n$)),}\\
        \quad \quad \texttt{assertDistinct($\{$Cl$_1$,...,Cl$_n\}$),} \\
        \quad \quad \texttt{not clause(n), program$\_$size(N), size < N.}
    \end{tabular}
\end{center}

\paragraph{Learning from All Previous Hypotheses}
The $learn\_sound\_constraints$ and $learn\_size\_constraints$ algorithms takes a list of programs with which to compare the most recently generated hypothesis with. From line 26 in the Noisy Popper algorithm, we can see that this list is composed of all previously seen hypotheses that the system has generated to that point. While the original motivation behind the sound constraints was to learn from the best hypothesis as it was continuously being maintained, Propositions~\ref{prop:sound1}-\ref{prop:mdl4} all hold when comparing any two hypotheses. Thus, we can generate additional constraints by comparing any new hypothesis with a running list of all previously encountered and scored hypotheses. We motivate this through an example:

\begin{example}[Comparing to All Previous Hypotheses]
    Consider an east-west trains problem with BK $B$ and $|E^+| = |E^-| = 5$. Assume we have already observed the following hypotheses:
    
    \begin{center}
        \begin{tabular}{l}
            \texttt{h$_1$ =} $\left\{\begin{array}{l}
            \texttt{eastbound(A) :- has$\_$car(A,B),short(B).}
            \end{array}\right\}$ \\
        \end{tabular}
        \begin{tabular}{l}
            \texttt{h$_2$ =} $\left\{\begin{array}{l}
            \texttt{eastbound(A) :- has$\_$car(A,B),long(B).}
            \end{array}\right\}$ \\
        \end{tabular}
    \end{center}
    
    \noindent where $tp($\texttt{h$_1$},$B,E^+) = 2$, $tn($\texttt{h$_1$},$B,E^-) = 0$,  $tp($\texttt{h$_2$},$B,E^+) = 0$, and $tn($\texttt{h$_2$},$B,E^-) = 2$. Now, consider the next hypothesis generated is \texttt{h$_3$ = \{eastbound(A) :- has$\_$car(A,B), three$\_$wheels(B).\}} which has $S_{ACC}($\texttt{h$_3$},$B,E^+,E^-)$ = 6. 
    
    \noindent Since $S_{ACC}($\texttt{h$_3$},$B,E^+,E^-) - S_{ACC}($\texttt{h$_1$},$B,E^+,E^-) > |E^+| - tp($\texttt{h$_1$},$B,E^+)$, by Proposition~\ref{prop:sound1} we may prune all generalizations of \texttt{h$_1$}. Similarly, since \\ $S_{ACC}($\texttt{h$_3$},$B,E^+,E^-) - S_{ACC}($\texttt{h$_2$},$B,E^+,E^-) > |E^-| - tn($\texttt{h$_2$},$B,E^-)$, by Proposition~\ref{prop:sound2} we may prune all specializations of \texttt{h$_1$}. If we had not maintained both \texttt{h$_1$} and \texttt{h$_2$}, we would have only been able to identify one of these hypothesis constraints. Thus, the number of hypothesis constraints we can generate can increase as more hypotheses are maintained for comparison. 
\end{example}

\bigskip \noindent Comparing to all previous hypotheses provides the system with noticeable improvement in practice though at the cost of significant inefficiencies as repeated subsumption checks are taxing. Steps are taken in the Noisy Popper implementation to avoid redundant constraint generation as much as possible. Lists are maintained to keep track of programs which have had their generalizations and specializations pruned. Should a program have its generalizations pruned, it is removed from the respective list to ensure it is not checked again and similar actions are taken for specializations. Programs removed from the generalizations list additionally cannot generate any generalization constraints with hypothesis size as these would be similarly redundant and likewise for specialization constraints with hypothesis size. Even with these changes, we still may loop over every previously seen hypothesis with each generate-test-constrain loop of Algorithm 2. Thus, given $N$ hypothesis in the hypothesis space, we may make $O(N^2)$ total hypotheses comparisons in both Algorithm 3 and Algorithm 4. Each comparison may additional check for incomplete subsumption as specified previously which, if we assume each program has at most $C$ clauses each with at most $L$ literals takes $O((CL)^2)$. Thus, the additional code used to modify Normal Popper to Noisy Popper has a worst-case runtime of $O((NCL)^2)$. Again, checking subsumption in full is NP-complete \cite{kapur1986np} and we are using incomplete subsumption checks here.

\section{Noisy Popper Worked Example}
To illustrate how Noisy Popper works, we will consider another east-west trains problem. Again assume we are trying to find the hypothesis \texttt{eastbound(A) :- has$\_$car(A,B), short(B), two$\_$wheels(B).} We will also assume that $t = 0$ is the minimal constraint threshold used and consider only a small initial hypothesis space, $\mathcal{H}_2$:

\begin{center}
    \begin{tabular}{l}
        $\mathcal{H}_2$ =  $\left\{\begin{array}{l}
            \texttt{h$_1$ =} \left\{\begin{array}{l}
            \texttt{eastbound(A) :- has$\_$car(A,B),long(B).}
            \end{array}\right\} \\
            \texttt{h$_2$ =} \left\{\begin{array}{l}
            \texttt{eastbound(A) :- has$\_$car(A,B),long(A),two$\_$wheels(B).}
            \end{array}\right\} \\
            \texttt{h$_3$ =} \left\{\begin{array}{l}
            \texttt{eastbound(A) :- has$\_$car(A,B),short(B).}
            \end{array}\right\} \\
            \texttt{h$_4$ =} \left\{\begin{array}{l}
            \texttt{eastbound(A) :- has$\_$car(A,B),short(B),two$\_$wheels(B).}
            \end{array}\right\} \\
            \texttt{h$_5$ =} \left\{\begin{array}{l}
            \texttt{eastbound(A) :- has$\_$car(A,B),long(B),roof$\_$closed(B).}
            \end{array}\right\} \\
            \texttt{h$_6$ =} \left\{\begin{array}{l}
            \texttt{eastbound(A) :- has$\_$car(A,B),roof$\_$closed(B),three$\_$wheels(B).}\\
            \texttt{eastbound(A) :- has$\_$car(A,B),short(B).}
            \end{array}\right\} \\
            \texttt{h$_7$ =} \left\{\begin{array}{l}
            \texttt{eastbound(A) :- has$\_$car(A,B),short(B),two$\_$wheels(B).}\\
            \texttt{eastbound(A) :- has$\_$car(A,B),roof$\_$closed(B),two$\_$wheels(B).}
            \end{array}\right\} \\
            \texttt{h$_8$ =} \left\{\begin{array}{l}
            \texttt{eastbound(A) :- has$\_$car(A,B),long(B).} \\ 
            \texttt{eastbound(A) :- has$\_$car(A,B),short(B),three$\_$wheels(D,B).} \\ 
            \texttt{eastbound(A) :- has$\_$car(A,B),short(B),two$\_$wheels(B).}
            \end{array}\right\} \\
        \end{array}\right\}$
    \end{tabular}
\end{center}

\bigskip \noindent We will also assume similar sets of examples as in the worked example in Section~\ref{sec:normalworkedexample}, but with the addition of a noisy positive examples \texttt{eastbound(train$_5$).}:

\begin{center}
    \begin{tabular}{l}
        $E^+ = \{$\texttt{eastbound(train$\_$1).}, \texttt{eastbound(train$\_$2).,eastbound(train$_5$).}$\}$\\
        $E^- = \{$\texttt{eastbound(train$\_$3).}, \texttt{eastbound(train$\_$4).}$\}$\\
    \end{tabular}
\end{center}

\noindent where the BK is again the same as in Section~\ref{sec:normalworkedexample} but with the added facts for \texttt{train$_5$}

\begin{center}
    \texttt{has$\_$car(train$_5$, car$_6$)., two$\_$wheels(car$_6$), roof$\_$closed(car$_6$).}\\
\end{center}

\noindent Noisy Popper will first generate hypothesis \texttt{h$_1$ = \{eastbound(A):-has$\_$car(A,B),long(B).\}} from the hypothesis space. Since no train contains a long car, no example is entailed, positive or negative, giving $S_{ACC} = 2$. Being the first hypothesis considered, this is saved as the best hypothesis. Because \texttt{h$_1$} entails no positive examples and no negative examples, the minimal constraints will not alter the outcome from (tp = 0, tn = 2). This means that a specialization constraint and elimination constraint are generated as they are in Section~\ref{sec:normalworkedexample}, pruning all specializations of \texttt{h$_1$}, namely \texttt{h$_2$} and \texttt{h$_5$} as well as all separable hypotheses which contain specializations of \texttt{h$_1$}, namely \texttt{h$_8$} (line 23 of Algorithm 2). 

\bigskip \noindent \noindent Note that the $learn\_sound\_constraints$ function would normally produce an identical specialization constraint as above since $tn($\texttt{h$_1$},$B,E^-)$ $= |E^-|$ (lines 26-27 of Algorithm 3), but this is avoided in implementation as it would be redundant. Similarly, the $learn\_size\_constraints$ function would produce constraints pruning all generalizations of \texttt{h$_1$} with size greater than $|E^+| - tp($\texttt{h$_1$},$B,E^+)$ $+$ $size$(\texttt{h$_1$}) = 5 (lines 10-12 of Algorithm 4) and pruning all specializations of \texttt{h$_1$} with size greater than $|E^-| - tn($\texttt{h$_1$},$B,E^-)$ $+$ $size$(\texttt{h$_1$}) = 2 (lines 13-15 of Algorithm 4), but this last constraint is also avoided in implementation as it is redundant. These functions produce no other constraints as there are no previous programs with which to compare \texttt{h$_1$} to. This leaves the hypothesis space as:

\begin{center}
    \begin{tabular}{l}
        $\mathcal{H}_2$ =  $\left\{\begin{array}{l}
            \texttt{h$_3$ =} \left\{\begin{array}{l}
            \texttt{eastbound(A) :- has$\_$car(A,B),short(B).}
            \end{array}\right\} \\
            \texttt{h$_4$ =} \left\{\begin{array}{l}
            \texttt{eastbound(A) :- has$\_$car(A,B),short(B),two$\_$wheels(B).}
            \end{array}\right\} \\
            \texttt{h$_6$ =} \left\{\begin{array}{l}
            \texttt{eastbound(A) :- has$\_$car(A,B),roof$\_$closed(B),three$\_$wheels(B).}\\
            \texttt{eastbound(A) :- has$\_$car(A,B),short(B).}
            \end{array}\right\} \\
             \texttt{h$_7$ =} \left\{\begin{array}{l}
            \texttt{eastbound(A) :- has$\_$car(A,B),short(B),two$\_$wheels(B).}\\
            \texttt{eastbound(A) :- has$\_$car(A,B),roof$\_$closed(B),two$\_$wheels(B).}
            \end{array}\right\} \\
        \end{array}\right\}$
    \end{tabular}
\end{center}

\bigskip \noindent Noisy Popper will next generate hypothesis \texttt{h$_2$ = \{estbound(A):-has$\_$car(A,B),short(B).\}}. Since \texttt{train$_1$}, \texttt{train$_2$}, and \texttt{train$_3$} contain short cars, two positive and one negative example will be entailed, giving $S_{ACC}$(\texttt{h$_2$},$B,E^+,E^-)$ = 3. Noisy Popper will replace \texttt{h$_1$} with \texttt{h$_2$} as the new best hypothesis. Since the $tp($\texttt{h$_2$},$B,E^+) > 0$ and $tn($\texttt{h$_2$},$B,E^-) > 0$, in Normal Popper the generalizations and specializations of \texttt{h$_2$} would be pruned, including the true solution \texttt{h$_4$}. However, due to the constraint relaxation, the values of $tp$ and $tn$ are changed to $|E^+|$ and $|E^-|$ respectively (lines 17-22 of Algorithm 2), so only a banish constraint is generated by $learn\_constraints$ in line 23 of Algorithm 2. No constraints are generated by $learn\_sound\_constraints$ and $learn\_size\_constraints$ as these are all again redundant or do not effect our hypothesis space. The only program removed is \texttt{h$_2$} through the banish constraint leaving the hypothesis space as:

\begin{center}
    \begin{tabular}{l}
        $\mathcal{H}_2$ =  $\left\{\begin{array}{l}
            \texttt{h$_4$ =} \left\{\begin{array}{l}
            \texttt{eastbound(A) :- has$\_$car(A,B),short(B),two$\_$wheels(B).}
            \end{array}\right\} \\
            \texttt{h$_6$ =} \left\{\begin{array}{l}
            \texttt{eastbound(A) :- has$\_$car(A,B),roof$\_$closed(B),three$\_$wheels(B).}\\
            \texttt{eastbound(A) :- has$\_$car(A,B),short(B).}
            \end{array}\right\} \\
             \texttt{h$_7$ =} \left\{\begin{array}{l}
            \texttt{eastbound(A) :- has$\_$car(A,B),short(B),two$\_$wheels(B).}\\
            \texttt{eastbound(A) :- has$\_$car(A,B),roof$\_$closed(B),two$\_$wheels(B).}
            \end{array}\right\} \\
        \end{array}\right\}$ 
    \end{tabular}
\end{center}

\bigskip \noindent The next hypothesis generated is \texttt{h$_4$ = \{eastbound(A):-has$\_$car(A,B),\\short(B),two$\_$wheels(B).\}} which entails no negative examples and all but the single noisy positive example. This gives $S_{ACC}$(\texttt{h$_4$)} = 4 and thus it will be maintained as the new best hypothesis, though since it does not entail all positive and no negative examples, it is not immediately returned. Again, due to the constraint relaxation, the outcome is values for $tp$ and $tn$ are changed to $|E^+|$ and $|E^-|$ respectively, avoiding pruning all specializations of \texttt{h$_4$} as would be done in Normal Popper (lines 17-22 of Algorithm 2). However, all specializations of \texttt{h$_4$} are pruned regardless as it entails no negative examples (lines 30-31 of Algorithm 3). $learn\_size\_constraints$ will generate a constraint which prunes all generalizations of previously seen hypothesis \texttt{h$_3$} with size greater than $|E^+| + tn$(\texttt{h$_3$},$B,E^-)$ $-$ $S_{MDL}$(\texttt{h$_4$},$B,E^+,E^-) = 4$ (lines 3-5 of Algorithm 4). The generalization with size constraint created would be:

\begin{center}
    \begin{tabular}{l}
        \texttt{included$\_$clause(Cl,id$_1$) :-}\\
        \quad \texttt{head$\_$literal(Cl,eastbound,1,(V0))},\\
        \quad \texttt{body$\_$literal(Cl,has$\_$car,2,(V0,V1))},\\
        \quad \texttt{body$\_$literal(Cl,short,1,(V1))},\\
        \quad \texttt{V0!=V1, V0!=V2, V1!=V2.}\\
        \texttt{:-}\\
        \quad \texttt{included$\_$clause(C10,id$_2$)},\\
        \quad \texttt{clause$\_$size(C10,2),}\\
        \quad \texttt{program$\_$size(N), 4 < N.}
    \end{tabular}
\end{center}

\noindent which would prune \texttt{h$_6$} from the hypothesis space. $learn\_size\_constraints$ would also generate a constraint which prunes all generalizations of \texttt{h$_4$} of size greater than $|E^+| - tp$(\texttt{h$_4$},$B,E^+$) $+$ $size$(\texttt{h$_4$}) = 4 (lines 10-12 of Algorithm 4). The generalization with size constraint generated would be similar to the one above but with one additional literal encoded. This prunes \texttt{h$_7$} from the hypothesis space which is notable as this hypothesis perfectly fits the data, but would overfit as it would entail the noisy positive example. Lastly, \texttt{h$_4$} itself is pruned from the hypothesis space via a banish constraint leaving $\mathcal{H}_2$ empty. Since no solution was found, the best maintained hypothesis is returned instead, meaning that \texttt{h$_4$} is correctly returned.

\section{Summary}
In this chapter, we discussed the implementation details of the Normal Popper system including its generate-test-constraint loop and how it encodes programs and hypothesis constraints into an ASP problem. We then discussed how Noisy Popper modifies Normal Popper to be an anytime algorithm and demonstrated how it implements unsound minimal constraints to prune typically suboptimal hypothesis. Next, we described the algorithm which generates sound hypothesis constraints or \textit{sound constraints} based on Propositions~\ref{prop:sound1}-\ref{prop:sound7} in Chapter 4 including necessary changes to the ASP constraint encodings. We likewise described the algorithm which generates sound hypothesis constraints which take into account hypothesis size or \textit{size constraints} under the MDL scoring based on Propositions~\ref{prop:mdl1}-\ref{prop:mdl4}. Finally, we demonstrated Noisy Popper through a worked example. In the next chapter, we will describe the experiments and results used to compare Noisy Popper to Normal Popper and experiments which compare the effectiveness of the individual components of Noisy Popper. 

\end{chapter}
\begin{chapter}
{Experimental Results}
In this chapter we will empirically explore the capabilities of Noisy Popper. Namely, we aim to determine the validity of the claims made in Chapter 1 which stated that Noisy Popper is more capable of generalizing to noisy data than Normal Popper and that without noise, Noisy Popper still generalizes well, though less efficiently than Normal Popper. To this end, the following experimental questions are formulated:

\bigskip \noindent \textbf{Q1.} How well does Noisy Popper generalize to datasets with varying levels of noise in comparison to Normal Popper?

\bigskip \noindent To answer this question, we compare the two systems directly on several problems commonly found in the literature. We will compare purely their accuracies over various amounts of noise including without noise. Though we will briefly compare the two systems as they are, this is a slightly unfair comparison as Normal Popper will typically return an empty solution in the presence of noise. Thus, for most of the experiments we will enhance Normal Popper as an anytime algorithm as we did for Noisy Popper.

\bigskip \noindent \textbf{Q2.} How inefficient is Noisy Popper in comparison to Normal Popper?

\bigskip \noindent To answer this question, we will again test the two systems against several datasets, this time measuring the time it takes for the systems to complete. 

\bigskip \noindent A natural question regarding the effectiveness of Noisy Popper is to what degree do each enhancement impacts learning, both in speed and accuracy. Thus, the following question should be posed:

\bigskip \noindent \textbf{Q3.} How significantly does each enhancement within Noisy Popper impact its learning capabilities and efficiency. 

\bigskip \noindent To answer this question, we will evaluate Noisy Popper as a whole with versions of Noisy Popper without (i) minimal constraints (ii) sound constraints and (iii) size constraints in addition to a completely relaxed brute force version of Normal Popper, \textit{Enumerate}. The accuracies and completion times of each system will be compared to determine the impact of each in various settings.

\section{Noisy Popper vs. Normal Popper}
The purpose of this first set of experiments is to evaluate how well Noisy Popper generalizes to noisy and noiseless datasets in comparison to Normal Popper, measuring both the predictive accuracy of the systems as well as the time it take both to run. We will evaluate both systems over several diverse problem sets commonly found in the literature: two Michalski's east-west trains problem variants, several program synthesis list transformation problems, and two inductive general game playing (IGGP \cite{cropper2019inductive}) problems. These datsets will also be used for experiments in the following sections.

\subsection{Experiment 1: East-West Trains}
This series of problems consists of learning \texttt{eastbound} target predicates for two variations on Michalski's east-west trains problem as described in Chapter 1 and used throughout this paper. Such problems are easy for a system to overfit and will help determine how effective Noisy Popper is at generalizing to noisy data.

\paragraph{Materials}
Noisy Popper and Normal Popper will be evaluated on two different east-west trains problems with data generated from the following ground truth hypotheses:

\begin{center}
    \begin{tabular}{l}
        \texttt{h$_1$ =} $\left\{\begin{array}{l}
        \texttt{eastbound(A) :-}\\ 
        \texttt{has$\_$car(A,C),long(C),roof$\_$closed(C),has$\_$car(A,B),three$\_$wheels(B).}
        \end{array}\right\}$ \\
        \texttt{h$_2$ =} $\left\{\begin{array}{l}
        \texttt{eastbound(A) :-}\\ \texttt{has$\_$car(A,C),roof$\_$open(C),has$\_$car(A,B),roof$\_$closed(B).}
        \end{array}\right\}$
    \end{tabular}
\end{center}

\noindent Both systems are given identical BK containing the descriptions of all 999 trains via relations \texttt{has$\_$car/2, has$\_$load/2, short/1, long/1, two$\_$wheels/1, three$\_$wheels,} etc.

\bigskip \noindent The language biases for Normal and Noisy Popper restrict hypotheses to at most six unique variables, at most size body literals per clause, and at most three clauses. The systems are given type and directions (i.e., input or output) for arguments of each predicate. The minimal constraint threshold for Noisy Poppper is set to its default $t = 0$. For one experiment, Normal Popper will be run as is with no modification. This will adequately demonstrate Normal Popper's inability to generalize at all to noisy data as it will be unable to find an LFF solution before the given system timeout. Out of fairness, the rest of the experiments here and moving forward will run Normal Popper enhanced as an anytime algorithm which, like Noisy Popper, maintains its best seen hypothesis and returns it if no LFF solution is found. Normal Popper will still generate constraints as normal with this enhancement.

\paragraph{Methods}
For each hypothesis above, 50 positive and 50 negative randomly selected examples will be generated for training while 200 positive and 200 negative randomly selected examples will be generated for testing. Example trains are selected randomly using the two hypotheses from the pool of 999 defined in the BK. Should a system fail to return any hypothesis, we will assume that all examples are entailed giving a default predictive accuracy of 50$\%$ in this instance. A timeout of ten minutes and a limit of 200 generated programs is enforced per task which ensures both systems can learn from the same number of programs. The predictive accuracy and learning times are recorded for each task and each experiment will repeat the task ten times with the means and standard errors plotted. Each experiment will be repeated for training noise levels from 0$\%$ to 40$\%$ in increments of 10$\%$ with 5$\%$ additionally being tested. The test sets will remain noiseless.

\paragraph{Results and Analysis}
Table~\ref{table:1} below shows that when compared to Normal Popper unenhanced by an anytime algorithm, Noisy Popper far exceeds its predictive accuracy. Normal Popper achieving 50$\%$ accuracy indicates that the was unable to find an LFF solution to the task in the 200 program limit allotted and thus is given the default predictive accuracy. This is an expected and uninteresting result as Normal Popper's inability to return non-LFF solutions has already been discussed.

\begin{table}[!ht]
    \centering
    \resizebox{0.6\columnwidth}{!}{%
    \begin{tabular}{c|c|c|c} 
    \toprule
    \textbf{Target Program} & \textbf{Training Noise ($\%$)} & \textbf{Normal Popper} & \textbf{Noisy Popper} \\
    \midrule
    \multirow{4}{*}{\texttt{h$_1$}} & 0 & \textbf{100}$\pm{0}$ & \textbf{100}$\pm{0}$ \\
    & 5 & 50$\pm{0}$ & \textbf{100}$\pm{0}$ \\
    & 10 & 50$\pm{0}$ & \textbf{100}$\pm{0}$ \\
    & 20 & 50$\pm{0}$ & \textbf{100}$\pm{0}$ \\
    \midrule
    \multirow{4}{*}{\texttt{h$_2$}} & 0 & \textbf{100}$\pm{0}$ & \textbf{100}$\pm{0}$ \\
    & 5 & 50$\pm{0}$ & \textbf{100}$\pm{0}$ \\
    & 10 & 50$\pm{0}$ & \textbf{100}$\pm{0}$ \\
    & 20 & 50$\pm{0}$ & \textbf{100}$\pm{0}$ \\
    \midrule
    \end{tabular}}
    \caption{East-West Trains predictive accuracy for programs \texttt{h$_1$} and \texttt{h$_2$} with Normal Popper not enhanced as an anytime algorithm. The error is standard}
    \label{table:1}
\end{table}

\bigskip \noindent Figures~\ref{fig:2} and \ref{fig:3} below compare Normal Popper enhanced as an anytime algorithm with Noisy Popper. In terms of predictive accuracy, Noisy Popper outperforms Normal Popper at all noise levels greater than 0$\%$ for both problems and ties as expected at 0$\%$ noise with perfect predictive accuracy. A McNemar's \cite{lachenbruch2014mcnemar} test on the Noisy and Normal Popper predictive accuracy additionally confirmed the significance at the $p < 0.001$ level for both problems. For most noise levels, Normal Popper typically either overfits the data, returning a hypothesis with several extra clauses, or underfits the data having pruned the correct solution from the hypothesis space early on. Noisy Popper's relaxed setting helps avoid this over pruning. Over 30$\%$ noise, Noisy Popper also begins overfitting the data though still produces higher predictive accuracy than Normal Popper. Normal Popper consistently ran much quicker than Noisy Popper, running in under two seconds regardless of noise level. Noisy Popper's learning time is much more volatile, dependent on the number of constraints Noisy Popper generates. Precise reasons for this inefficiency are discussed in the second section of experiments. However, with no noise, both systems take roughly the same amount of time to learn the correct solution. Overall, these results suggest that the answer to \textbf{Q1} is that Noisy Popper generalizes better than Normal Popper under and generalizes as well as Normal Popper when no noise is present. It also suggests that the answer is \textbf{Q2} is that Noisy Popper is significantly less efficient than Normal Popper and can be at most around 20 times slower than Normal Popper with these datasets.

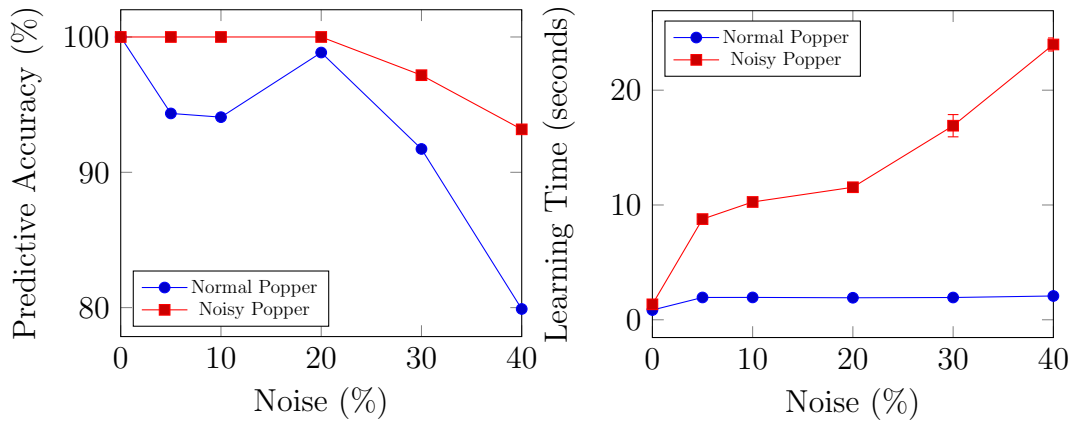
\begin{figure}[ht]
    \begin{subfigure}{.45\linewidth}
    \centering
        \begin{tikzpicture}
        \begin{axis}[
        width=\linewidth,
        legend pos=south west,
        legend style={nodes={scale=0.6, transform shape}},
        xmin=0,
        xmax=40,
        xlabel={Noise ($\%$)},
        ylabel={Predictive Accuracy ($\%$)}]
        
        \addplot+[error bars/.cd,y dir=both,y explicit]
        table [
        x expr=\thisrow{x_data} * 100,
        y=y_data,
        col sep=comma,
        y error plus expr=\thisrow{error},y error minus expr=\thisrow{error},
        ] {data/trains1-acc-vs-noise-normal.csv};
        
        \addplot+[color=red,mark=square*,error bars/.cd,y dir=both,y explicit]
        table [
        x expr=\thisrow{x_data} * 100,
        y=y_data,
        col sep=comma,
        y error plus expr=\thisrow{error},y error minus expr=\thisrow{error},
        ] {data/trains1-acc-vs-noise-noisy.csv};
        
        \legend{{Normal Popper}, {Noisy Popper}}
        \end{axis}
        \end{tikzpicture}
    \end{subfigure}
    \begin{subfigure}{.45\linewidth}
    \centering
        \begin{tikzpicture}
        \begin{axis}[
        width=\linewidth,
        legend pos=north west,
        legend style={nodes={scale=0.6, transform shape}},
        xmin=0,
        xmax=40,
        xlabel={Noise ($\%$)},
        ylabel={Learning Time (seconds)}]
        
        \addplot+[error bars/.cd,y dir=both,y explicit]
        table [
        x expr=\thisrow{x_data} * 100,        y=y_data,
        col sep=comma,
        y error plus expr=\thisrow{error},y error minus expr=\thisrow{error},
        ] {data/trains1-time-vs-noise-normal.csv};
        
        \addplot+[color=red,mark=square*,error bars/.cd,y dir=both,y explicit]
        table [
        x expr=\thisrow{x_data} * 100,
        y=y_data,
        col sep=comma,
        y error plus expr=\thisrow{error},y error minus expr=\thisrow{error},
        ] {data/trains1-time-vs-noise-noisy.csv};
        
        \legend{{Normal Popper}, {Noisy Popper}}
        \end{axis}
        \end{tikzpicture}
    \end{subfigure}
\caption{East-West Trains predictive accuracy and learning time (in seconds) for program \texttt{h$_1$} when varying percentage of noisy training data. Standard error is depicted by bars.}
\label{fig:2}
\end{figure}

\begin{figure}[ht]
    \begin{subfigure}{.45\linewidth}
    \centering
        \begin{tikzpicture}
        \begin{axis}[
        width=\linewidth,
        legend pos=south west,
        legend style={nodes={scale=0.6, transform shape}},
        xmin=0,
        xmax=40,
        xlabel={Noise ($\%$)},
        ylabel={Predictive Accuracy ($\%$)}]
        
        \addplot+[error bars/.cd,y dir=both,y explicit]
        table [
        x expr=\thisrow{x_data} * 100,
        y=y_data,
        col sep=comma,
        y error plus expr=\thisrow{error},y error minus expr=\thisrow{error},
        ] {data/trains2-acc-vs-noise-normal.csv};
        
        \addplot+[color=red,mark=square*,error bars/.cd,y dir=both,y explicit]
        table [
        x expr=\thisrow{x_data} * 100,
        y=y_data,
        col sep=comma,
        y error plus expr=\thisrow{error},y error minus expr=\thisrow{error},
        ] {data/trains2-acc-vs-noise-noisy.csv};
        
        \legend{{Normal Popper}, {Noisy Popper}}
        \end{axis}
        \end{tikzpicture}
    \end{subfigure}
    \begin{subfigure}{.45\linewidth}
    \centering
        \begin{tikzpicture}
        \begin{axis}[
        width=\linewidth,
        legend pos=north west,
        legend style={nodes={scale=0.6, transform shape}},
        xmin=0,
        xmax=40,
        xlabel={Noise ($\%$)},
        ylabel={Learning Time (seconds)}]
        
        \addplot+[error bars/.cd,y dir=both,y explicit]
        table [
        x expr=\thisrow{x_data} * 100,
        y=y_data,
        col sep=comma,
        y error plus expr=\thisrow{error},y error minus expr=\thisrow{error},
        ] {data/trains2-time-vs-noise-normal.csv};
        
        \addplot+[color=red,mark=square*,error bars/.cd,y dir=both,y explicit]
        table [
        x expr=\thisrow{x_data} * 100,
        y=y_data,
        col sep=comma,
        y error plus expr=\thisrow{error},y error minus expr=\thisrow{error},
        ] {data/trains2-time-vs-noise-noisy.csv};
        
        \legend{{Normal Popper}, {Noisy Popper}}
        \end{axis}
        \end{tikzpicture}
    \end{subfigure}
    \caption{East-West Trains predictive accuracy and learning time (in seconds) for program \texttt{h$_2$} when varying percentage of noisy training data. Standard error is depicted by bars.}
\label{fig:3}
\end{figure}
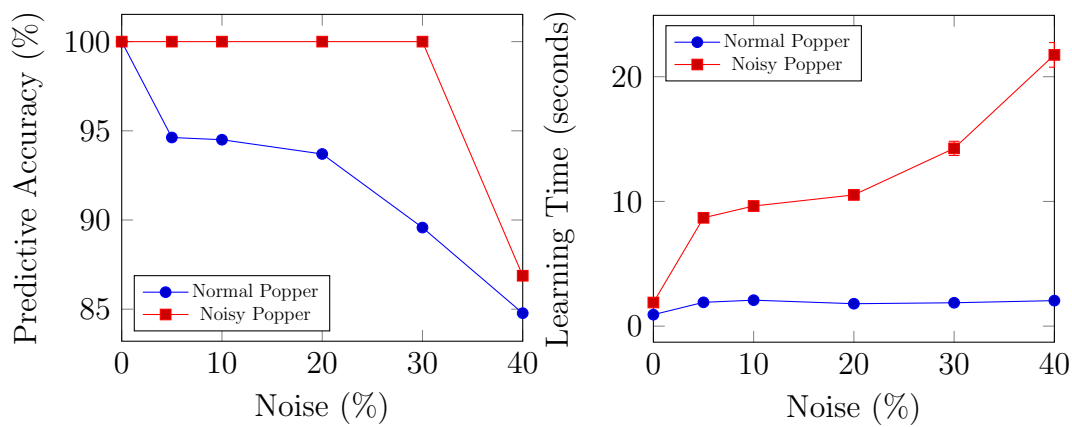

\subsection{Experiment 2: List Manipulations}
This series of problems consists of learning target predicates which manipulate or check certain properties of number lists and serves as an example of program synthesis. These problems are difficult typically requiring recursive solutions. Learning recursive programs has often been considered a difficult though important task for ILP systems \cite{cropper2020turning}. 

\paragraph{Materials}
Noisy Popper and Normal Popper will be evaluated on the nine list manipulation problems seen below in Table~\ref{table:2} from \cite{popper}. These have been shown to be challenging for many ILP systems unless strong inductive biases are provided with the exception of Normal Popper which has demonstrated near perfect accuracy on each task in a noiseless setting \cite{popper}.

\begin{table}[ht!]
    \centering
    \resizebox{\columnwidth}{!}{%
    \begin{tabular}{l|l|l} 
    \toprule
    \textbf{Name} & \textbf{Description} & \textbf{Example Solution}
    \\
    \midrule
    \texttt{addhead} & Prepend head of list three times & \texttt{addhead(A,B):-head(A,C),cons(C,A,D),cons(C,D,E),cons(C,E,B).}
    \\
    \midrule
    \texttt{droplast} & Drop the last element of the list & 
    \begin{tabular}{@{}l@{}} \texttt{droplast(A,B):-tail(A,B),empty(B).} \\ \texttt{droplast(A,B):-tail(A,C),droplast(C,D),head(A,E),cons(E,D,B).} \end{tabular}
    \\
    \midrule
    \texttt{evens} & Check all elements are even &
    \begin{tabular}{@{}l@{}} \texttt{evens(A):-empty(A).} \\ \texttt{evens(A):-head(A,B),even(B),tail(A,C),even(C)} \end{tabular}
    \\
    \midrule
    \texttt{finddup} & Find duplicate elements &
    \begin{tabular}{@{}l@{}} \texttt{finddup(A,B):-head(A,B),tail(A,C),member(B,C).} \\ \texttt{finddup(A,B):-tail(A,C),finddup(C,B).} \end{tabular}
    \\
    \midrule
    \texttt{last} & Last element of list & 
    \begin{tabular}{@{}l@{}} \texttt{last (A,B):-tail(A,C),empty(C),head(A,B).} \\ \texttt{last (A,B):-tail(A,C),last (C,B).} \end{tabular}
    \\
    \midrule
    \texttt{len} & Calculates list length &
    \begin{tabular}{@{}l@{}} \texttt{len(A,B):-empty(A),zero(B).} \\ \texttt{len(A,B):-tail(A,C),len(C,D),increment(D,B).} \end{tabular}
    \\
    \midrule
    \texttt{member} & Member of the list & 
    \begin{tabular}{@{}l@{}} \texttt{member(A,B):-head(A,B).} \\ \texttt{member(A,B):-tail(A,C),member(C,B).} \end{tabular}
    \\
    \midrule
    \texttt{sorted} & Checks if list is sorted &
    \begin{tabular}{@{}l@{}} \texttt{sorted(A):-tail(A,B),empty(B).} \\ \texttt{sorted(A):-head(A,B),tail(A,C),head(C,D),geq(D,B),sorted(C).} \end{tabular}
    \\
    \midrule
    \texttt{threesame} & First three elements are identical &
    \texttt{threesame(A):-head(A,B),tail(A,C),head(C,B),tail(C,D),head(D,B).}
    \\
    \midrule
    \end{tabular}}
    \caption{List manipulation problems with descriptions and example solutions \cite{popper}.}
    \label{table:2}
\end{table}

\bigskip \noindent Both systems are given identical BK containing some of the monadic (i.e. one argument) relations \texttt{empty, even, odd, one,} and \texttt{zero}, dyadic (i.e., two argument) relations \texttt{decrement, head, geq, increment, member} and \texttt{tail}, and triadic (i.e., three argument) relations \texttt{append} and \texttt{prepend} in order to construct solutions.

\bigskip \noindent The language biases for Normal and Noisy Popper restrict hypotheses to at most five unique variables, at most five body literals per clause, and at most two clauses. The systems are again given type and directions for arguments of each predicate as well as a timeout to prevent non-terminating programs from running infinitely. The minimal constraint threshold for Noisy Popper is set to its default $t = 0$. Normal Popper will also be enhanced with an anytime algorithm approach as is done in Experiment 1 above.

\paragraph{Methods}
For each task, 20 positive and 20 negative randomly generated examples are used for training while 1000 positive and 1000 negative randomly generated examples are used for testing. List elements are sampled uniformly from the set $\{1, 2, ..., 100\}$. A timeout of ten minutes and a limit of 500 generated programs is enforced per task, even if this prevents a system from finding a more accurate solution. The predictive accuracy and learning times are recorded for each task and each experiment will repeat the task ten times with the means and standard errors plotted. Each experiment will be repeated for training noise levels of 0$\%$, 5$\%$, 10$\%$, and 20$\%$, though the test sets will remain noiseless. 

\paragraph{Results and Analysis}
Tables~\ref{table:3} and \ref{table:4} below depict the predictive accuracy and learning times respectively for each of the list manipulation tasks with training data noise levels of 0$\%$, 5$\%$, 10$\%$, and 20$\%$. For most problems, both systems were able to find correct solutions with the exceptions of \texttt{finddup} in which both systems struggled and \texttt{member} in which Normal Popper struggled with added noise. This indicates that even Normal Popper enhanced with an anytime algorithm is capable of generalizing well to noisy data. This makes cases where Noisy Popper outperforms Normal Popper notable. Both systems however can struggle on particular problems or datasets, though it is possible both systems could have found correct solutions to the \texttt{finddup} task if given additional time. 

\bigskip \noindent Normal Popper consistently ran significantly faster than Normal Popper. This is due to the large number of constraints Noisy Popper generates which the ASP solver must use and the number of programs compared to generate these constraints. As discussed in Chapter 5, this number of comparisons is quadratic in total number of hypotheses generated. For a single hypothesis generated by Normal Popper, the system may generate at most two constraints whereas Noisy Popper can generate a multitude of constraints for every program already seen by the system. While the Noisy Popper implementation attempts to mitigate redundant constraints, this clearly produces a large bottleneck for the system also effected by a large grounding issue discussed in the following experimentation section. This data indicates than an answer to \textbf{Q1} is that Noisy Popper generalizes as well as Normal Popper for many noisy datasets, though typically never performs worse. An answer to \textbf{Q2} is again that Noisy Popper is much more inefficient than Normal Popper and may in fact be unusable in certain cases due to its extreme inefficiencies. 

\begin{table}[p]
    \centering
    \resizebox{0.75\columnwidth}{!}{%
    \begin{tabular}{c|c|c|c} 
    \toprule
    \textbf{Name} & \textbf{Training Noise ($\%$)} & \textbf{Normal Popper} & \textbf{Noisy Popper} \\
    \midrule
    \multirow{4}{*}{\texttt{addhead}} & 0 & \textbf{100}$\pm{0}$ & \textbf{100}$\pm{0}$ \\
    & 5 & \textbf{100}$\pm{0}$ & \textbf{100}$\pm{0}$ \\
    & 10 & \textbf{100}$\pm{0}$ & \textbf{100}$\pm{0}$ \\
    & 20 & \textbf{100}$\pm{0}$ & \textbf{100}$\pm{0}$ \\
    \midrule
    \multirow{4}{*}{\texttt{droplast}} & 0 & \textbf{100}$\pm{0}$ & \textbf{100}$\pm{0}$ \\
    & 5 & \textbf{100}$\pm{0}$ & \textbf{100}$\pm{0}$ \\
    & 10 & \textbf{100}$\pm{0}$ & \textbf{100}$\pm{0}$ \\
    & 20 & \textbf{100}$\pm{0}$ & \textbf{100}$\pm{0}$ \\
    \midrule
    \multirow{4}{*}{\texttt{evens}} & 0 & \textbf{100}$\pm{0}$ & \textbf{100}$\pm{0}$ \\
    & 5 & \textbf{100}$\pm{0}$ & \textbf{100}$\pm{0}$ \\
    & 10 & \textbf{100}$\pm{0}$ & \textbf{100}$\pm{0}$ \\
    & 20 & 95$\pm{0}$ & \textbf{99}$\pm{0}$ \\
    \midrule
    \multirow{4}{*}{\texttt{finddup}} & 0 & \textbf{55}$\pm{0}$ & 54$\pm{0}$ \\
    & 5 & \textbf{54}$\pm{0}$ & 52$\pm{0}$ \\
    & 10 & 52$\pm{0}$ & \textbf{53}$\pm{0}$ \\
    & 20 & 51$\pm{0}$ & \textbf{53}$\pm{0}$ \\
    \midrule
    \multirow{4}{*}{\texttt{last}} & 0 & \textbf{100}$\pm{0}$ & \textbf{100}$\pm{0}$ \\
    & 5 & \textbf{100}$\pm{0}$ & \textbf{100}$\pm{0}$ \\
    & 10 & \textbf{100}$\pm{0}$ & \textbf{100}$\pm{0}$ \\
    & 20 & \textbf{100}$\pm{0}$ & \textbf{100}$\pm{0}$ \\
    \midrule
    \multirow{4}{*}{\texttt{len}} & 0 & \textbf{100}$\pm{0}$ & \textbf{100}$\pm{0}$ \\
    & 5 & \textbf{100}$\pm{0}$ & \textbf{100}$\pm{0}$ \\
    & 10 & \textbf{100}$\pm{0}$ & \textbf{100}$\pm{0}$ \\
    & 20 & \textbf{100}$\pm{0}$ & \textbf{100}$\pm{0}$ \\
    \midrule
    \multirow{4}{*}{\texttt{member}} & 0 & \textbf{100}$\pm{0}$ & \textbf{100}$\pm{0}$ \\
    & 5 & 97$\pm{0}$ & \textbf{100}$\pm{0}$ \\
    & 10 & 96$\pm{0}$ & \textbf{100}$\pm{0}$ \\
    & 20 & 86$\pm{0}$ & \textbf{100}$\pm{0}$ \\
    \midrule
    \multirow{4}{*}{\texttt{sorted}} & 0 & \textbf{100}$\pm{0}$ & \textbf{100}$\pm{0}$ \\
    & 5 & \textbf{100}$\pm{0}$ & \textbf{100}$\pm{0}$ \\
    & 10 & \textbf{100}$\pm{0}$ & \textbf{100}$\pm{0}$ \\
    & 20 & \textbf{100}$\pm{0}$ & \textbf{100}$\pm{0}$ \\
    \midrule
    \multirow{4}{*}{\texttt{threesame}} & 0 & \textbf{100}$\pm{0}$ & \textbf{100}$\pm{0}$ \\
    & 5 & \textbf{100}$\pm{0}$ & \textbf{100}$\pm{0}$ \\
    & 10 & \textbf{99}$\pm{0}$ & \textbf{99}$\pm{0}$ \\
    & 20 & \textbf{99}$\pm{0}$ & \textbf{99}$\pm{0}$ \\
    \midrule
    \end{tabular}}
    \caption{Predictive accuracy for Normal and Noisy Popper on list manipulation problems. Accuracies are rounded to the nearest integer and errors to the nearest tenth. Errors are standard.}
    \label{table:3}
\end{table}

\begin{table}[p]
    \centering
    \resizebox{0.75\columnwidth}{!}{%
    \begin{tabular}{c|c|c|c}
    \toprule
    \textbf{Name} & \textbf{Training Noise ($\%$)} & \textbf{Normal Popper} & \textbf{Noisy Popper} \\
    \midrule
    \multirow{4}{*}{\texttt{addhead}} & 0 & \textbf{0.6}$\pm{0}$ & 2$\pm{0.1}$ \\
    & 5 & \textbf{12}$\pm{3}$ & 79$\pm{4}$ \\
    & 10 & \textbf{9}$\pm{2}$ & 74$\pm{2}$ \\
    & 20 & \textbf{8}$\pm{3}$ & 74$\pm{3}$ \\
    \midrule
    \multirow{4}{*}{\texttt{droplast}} & 0 & \textbf{34}$\pm{8}$ & 81$\pm{39}$ \\
    & 5 & \textbf{78}$\pm{9}$ & 135$\pm{45}$ \\
    & 10 & \textbf{80}$\pm{10}$ & 142$\pm{33}$ \\
    & 20 & \textbf{79}$\pm{8}$ & 137$\pm{36}$ \\
    \midrule
    \multirow{4}{*}{\texttt{evens}} & 0 & \textbf{2}$\pm{0}$ & 7$\pm{0.5}$ \\
    & 5 & \textbf{13}$\pm{3}$ & 38$\pm{1}$ \\
    & 10 & \textbf{14}$\pm{1}$ & 45$\pm{0.5}$ \\
    & 20 & \textbf{13}$\pm{1}$ & 40$\pm{3}$ \\
    \midrule
    \multirow{4}{*}{\texttt{finddup}} & 0 & \textbf{8}$\pm{1}$ & 39$\pm{2}$ \\
    & 5 & \textbf{7}$\pm{0.9}$ & 36$\pm{1}$ \\
    & 10 & \textbf{8}$\pm{0.9}$ & 39$\pm{2}$ \\
    & 20 & \textbf{9}$\pm{1}$ & 40$\pm{2}$ \\
    \midrule
    \multirow{4}{*}{\texttt{last}} & 0 & \textbf{1}$\pm{0.4}$ & 15$\pm{4}$ \\
    & 5 & \textbf{3}$\pm{0.4}$ & 19$\pm{0.5}$ \\
    & 10 & \textbf{3}$\pm{0.4}$ & 21$\pm{0.3}$ \\
    & 20 & \textbf{3}$\pm{0.4}$ & 20$\pm{0.3}$ \\
    \midrule
    \multirow{4}{*}{\texttt{len}} & 0 & \textbf{0.5}$\pm{0}$ & 2$\pm{0.1}$ \\
    & 5 & \textbf{3}$\pm{0.4}$ & 59$\pm{6}$ \\
    & 10 & \textbf{2}$\pm{0.2}$ & 56$\pm{2}$ \\
    & 20 & \textbf{2}$\pm{0.1}$ & 56$\pm{2}$ \\
    \midrule
    \multirow{4}{*}{\texttt{member}} & 0 & \textbf{0.4}$\pm{0}$ & 0.7$\pm{0}$ \\
    & 5 & \textbf{10}$\pm{3}$ & 23$\pm{3}$ \\
    & 10 & \textbf{18}$\pm{6}$ & 23$\pm{3}$ \\
    & 20 & \textbf{20}$\pm{5}$ & 23$\pm{3}$ \\
    \midrule
    \multirow{4}{*}{\texttt{sorted}} & 0 & \textbf{4}$\pm{0.4}$ & 26$\pm{3}$ \\
    & 5 & \textbf{8}$\pm{0.4}$ & 42$\pm{0.1}$ \\
    & 10 & \textbf{8}$\pm{0.4}$ & 43$\pm{1}$ \\
    & 20 & \textbf{8}$\pm{0.4}$ & 43$\pm{0.2}$ \\
    \midrule
    \multirow{4}{*}{\texttt{threesame}} & 0 & \textbf{0.3}$\pm{0.1}$ & 0.5$\pm{0.1}$ \\
    & 5 & \textbf{0.8}$\pm{0.1}$ & 2$\pm{0.1}$ \\
    & 10 & \textbf{1}$\pm{0.1}$ & 3$\pm{0.3}$ \\
    & 20 & \textbf{1}$\pm{0.1}$ & 4$\pm{0.2}$ \\
    \midrule
    \end{tabular}}
    \caption{Learning times for Normal and Noisy Popper on list manipulation problems. Times are rounded to the nearest second if they are greater than 1 second and to the tenth otherwise. Errors are standard.}
    \label{table:4}
\end{table}

\subsection{Experiment 3: IGGP Problems}
The general game playing (GGP) competition \cite{genesereth2013international} measures a system's general intelligence by having giving the agent the rules to several new games described as logic programs before having the agent play each game. The competition winner is the agent which scores the best total over all games. The inductive general game playing (IGGP) \cite{cropper2019inductive} task inverts the GGP task, providing a system with logical traces of a game in order for the system to try and learn the rules of the game. These experiments focus on the \textit{minimal decay} and \textit{rock, paper, scissors} (rps) tasks, aiming to learn the target predicate \texttt{next$\_$score} which determines the score a player will have given their action and the action of the other player on a given turn.

\paragraph{Materials}
Noisy Popper and Normal Popper will be evaluated on the IGGP minimal decay and rps tasks. The BK for each system will contain facts about particular gameplay traces, i.e., specific actions players took on each turn, the actual score of a player after a turn has completed, etc. Some gameplay rules are also provided in the rps BK such as which action beats which other, i.e., rock beats scissors. 

\bigskip \noindent The language biases for both systems restrict hypotheses to at most five unique variables, at most five body literals per clause, and at most two clauses for the minimal decay task and at most seven unique variable, at most six body literals per clause, and at most six clauses for the rps task. Again, types and directions are given for predicate arguments. The minimal constraint threshold for Noisy Popper is set to its default $t = 0$. Normal Popper will be enhanced with an anytime algorithm approach as is done in Experiment 1 above.

\paragraph{Methods}
For the minimal decay task, 5 positive and 20 negative randomly generated examples are used for training while 5 positive and 30 negative randomly generated examples are used of testing. This allows us to observe how well each system generalizes when the number of positive and negative training examples are not equal. A timeout of ten minutes and a limit of 150 generated programs is enforced per task. For the RPS task, 20 positive and 50 negative examples are randomly generated to train on and 100 positive and 200 negative are generated for testing. A timeout of ten minutes and a limit of 200 generated programs is enforced per task. The predictive accuracy and learning times are recorded for each task and each experiment will be repeat the task ten times with the means and standard errors plotted. Each experiment will be repeated for training noise levels from 0$\%$ to 40$\%$ in increments of 10$\%$ with 5$\%$ noise additionally being tested. Testing sets will remain noiseless.

\paragraph{Results and Analysis}
Figures~\ref{fig:4} and \ref{fig:5} show the predictive accuracies and runtimes of Normal and Noisy Popper on the IGGP minimal decay and rps tasks respectively. These are notably difficult problems and even with no noise, neither system could generate a correct solution for either problem. For minimal decay, this is largely attributed to the exceptionally small number of examples used to train. Both systems typically achieved 100$\%$ training accuracy on the noiseless minimal decay data but did not achieve 100$\%$ testing accuracy. For RPS, the small program limit of 200 was the contributing factor for suboptimal noiseless accuracies though larger program limits often led to blowup in the Noisy Popper runtime. Despite the low number of programs both systems could learn from, Noisy Popper typically achieved equal or greater predictive accuracy than Normal Popper for all noise levels, though by only a slim margin. A McNemar's \cite{lachenbruch2014mcnemar} test on the Noisy and Normal Popper predictive accuracy additionally confirmed the significance at the $p < 0.001$ level for both problems indicating that the performances differences were not random. The results for both tasks suggest that the answer to \textbf{Q1} is that Noisy Popper can generalize better to noisy data than Normal Popper and as well to noiseless data, though the difference is often marginal and Normal Popper enhanced with an anytime algorithm can often generalize well on its own. Additionally, they suggest the answer to \textbf{Q2} is that Noisy Popper is significantly less efficient than Normal Popper even with extremely small program limits, an issue for problems which require learning large programs.

\begin{figure}[ht]
    \begin{subfigure}{.45\linewidth}
    \centering
        \begin{tikzpicture}
        \begin{axis}[
        width=\linewidth,
        legend pos=north east,
        legend style={nodes={scale=0.5, transform shape}},
        xmin=0,
        xmax=40,
        xlabel={Noise ($\%$)},
        ylabel={Predictive Accuracy ($\%$)}]
        
        \addplot+[error bars/.cd,y dir=both,y explicit]
        table [
        x expr=\thisrow{x_data} * 100,
        y=y_data,
        col sep=comma,
        y error plus expr=\thisrow{error},y error minus expr=\thisrow{error},
        ] {data/minimal-decay-acc-vs-noise-normal.csv};
        
        \addplot+[color=red,mark=square*,error bars/.cd,y dir=both,y explicit]
        table [
        x expr=\thisrow{x_data} * 100,
        y=y_data,
        col sep=comma,
        y error plus expr=\thisrow{error},y error minus expr=\thisrow{error},
        ] {data/minimal-decay-acc-vs-noise-noisy.csv};
        
        \legend{{Normal Popper}, {Noisy Popper}}
        \end{axis}
        \end{tikzpicture}
    \end{subfigure}
    \begin{subfigure}{.45\linewidth}
    \centering
        \begin{tikzpicture}
        \begin{axis}[
        width=\linewidth,
        legend style={nodes={scale=0.5, transform shape},at={(0.975,0.5)},anchor=east},
        xmin=0,
        xmax=40,
        xlabel={Noise ($\%$)},
        ylabel={Time (seconds)}]
        
        \addplot+[error bars/.cd,y dir=both,y explicit]
        table [
        x expr=\thisrow{x_data} * 100,
        y=y_data,
        col sep=comma,
        y error plus expr=\thisrow{error},y error minus expr=\thisrow{error},
        ] {data/minimal-decay-time-vs-noise-normal.csv};
        
        \addplot+[color=red,mark=square*,error bars/.cd,y dir=both,y explicit]
        table [
        x expr=\thisrow{x_data} * 100,
        y=y_data,
        col sep=comma,
        y error plus expr=\thisrow{error},y error minus expr=\thisrow{error},
        ] {data/minimal-decay-time-vs-noise-noisy.csv};
        
        \legend{{Normal Popper}, {Noisy Popper}}
        \end{axis}
        \end{tikzpicture}
    \end{subfigure}
\caption{IGGP Minimal Decay task predictive accuracy and time when varying percentage of noisy training data. Standard error is depicted by bars.}
\label{fig:4}
\end{figure}
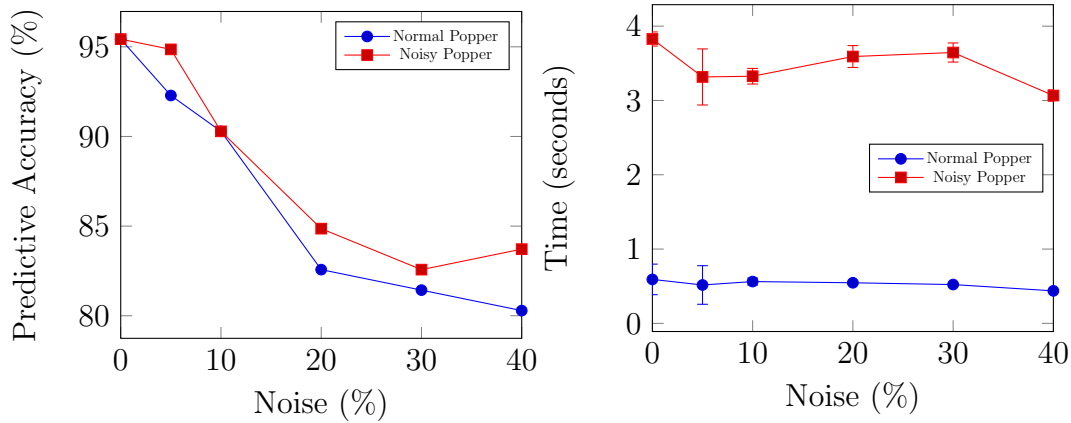

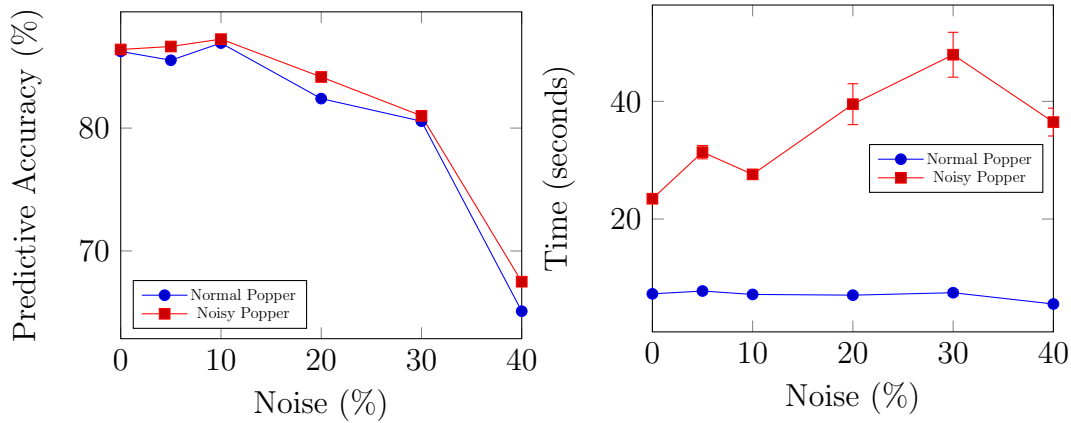
\begin{figure}[ht]
    \begin{subfigure}{.45\linewidth}
    \centering
        \begin{tikzpicture}
        \begin{axis}[
        width=\linewidth,
        legend pos=south west,
        legend style={nodes={scale=0.5, transform shape}},
        xmin=0,
        xmax=40,
        xlabel={Noise ($\%$)},
        ylabel={Predictive Accuracy ($\%$)}]
        
        \addplot+[error bars/.cd,y dir=both,y explicit]
        table [
        x expr=\thisrow{x_data} * 100,
        y=y_data,
        col sep=comma,
        y error plus expr=\thisrow{error},y error minus expr=\thisrow{error},
        ] {data/rps-acc-vs-noise-normal.csv};
        
        \addplot+[color=red,mark=square*,error bars/.cd,y dir=both,y explicit]
        table [
        x expr=\thisrow{x_data} * 100,
        y=y_data,
        col sep=comma,
        y error plus expr=\thisrow{error},y error minus expr=\thisrow{error},
        ] {data/rps-acc-vs-noise-noisy.csv};
        
        \legend{{Normal Popper}, {Noisy Popper}}
        \end{axis}
        \end{tikzpicture}
    \end{subfigure}
    \begin{subfigure}{.45\linewidth}
    \centering
        \begin{tikzpicture}
        \begin{axis}[
        width=\linewidth,
        legend style={nodes={scale=0.5, transform shape},at={(0.975,0.5)},anchor=east},
        xmin=0,
        xmax=40,
        xlabel={Noise ($\%$)},
        ylabel={Time (seconds)}]
        
        \addplot+[error bars/.cd,y dir=both,y explicit]
        table [
        x expr=\thisrow{x_data} * 100,
        y=y_data,
        col sep=comma,
        y error plus expr=\thisrow{error},y error minus expr=\thisrow{error},
        ] {data/rps-time-vs-noise-normal.csv};
        
        \addplot+[color=red,mark=square*,error bars/.cd,y dir=both,y explicit]
        table [
        x expr=\thisrow{x_data} * 100,
        y=y_data,
        col sep=comma,
        y error plus expr=\thisrow{error},y error minus expr=\thisrow{error},
        ] {data/rps-time-vs-noise-noisy.csv};
        
        \legend{{Normal Popper}, {Noisy Popper}}
        \end{axis}
        \end{tikzpicture}
    \end{subfigure}
\caption{IGGP RPS task predictive accuracy and time when varying percentage of noisy training data. Standard error is depicted by bars.}
\label{fig:5}
\end{figure}

\section{Noisy Popper Enhancements}
The purpose of this set of experiments is to determine how effective each enhancement of Noisy Popper is in aiding the overall system. In each of the following experiments, the following variants of Noisy Popper will be run against one another: (i) brute force method described in Section 4.2 which creates no hypothesis constraints (we will refer to this variant as \textit{Enumerate}, (ii) Noisy Popper in its entirety, (iii) Noisy Popper without minimal constraints (which will be labelled as \textit{w/o minimal}), (iv) Noisy Popper without sound constraints (which will be labelled as \textit{w/o sound}), (v) Noisy Popper without size constraints (which will be labelled \textit{w/o size}).

\subsection{Experiment 1: East-West Trains}
This set of experiments is identical to those in Section 6.1.1  using the same east-west trains problems.

\paragraph{Materials}
Each version of Noisy Popper will be evaluated using the same hypotheses as in Section 6.1.1. The language biases and BKs remain the same.

\paragraph{Methods}
The methods are the same as in Section 6.1.1.

\paragraph{Results and Analysis}
Figure~\ref{fig:6} shows that all variants of Noisy Popper achieve similar predictive accuracy for all noise levels which are higher than Enumerate's predictive accuracy. Noisy Popper without minimal constraints performs slightly better with 40$\%$ noise indicating that the minimal constraints used by the other systems typically pruned a highly accurate hypothesis. However, Enumerate did not perform as well despite pruning no hypotheses indicating that Noisy Popper without minimal constraints was only able to find its best hypothesis due the remaining constraints it generated from the other enhancements, i.e. without the additional pruning, it would not have run long enough to find the best solution it did. Figure~\ref{fig:7} shows that the predictive accuracies for all systems were roughly the same for all noise levels, but this may be attributed to \texttt{h$_2$} being an easier hypothesis to find. Here at 40$\%$ noise, Noisy Popper without minimal constraints and without sound hypothesis constraints both perform slightly better, most likely due to the other Popper variants overfitting the data and pruning too much.

\bigskip \noindent The learning times from Figures~\ref{fig:6} and \ref{fig:7} demonstrate that Noisy Popper actually gains some speedup from its minimal and sound constraints. This is likely because when any hypotheses have generalizations or specializations pruned, those programs are essentially forgotten by the system and generate no further constraints. Notably, they no longer generate size constraints which is where the greatest bottleneck is as evidenced by the fast runtime of Noisy Popper without sound size hypothesis constraints. These size constraints create a blowup in the grounding as all possible program sizes defined in the ASP constraints must be grounded and the size constraints specify a large range of sizes in the ASP constraints. This can mean thousands of individual programs are required to be grounded by just a single ASP constraint. In practice, the vast majority of constraints generated by Noisy Popper are these size constraints leading to the inefficiency of the system. This data suggests an answer to \textbf{Q3} is that none of the enhancements individually provide significant benefits to the predictive accuracy of the system, but in conjunction can make the system better than brute force enumeration. Minimal constraints however provide significant speedup to the system and the sound hypothesis constraints additionally contribute to this improvement. The size constraints are the biggest bottleneck however providing little benefit in return when run for such short durations. It is possible that these size constraints would prevent the system from overfitting when run for extended periods, but their inefficiencies make running the system for too long infeasible. Further improvements to the system and testing would be needed to draw conclusions from this hypothesis.

\begin{figure}[ht]
    \begin{subfigure}{.5\linewidth}
    \centering
        \begin{tikzpicture}
        \begin{axis}[
        width=\linewidth,
        xmin=0,
        xmax=40,
        legend pos=south west,
        legend style={nodes={scale=0.5, transform shape}},
        xlabel={Noise ($\%$)},
        ylabel={Predictive Accuracy ($\%$)}]
        
        \addplot+[error bars/.cd,y dir=both,y explicit]
        table [
        x expr=\thisrow{x_data} * 100,
        y=y_data,
        col sep=comma,
        y error plus expr=\thisrow{error},y error minus expr=\thisrow{error},
        ] {data2/trains1-acc-vs-noise-enum.csv};
        
        \addplot+[color=red,mark=square*,error bars/.cd,y dir=both,y explicit]
        table [
        x expr=\thisrow{x_data} * 100,
        y=y_data,
        col sep=comma,
        y error plus expr=\thisrow{error},y error minus expr=\thisrow{error},
        ] {data2/trains1-acc-vs-noise-noisy.csv};
        
        \addplot+[error bars/.cd,y dir=both,y explicit]
        table [
        x expr=\thisrow{x_data} * 100,
        y=y_data,
        col sep=comma,
        y error plus expr=\thisrow{error},y error minus expr=\thisrow{error},
        ] {data2/trains1-acc-vs-noise-no-min.csv};
        
        \addplot+[error bars/.cd,y dir=both,y explicit]
        table [
        x expr=\thisrow{x_data} * 100,
        y=y_data,
        col sep=comma,
        y error plus expr=\thisrow{error},y error minus expr=\thisrow{error},
        ] {data2/trains1-acc-vs-noise-no-learn.csv};
        
        \addplot+[error bars/.cd,y dir=both,y explicit]
        table [
        x expr=\thisrow{x_data} * 100,
        y=y_data,
        col sep=comma,
        y error plus expr=\thisrow{error},y error minus expr=\thisrow{error},
        ] {data2/trains1-acc-vs-noise-no-size.csv};
        
        \legend{{Enumerate}, {Noisy Popper}, {w/o Min. Cons.}, {w/o Sound Cons.}, {w/o Size Cons.}}
        \end{axis}
        \end{tikzpicture}
    \end{subfigure}
    \begin{subfigure}{.5\linewidth}
    \centering
        \begin{tikzpicture}
        \begin{axis}[
        xmin=0,
        xmax=40,
        width=\linewidth,
        legend pos=north west,
        legend style={nodes={scale=0.3, transform shape}},
        xlabel={Noise ($\%$)},
        ylabel={Learning Time (seconds)}]
        
        \addplot+[error bars/.cd,y dir=both,y explicit]
        table [
        x expr=\thisrow{x_data} * 100,
        y=y_data,
        col sep=comma,
        y error plus expr=\thisrow{error},y error minus expr=\thisrow{error},
        ] {data2/trains1-time-vs-noise-enum.csv};
        
        \addplot+[color=red,mark=square*,error bars/.cd,y dir=both,y explicit]
        table [
        x expr=\thisrow{x_data} * 100,
        y=y_data,
        col sep=comma,
        y error plus expr=\thisrow{error},y error minus expr=\thisrow{error},
        ] {data2/trains1-time-vs-noise-noisy.csv};
        
        \addplot+[error bars/.cd,y dir=both,y explicit]
        table [
        x expr=\thisrow{x_data} * 100,
        y=y_data,
        col sep=comma,
        y error plus expr=\thisrow{error},y error minus expr=\thisrow{error},
        ] {data2/trains1-time-vs-noise-no-min.csv};
        
        \addplot+[error bars/.cd,y dir=both,y explicit]
        table [
        x expr=\thisrow{x_data} * 100,
        y=y_data,
        col sep=comma,
        y error plus expr=\thisrow{error},y error minus expr=\thisrow{error},
        ] {data2/trains1-time-vs-noise-no-learn.csv};
        
        \addplot+[error bars/.cd,y dir=both,y explicit]
        table [
        x expr=\thisrow{x_data} * 100,
        y=y_data,
        col sep=comma,
        y error plus expr=\thisrow{error},y error minus expr=\thisrow{error},
        ] {data2/trains1-time-vs-noise-no-size.csv};
        
        \end{axis}
        \end{tikzpicture}
    \end{subfigure}
\caption{East-West Trains predictive accuracies and learning times of Noisy Popper variants (in seconds) for program \texttt{h$_1$} when varying percentage of noisy training data. Standard error is depicted by bars.}
\label{fig:6}
\end{figure}
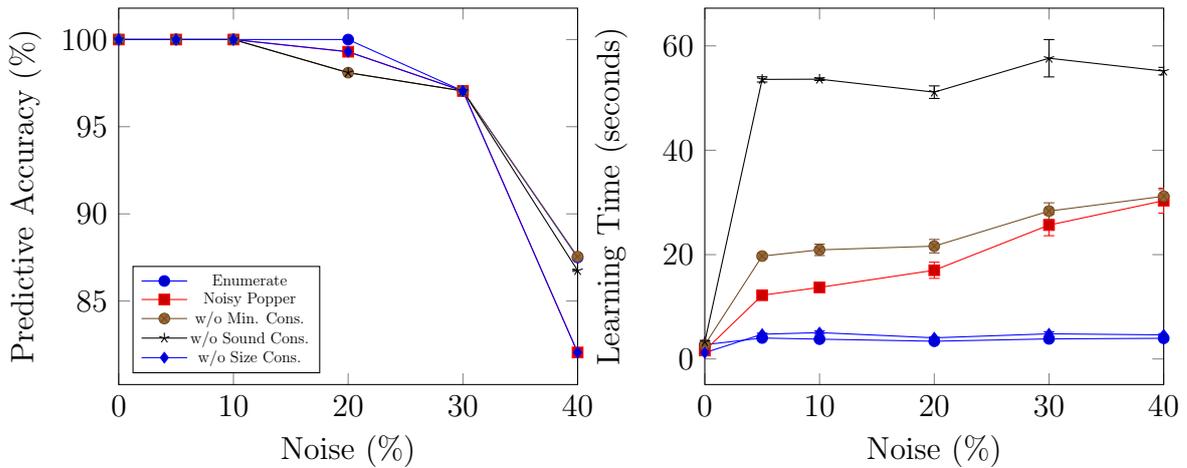

\begin{figure}[ht]
    \begin{subfigure}{.5\linewidth}
    \centering
        \begin{tikzpicture}
        \begin{axis}[
        width=\linewidth,
        legend pos=south west,
        legend style={nodes={scale=0.5, transform shape}},
        xmin=0,
        xmax=40,
        xlabel={Noise ($\%$)},
        ylabel={Predictive Accuracy ($\%$)}]
        
        \addplot+[error bars/.cd,y dir=both,y explicit]
        table [
        x expr=\thisrow{x_data} * 100,
        y=y_data,
        col sep=comma,
        y error plus expr=\thisrow{error},y error minus expr=\thisrow{error},
        ] {data2/trains2-acc-vs-noise-enum.csv};
        
        \addplot+[color=red,mark=square*,error bars/.cd,y dir=both,y explicit]
        table [
        x expr=\thisrow{x_data} * 100,
        y=y_data,
        col sep=comma,
        y error plus expr=\thisrow{error},y error minus expr=\thisrow{error},
        ] {data2/trains2-acc-vs-noise-noisy.csv};
        
        \addplot+[error bars/.cd,y dir=both,y explicit]
        table [
        x expr=\thisrow{x_data} * 100,
        y=y_data,
        col sep=comma,
        y error plus expr=\thisrow{error},y error minus expr=\thisrow{error},
        ] {data2/trains2-acc-vs-noise-no-min.csv};
        
        \addplot+[error bars/.cd,y dir=both,y explicit]
        table [
        x expr=\thisrow{x_data} * 100,
        y=y_data,
        col sep=comma,
        y error plus expr=\thisrow{error},y error minus expr=\thisrow{error},
        ] {data2/trains2-acc-vs-noise-no-learn.csv};
        
        \addplot+[error bars/.cd,y dir=both,y explicit]
        table [
        x expr=\thisrow{x_data} * 100,
        y=y_data,
        col sep=comma,
        y error plus expr=\thisrow{error},y error minus expr=\thisrow{error},
        ] {data2/trains2-acc-vs-noise-no-size.csv};
        
        \legend{{Enumerate}, {Noisy Popper}, {w/o Min. Cons.}, {w/o Sound Cons.}, {w/o Size Cons.}}
        \end{axis}
        \end{tikzpicture}
    \end{subfigure}
    \begin{subfigure}{.5\linewidth}
    \centering
        \begin{tikzpicture}
        \begin{axis}[
        width=\linewidth,
        legend pos=north west,
        legend style={nodes={scale=0.3, transform shape}},
        xmin=0,
        xmax=40,
        xlabel={Noise ($\%$)},
        ylabel={Learning Time (seconds)}]
        
        \addplot+[error bars/.cd,y dir=both,y explicit]
        table [
        x expr=\thisrow{x_data} * 100,
        y=y_data,
        col sep=comma,
        y error plus expr=\thisrow{error},y error minus expr=\thisrow{error},
        ] {data2/trains2-time-vs-noise-enum.csv};
        
        \addplot+[color=red,mark=square*,error bars/.cd,y dir=both,y explicit]
        table [
        x expr=\thisrow{x_data} * 100,
        y=y_data,
        col sep=comma,
        y error plus expr=\thisrow{error},y error minus expr=\thisrow{error},
        ] {data2/trains2-time-vs-noise-noisy.csv};
        
        \addplot+[error bars/.cd,y dir=both,y explicit]
        table [
        x expr=\thisrow{x_data} * 100,
        y=y_data,
        col sep=comma,
        y error plus expr=\thisrow{error},y error minus expr=\thisrow{error},
        ] {data2/trains2-time-vs-noise-no-min.csv};
        
        \addplot+[error bars/.cd,y dir=both,y explicit]
        table [
        x expr=\thisrow{x_data} * 100,
        y=y_data,
        col sep=comma,
        y error plus expr=\thisrow{error},y error minus expr=\thisrow{error},
        ] {data2/trains2-time-vs-noise-no-learn.csv};
        
        \addplot+[error bars/.cd,y dir=both,y explicit]
        table [
        x expr=\thisrow{x_data} * 100,
        y=y_data,
        col sep=comma,
        y error plus expr=\thisrow{error},y error minus expr=\thisrow{error},
        ] {data2/trains2-time-vs-noise-no-size.csv};
        
        \end{axis}
        \end{tikzpicture}
    \end{subfigure}
\caption{East-West Trains predictive accuracies and learning times of Noisy Popper variants (in seconds) for program \texttt{h$_2$} when varying percentage of noisy training data. Standard error is depicted by bars.}
\label{fig:7}
\end{figure}
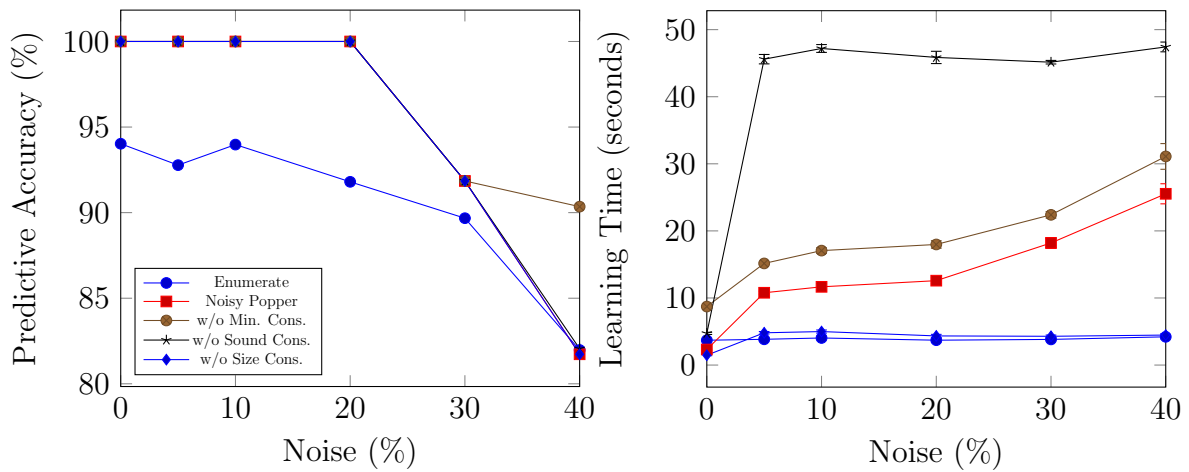

\subsection{Experiment 2: List Manipulations}
This set of experiments is identical to those in Section 6.1.2.

\paragraph{Materials}
The materials are identical to those in Section 6.1.2.

\paragraph{Methods}
The methods are identical to those in Section 6.1.2.

\paragraph{Results and Analysis}
Table~\ref{table:5} depicts the predictive accuracy for Noisy Popper and its variants on each of the list manipulation tasks. Each performs well on most tasks except for \texttt{finddup} where again no system can find an accurate solution, \texttt{sorted} where only Noisy Popper performs well, and \texttt{threesame} where each variant performs slightly worse than Noisy Popper as noise increases. Enumerate notably performs poorly on several datasets, indicating that it could not find a correct solution in the time allotted while Noisy Popper and its variants could. The \texttt{sorted} task gives the best indication that the minimal constraints provide the biggest impact to predictive accuracy with sound hypothesis constraints contributing to a smaller degree and size constraints being ineffective except at higher noise levels where it prevents Noisy Popper from overfitting.  

\bigskip \noindent Table~\ref{table:6} again demonstrates that typically without the size constraints, Noisy Popper runs much more efficiently, though not as quickly as Enumerate on average. However, in some  tasks such as \texttt{member} and \texttt{threesame}, without noise Noisy Popper without size constraints finds the correct solution faster than any other system. We also can again see that typically, the minimal and sound constraints provide Noisy Popper with considerable speedup. Figure~\ref{fig:10} below depicts the predictive test accuracy of the best hypothesis being maintained by each system versus the total number of programs each program has generated and learned from for the evens task with 5$\%$ training noise. This plot is exemplary of many of the list manipulation tasks over all noise levels and demonstrates how Noisy Popper with and without size constraints typically requires the fewest number of programs to learn from to find an optimal solution. Noisy Popper without sound constraints typically requires generating several more hypotheses to accomplish the same and without sound hypothesis constraints requires yet more programs. If Enumerate finds an optimal solution, it typically requires generating the greatest number of hypotheses. Overall, this data suggests the answer to \textbf{Q3} is again that the enhancements of Noisy Popper do not individually provide large benefits to the accuracy of the system except for specific datasets. However, minimal constraints and sound constraints do provide speedup to the system and help the system find correct solutions quicker than without. They additionally reduce the total number of programs which must be generated to find an optimal solution. This indicates that both sets of constraints contribute are beneficial to the overall system. Size constraints can provide deterrence against overfitting, but only in specific cases and at the cost of great inefficiency. 
\begin{figure}[ht!]
    \centering
    \begin{tikzpicture}[thick,scale=0.8, every node/.style={scale=0.8}]
    \begin{axis}[
    width=\linewidth,
    legend pos=south east,
    legend style={nodes={scale=1, transform shape}},
    xlabel={Number of Programs Generated},
    ylabel={Predictive Accuracy ($\%$)}]
    
    \addplot+[error bars/.cd,y dir=both,y explicit]
    table [
    x=x_data,
    y expr=\thisrow{y_data} * 100,
    col sep=comma,
    y error plus expr=\thisrow{error} * 100,y error minus expr=\thisrow{error} * 100,
    ] {data3/alleven_test_5_exp1_Enumerate.csv};
    
    \addplot+[color=red,mark=square*,error bars/.cd,y dir=both,y explicit]
    table [
    x=x_data,
    y expr=\thisrow{y_data} * 100,
    col sep=comma,
    y error plus expr=\thisrow{error} * 100,y error minus expr=\thisrow{error} * 100,
    ] {data3/alleven_test_5_exp1_Noisy.csv};
    
    \addplot+[error bars/.cd,y dir=both,y explicit]
    table [
    x=x_data,
    y expr=\thisrow{y_data} * 100,
    col sep=comma,
    y error plus expr=\thisrow{error} * 100,y error minus expr=\thisrow{error} * 100,
    ] {data3/alleven_test_5_exp1_No_Minimal.csv};
    
    \addplot+[error bars/.cd,y dir=both,y explicit]
    table [
    x=x_data,
    y expr=\thisrow{y_data} * 100,
    col sep=comma,
    y error plus expr=\thisrow{error} * 100,y error minus expr=\thisrow{error} * 100,
    ] {data3/alleven_test_5_exp1_No_Learn.csv};
    
    \addplot+[error bars/.cd,y dir=both,y explicit]
    table [
    x=x_data,
    y expr=\thisrow{y_data} * 100,
    col sep=comma,
    y error plus expr=\thisrow{error} * 100,y error minus expr=\thisrow{error} * 100,
    ] {data3/alleven_test_5_exp1_No_Size.csv};

    \legend{{Enumerate}, {Noisy Popper}, {w/o Min. Cons.}, {w/o Sound Cons.}, {w/o Size Cons.}}
    \end{axis}
    \end{tikzpicture}
\caption{Predictive accuracies of maintained best programs for Noisy Popper variants versus the number of programs generated by each system on evens dataset with 5$\%$ training noise. Standard error is depicted by bars.}
\label{fig:10}
\end{figure}
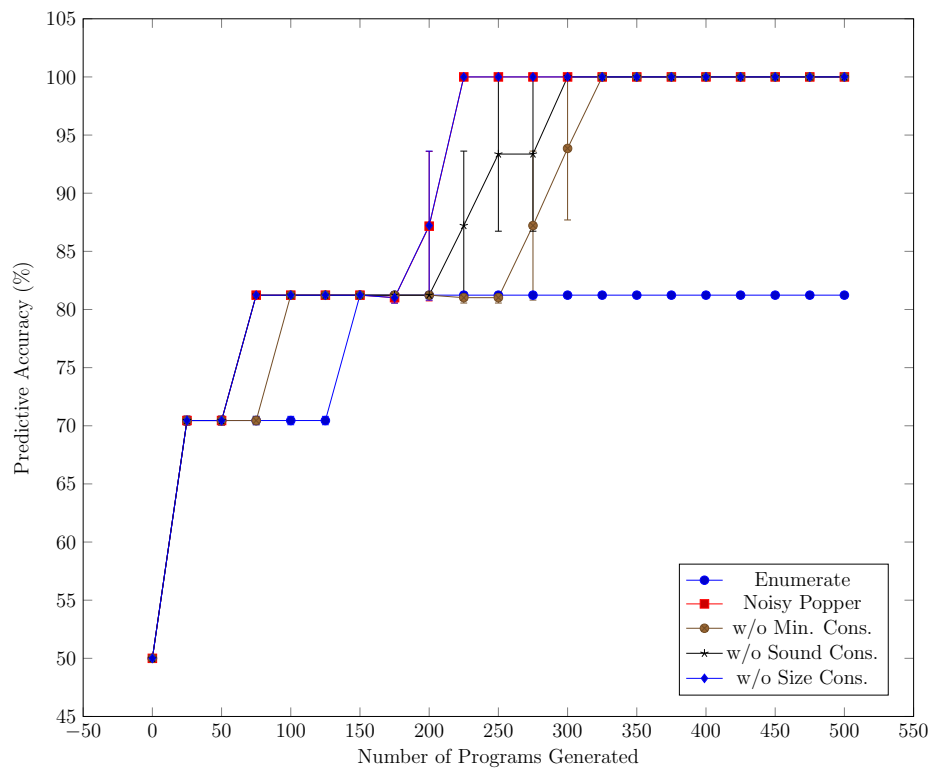

\begin{table}[p]
    \centering
    \resizebox{1\columnwidth}{!}{%
    \begin{tabular}{c|c|c|c|c|c|c} 
    \toprule
    \textbf{Name} & \textbf{Noise ($\%$)} & \textbf{Enumerate} & \textbf{Noisy Popper} & \textbf{w/o Minimal Constraints} & \textbf{w/o Sound Constraints} & \textbf{w/o Sound Size Constraints}\\
    \midrule
    \multirow{4}{*}{\texttt{addhead}} & 0 & \textbf{100}$\pm{0}$ & \textbf{100}$\pm{0}$ & \textbf{100}$\pm{0}$ & \textbf{100}$\pm{0}$ & \textbf{100}$\pm{0}$\\
    & 5 & \textbf{100}$\pm{0}$ & \textbf{100}$\pm{0}$ & \textbf{100}$\pm{0}$ & \textbf{100}$\pm{0}$ & \textbf{100}$\pm{0}$\\
    & 10 & \textbf{100}$\pm{0}$ & \textbf{100}$\pm{0}$ & \textbf{100}$\pm{0}$ & \textbf{100}$\pm{0}$ & \textbf{100}$\pm{0}$\\
    & 20 & \textbf{100}$\pm{0}$ & \textbf{100}$\pm{0}$ & \textbf{100}$\pm{0}$ & \textbf{100}$\pm{0}$ & \textbf{100}$\pm{0}$\\
    \midrule
    \multirow{4}{*}{\texttt{droplast}} & 0 & 50$\pm{0}$ & \textbf{100}$\pm{0}$ & \textbf{100}$\pm{0}$ & \textbf{100}$\pm{0}$ & \textbf{100}$\pm{0}$\\ 
    & 5 & 50$\pm{0}$ & \textbf{100}$\pm{0}$ & \textbf{100}$\pm{0}$ & \textbf{100}$\pm{0}$ & \textbf{100}$\pm{0}$\\
    & 10 & 50$\pm{0}$ & \textbf{100}$\pm{0}$ & \textbf{100}$\pm{0}$ & \textbf{100}$\pm{0}$ & \textbf{100}$\pm{0}$\\
    & 20 & 50$\pm{0}$ & \textbf{100}$\pm{0}$ & \textbf{100}$\pm{0}$ & \textbf{100}$\pm{0}$ & \textbf{100}$\pm{0}$\\
    \midrule
    \multirow{4}{*}{\texttt{evens}} & 0 & 82$\pm{0}$ & \textbf{100}$\pm{0}$ & \textbf{100}$\pm{0}$ & \textbf{100}$\pm{0}$ & \textbf{100}$\pm{0}$\\
    & 5 & 82$\pm{0}$ & \textbf{100}$\pm{0}$ & \textbf{100}$\pm{0}$ & \textbf{100}$\pm{0}$ & \textbf{100}$\pm{0}$\\
    & 10 & 82$\pm{0}$ & \textbf{100}$\pm{0}$ & \textbf{100}$\pm{0}$ & \textbf{100}$\pm{0}$ & \textbf{100}$\pm{0}$\\
    & 20 & 81$\pm{0}$ & \textbf{99}$\pm{0}$ & \textbf{100}$\pm{0}$ & \textbf{100}$\pm{0}$ & \textbf{100}$\pm{0}$\\
    \midrule
    \multirow{4}{*}{\texttt{finddup}} & 0 & 53$\pm{0}$ & 54$\pm{0}$ & 53$\pm{0}$ & 53$\pm{0}$ & 53$\pm{0}$\\
    & 5 & \textbf{52}$\pm{0}$ & \textbf{52}$\pm{0}$ & \textbf{52}$\pm{0}$ & \textbf{52}$\pm{0}$ & \textbf{52}$\pm{0}$\\
    & 10 & \textbf{52}$\pm{0}$ & \textbf{52}$\pm{0}$ & \textbf{52}$\pm{0}$ & \textbf{52}$\pm{0}$ & \textbf{52}$\pm{0}$\\
    & 20 & 50$\pm{0}$ & \textbf{53}$\pm{0}$ & 50$\pm{0}$ & 49$\pm{0}$ & 50$\pm{0}$\\
    \midrule
    \multirow{4}{*}{\texttt{last}} & 0 & \textbf{100}$\pm{0}$ & \textbf{100}$\pm{0}$ & \textbf{100}$\pm{0}$ & \textbf{100}$\pm{0}$ & \textbf{100}$\pm{0}$\\
    & 5 & \textbf{100}$\pm{0}$ & \textbf{100}$\pm{0}$ & \textbf{100}$\pm{0}$ & \textbf{100}$\pm{0}$ & \textbf{100}$\pm{0}$\\
    & 10 & \textbf{100}$\pm{0}$ & \textbf{100}$\pm{0}$ & \textbf{100}$\pm{0}$ & \textbf{100}$\pm{0}$ & \textbf{100}$\pm{0}$\\
    & 20 & \textbf{100}$\pm{0}$ & \textbf{100}$\pm{0}$ & \textbf{100}$\pm{0}$ & \textbf{100}$\pm{0}$ & \textbf{100}$\pm{0}$\\
    \midrule
    \multirow{4}{*}{\texttt{len}} & 0 & \textbf{100}$\pm{0}$ & \textbf{100}$\pm{0}$ & \textbf{100}$\pm{0}$ & \textbf{100}$\pm{0}$ & \textbf{100}$\pm{0}$\\
    & 5 & \textbf{100}$\pm{0}$ & \textbf{100}$\pm{0}$ & \textbf{100}$\pm{0}$ & \textbf{100}$\pm{0}$ & \textbf{100}$\pm{0}$\\
    & 10 & \textbf{100}$\pm{0}$ & \textbf{100}$\pm{0}$ & \textbf{100}$\pm{0}$ & \textbf{100}$\pm{0}$ & \textbf{100}$\pm{0}$\\
    & 20 & \textbf{100}$\pm{0}$ & \textbf{100}$\pm{0}$ & \textbf{100}$\pm{0}$ & \textbf{100}$\pm{0}$ & \textbf{100}$\pm{0}$\\
    \midrule
    \multirow{4}{*}{\texttt{member}} & 0 & \textbf{100}$\pm{0}$ & \textbf{100}$\pm{0}$ & \textbf{100}$\pm{0}$ & \textbf{100}$\pm{0}$ & \textbf{100}$\pm{0}$\\
    & 5 & \textbf{100}$\pm{0}$ & \textbf{100}$\pm{0}$ & \textbf{100}$\pm{0}$ & \textbf{100}$\pm{0}$ & \textbf{100}$\pm{0}$\\
    & 10 & \textbf{100}$\pm{0}$ & \textbf{100}$\pm{0}$ & \textbf{100}$\pm{0}$ & \textbf{100}$\pm{0}$ & \textbf{100}$\pm{0}$\\
    & 20 & \textbf{100}$\pm{0}$ & \textbf{100}$\pm{0}$ & \textbf{100}$\pm{0}$ & \textbf{100}$\pm{0}$ & \textbf{100}$\pm{0}$\\
    \midrule
    \multirow{4}{*}{\texttt{sorted}} & 0 & 76$\pm{0}$ & \textbf{100}$\pm{0}$ & 92$\pm{0.1}$ & 76$\pm{0}$ & \textbf{100}$\pm{0}$\\
    & 5 & 78$\pm{0.5}$ & \textbf{100}$\pm{0}$ & 79$\pm{0.2}$ & 81$\pm{1.5}$ & \textbf{100}$\pm{0}$\\
    & 10 & 76$\pm{0}$ & \textbf{100}$\pm{0}$ & 76$\pm{0}$ & 83$\pm{0.1}$ & 93$\pm{0.1}$\\
    & 20 & 75$\pm{0}$ & \textbf{100}$\pm{0}$ & 75$\pm{0}$ & 83$\pm{0.1}$ & 91$\pm{0.3}$\\
    \midrule
    \multirow{4}{*}{\texttt{threesame}} & 0 & \textbf{100}$\pm{0}$ & \textbf{100}$\pm{0}$ & \textbf{100}$\pm{0}$ & \textbf{100}$\pm{0}$ & \textbf{100}$\pm{0}$\\
    & 5 & 99$\pm{0}$ & \textbf{100}$\pm{0}$ & \textbf{100}$\pm{0}$ & \textbf{100}$\pm{0}$ & \textbf{100}$\pm{0}$\\
    & 10 & \textbf{99}$\pm{0}$ & \textbf{99}$\pm{0}$ & 98$\pm{0}$ & 98$\pm{0}$ & 98$\pm{0}$\\
    & 20 & \textbf{99}$\pm{0}$ & \textbf{99}$\pm{0}$ & \textbf{99}$\pm{0}$ & \textbf{99}$\pm{0}$ & \textbf{99}$\pm{0}$\\
    \midrule
    \end{tabular}}
    \caption{Predictive accuracy for Noisy Popper variants on list manipulation problems. Accuracies are rounded to the nearest integer and errors to the nearest tenth. Errors are standard.}
    \label{table:5}
\end{table}

\begin{table}[p]
    \centering
    \resizebox{1\columnwidth}{!}{%
    \begin{tabular}{c|c|c|c|c|c|c} 
    \toprule
    \textbf{Name} & \textbf{Noise ($\%$)} & \textbf{Enumerate} & \textbf{Noisy Popper} & \textbf{w/o Minimal Constraints} & \textbf{w/o Sound Constraints} & \textbf{w/o Sound Size Constraints}\\
    \midrule
    \multirow{4}{*}{\texttt{addhead}} & 0 & 2$\pm{0.2}$ & 2$\pm{0.1}$ & 3$\pm{0.2}$ & 3$\pm{0.2}$ & \textbf{1}$\pm{0.1}$\\
    & 5 & \textbf{4}$\pm{0.1}$ & 79$\pm{4}$ & 68$\pm{2}$ & 183$\pm{7}$ & 80$\pm{0.2}$\\
    & 10 & \textbf{4}$\pm{0.2}$ & 74$\pm{2}$ & 65$\pm{5}$ & 195$\pm{5}$ & 91$\pm{1}$\\
    & 20 & \textbf{5}$\pm{1}$ & 74$\pm{3}$ & 81$\pm{3}$ & 211$\pm{7}$ & 102$\pm{2}$\\
    \midrule
    \multirow{4}{*}{\texttt{droplast}} & 0 & \textbf{5}$\pm{0}$ & 81$\pm{39}$ & 71$\pm{5}$ & 205$\pm{8}$ & 95$\pm{3}$\\
    & 5 & \textbf{5}$\pm{0}$ & 135$\pm{45}$ & 89$\pm{3}$ & 215$\pm{8}$ & 112$\pm{3}$\\
    & 10 & \textbf{6}$\pm{0.1}$ & 142$\pm{33}$ & 92$\pm{2}$ & 217$\pm{8}$ & 110$\pm{2}$\\
    & 20 & \textbf{6}$\pm{0.2}$ & 137$\pm{6}$ & 92$\pm{2}$ & 217$\pm{8}$ & 138$\pm{5}$\\
    \midrule
    \multirow{4}{*}{\texttt{evens}} & 0 & 6$\pm{0.4}$ & 7$\pm{0.5}$ & 14$\pm{2}$ & 17$\pm{3}$ & \textbf{2}$\pm{0.1}$\\
    & 5 & \textbf{5}$\pm{0.4}$ & 38$\pm{1}$ & 47$\pm{2}$ & 77$\pm{3}$ & 14$\pm{0.4}$\\
    & 10 & \textbf{6}$\pm{0.3}$ & 45$\pm{0.5}$ & 58$\pm{0}$ & 102$\pm{3}$ & 15$\pm{0.2}$\\
    & 20 & \textbf{5}$\pm{0.3}$ & 40$\pm{3}$ & 45$\pm{5}$ & 85$\pm{9}$ & 13$\pm{1}$\\
    \midrule
    \multirow{4}{*}{\texttt{finddup}} & 0 & \textbf{7}$\pm{1}$ & 39$\pm{2}$ & 51$\pm{3}$ & 95$\pm{3}$ & 13$\pm{2}$\\
    & 5 & \textbf{9}$\pm{1}$ & 36$\pm{1}$ & 49$\pm{2}$ & 96$\pm{2}$ & 18$\pm{3}$\\
    & 10 & \textbf{5}$\pm{0.6}$ & 39$\pm{2}$ & 40$\pm{0.8}$ & 83$\pm{0.6}$ & 10$\pm{2}$\\
    & 20 & \textbf{6}$\pm{1}$ & 40$\pm{2}$ & 45$\pm{0.3}$ & 90$\pm{2}$ & 12$\pm{1}$\\
    \midrule
    \multirow{4}{*}{\texttt{last}} & 0 & \textbf{3}$\pm{0.4}$ & 15$\pm{4}$ & 13$\pm{0.6}$ & 9$\pm{2}$ & \textbf{2}$\pm{0.4}$\\
    & 5 & \textbf{4}$\pm{0.2}$ & 19$\pm{0.5}$ & 52$\pm{2}$ & 61$\pm{3}$ & 18$\pm{0.4}$\\
    & 10 & \textbf{4}$\pm{0.1}$ & 21$\pm{0.3}$ & 51$\pm{0.8}$ & 60$\pm{2}$ & 18$\pm{0.2}$\\
    & 20 & \textbf{3}$\pm{0.2}$ & 20$\pm{0.3}$ & 52$\pm{1}$ & 62$\pm{2}$ & 19$\pm{0.1}$\\
    \midrule
    \multirow{4}{*}{\texttt{len}} & 0 & 2$\pm{0.3}$ & 2$\pm{0.1}$ & 3$\pm{0.2}$ & 2$\pm{0.2}$ & \textbf{1}$\pm{0.1}$\\
    & 5 & \textbf{3}$\pm{0.2}$ & 59$\pm{6}$ & 60$\pm{4}$ & 67$\pm{2}$ & 15$\pm{2}$\\
    & 10 & \textbf{3}$\pm{0.2}$ & 56$\pm{2}$ & 62$\pm{2}$ & 66$\pm{3}$ & 18$\pm{1}$\\
    & 20 & \textbf{3}$\pm{0.1}$ & 56$\pm{1}$ & 58$\pm{0.8}$ & 67$\pm{1}$ & 16$\pm{1}$\\
    \midrule
    \multirow{4}{*}{\texttt{member}} & 0 & 0.7$\pm{0.1}$ & 0.7$\pm{0}$ & 0.6$\pm{0.2}$ & 1$\pm{0.1}$ & \textbf{0.5}$\pm{0}$\\
    & 5 & \textbf{2}$\pm{0.1}$ & 23$\pm{3}$ & 33$\pm{0.5}$ & 35$\pm{0.2}$ & 22$\pm{0.1}$\\
    & 10 & \textbf{2}$\pm{0.2}$ & 23$\pm{3}$ & 30$\pm{0}$ & 26$\pm{0.2}$ & 20$\pm{0.1}$\\
    & 20 & \textbf{2}$\pm{0.2}$ & 23$\pm{3}$ & 25$\pm{1}$ & 28$\pm{0.5}$ & 16$\pm{0.5}$\\
    \midrule
    \multirow{4}{*}{\texttt{sorted}} & 0 & \textbf{5}$\pm{0.3}$ & 26$\pm{4}$ & 43$\pm{7}$ & 71$\pm{2}$ & 11$\pm{0.6}$\\
    & 5 & \textbf{4}$\pm{0.8}$ & 46$\pm{0.1}$ & 55$\pm{2}$ & 69$\pm{0.5}$ & 15$\pm{0.3}$\\
    & 10 & \textbf{5}$\pm{0.2}$ & 43$\pm{0.9}$ & 50$\pm{2}$ & 69$\pm{0.4}$ & 15$\pm{0.7}$\\
    & 20 & \textbf{5}$\pm{0.2}$ & 43$\pm{0.2}$ & 47$\pm{0.7}$ & 72$\pm{0.5}$ & 14$\pm{0.4}$\\
    \midrule
    \multirow{4}{*}{\texttt{threesame}} & 0 & 0.5$\pm{0.1}$ & 0.5$\pm{0.1}$ & 0.6$\pm{0.1}$ & 0.6$\pm{0.1}$ & \textbf{0.4}$\pm{0}$\\
    & 5 & 5$\pm{0.2}$ & \textbf{2}$\pm{0.1}$ & 82$\pm{5}$ & 5$\pm{0}$ & \textbf{2}$\pm{0.1}$\\
    & 10 & 5$\pm{0.1}$ & 3$\pm{0.3}$ & 78$\pm{2}$ & 2$\pm{0.1}$ & \textbf{0.8}$\pm{0}$\\
    & 20 & 5$\pm{0.2}$ & 4$\pm{0.2}$ & 76$\pm{2}$ & 3$\pm{0.3}$ & \textbf{1}$\pm{0.1}$\\
    \midrule
    \end{tabular}}
    \caption{Learning times for Noisy Popper variants on list manipulation problems. Times are rounded to the nearest second if they are greater than 1 second and to the tenth otherwise. Errors are standard.}
    \label{table:6}
\end{table}

\newpage

\subsection{Experiment 3: IGGP Problems}
This set of experiments is identical to those in Section 6.1.3 using the two IGGP problems minimal decay and rps. 

\paragraph{Materials}
The materials are identical to those in Section 6.1.3.

\paragraph{Methods}
The methods are identical to those in Section 6.1.3.

\paragraph{Results and Analysis}
Figures~\ref{fig:8} and \ref{fig:9} both demonstrate that on both IGGP tasks, each Noisy Popper variant obtains roughly equal predictive accuracy for these levels of noise. For both minimal decay and rps tasks, Noisy Popper without size constraints again performs significantly more efficiently than the other variants, again due to the grounding blowup previously discussed. This data again suggests that the answer to \textbf{Q3} is that the enhancements of Noisy Popper do not necessarily improve its predictive accuracy over a brute force approach and that due to the grounding blowup of the size constraints, Noisy Popper performs much less efficiently than this brute force approach. 

\begin{figure}[ht]
    \begin{subfigure}{.5\linewidth}
    \centering
        \begin{tikzpicture}
        \begin{axis}[
        width=\linewidth,
        legend pos=south west,
        legend style={nodes={scale=0.45, transform shape}},
        xmin=0,
        xmax=40,
        xlabel={Noise ($\%$)},
        ylabel={Predictive Accuracy ($\%$)}]
        
        \addplot+[error bars/.cd,y dir=both,y explicit]
        table [
        x expr=\thisrow{x_data} * 100,
        y=y_data,
        col sep=comma,
        y error plus expr=\thisrow{error},y error minus expr=\thisrow{error},
        ] {data2/minimal-decay-acc-vs-noise-enum.csv};
        
        \addplot+[color=red,mark=square*,error bars/.cd,y dir=both,y explicit]
        table [
        x expr=\thisrow{x_data} * 100,
        y=y_data,
        col sep=comma,
        y error plus expr=\thisrow{error},y error minus expr=\thisrow{error},
        ] {data2/minimal-decay-acc-vs-noise-noisy.csv};
        
        \addplot+[error bars/.cd,y dir=both,y explicit]
        table [
        x expr=\thisrow{x_data} * 100,
        y=y_data,
        col sep=comma,
        y error plus expr=\thisrow{error},y error minus expr=\thisrow{error},
        ] {data2/minimal-decay-acc-vs-noise-no-min.csv};
        
        \addplot+[error bars/.cd,y dir=both,y explicit]
        table [
        x expr=\thisrow{x_data} * 100,
        y=y_data,
        col sep=comma,
        y error plus expr=\thisrow{error},y error minus expr=\thisrow{error},
        ] {data2/minimal-decay-acc-vs-noise-no-learn.csv};
        
        \addplot+[error bars/.cd,y dir=both,y explicit]
        table [
        x expr=\thisrow{x_data} * 100,
        y=y_data,
        col sep=comma,
        y error plus expr=\thisrow{error},y error minus expr=\thisrow{error},
        ] {data2/minimal-decay-acc-vs-noise-no-size.csv};
        
        \legend{{Enumerate}, {Noisy Popper}, {w/o Min. Cons.}, {w/o Sound Cons.}, {w/o Size Cons.}}
        \end{axis}
        \end{tikzpicture}
    \end{subfigure}
    \begin{subfigure}{.5\linewidth}
    \centering
        \begin{tikzpicture}
        \begin{axis}[
        width=\linewidth,
        legend pos=north west,
        legend style={nodes={scale=0.3, transform shape}},
        xmin=0,
        xmax=40,
        xlabel={Noise ($\%$)},
        ylabel={Time (seconds)}]
        
        \addplot+[error bars/.cd,y dir=both,y explicit]
        table [
        x expr=\thisrow{x_data} * 100,
        y=y_data,
        col sep=comma,
        y error plus expr=\thisrow{error},y error minus expr=\thisrow{error},
        ] {data2/minimal-decay-time-vs-noise-enum.csv};
        
        \addplot+[color=red,mark=square*,error bars/.cd,y dir=both,y explicit]
        table [
        x expr=\thisrow{x_data} * 100,
        y=y_data,
        col sep=comma,
        y error plus expr=\thisrow{error},y error minus expr=\thisrow{error},
        ] {data2/minimal-decay-time-vs-noise-noisy.csv};
        
        \addplot+[error bars/.cd,y dir=both,y explicit]
        table [
        x expr=\thisrow{x_data} * 100,
        y=y_data,
        col sep=comma,
        y error plus expr=\thisrow{error},y error minus expr=\thisrow{error},
        ] {data2/minimal-decay-time-vs-noise-no-min.csv};
        
        \addplot+[error bars/.cd,y dir=both,y explicit]
        table [
        x expr=\thisrow{x_data} * 100,
        y=y_data,
        col sep=comma,
        y error plus expr=\thisrow{error},y error minus expr=\thisrow{error},
        ] {data2/minimal-decay-time-vs-noise-no-learn.csv};
        
        \addplot+[error bars/.cd,y dir=both,y explicit]
        table [
        x expr=\thisrow{x_data} * 100,
        y=y_data,
        col sep=comma,
        y error plus expr=\thisrow{error},y error minus expr=\thisrow{error},
        ] {data2/minimal-decay-time-vs-noise-no-size.csv};
        
        \end{axis}
        \end{tikzpicture}
    \end{subfigure}
\caption{IGGP minimal decay task predictive accuracy and time of Noisy Popper variants (in seconds) for when varying percentage of noisy training data. Standard error is depicted by bars.}
\label{fig:8}
\end{figure}
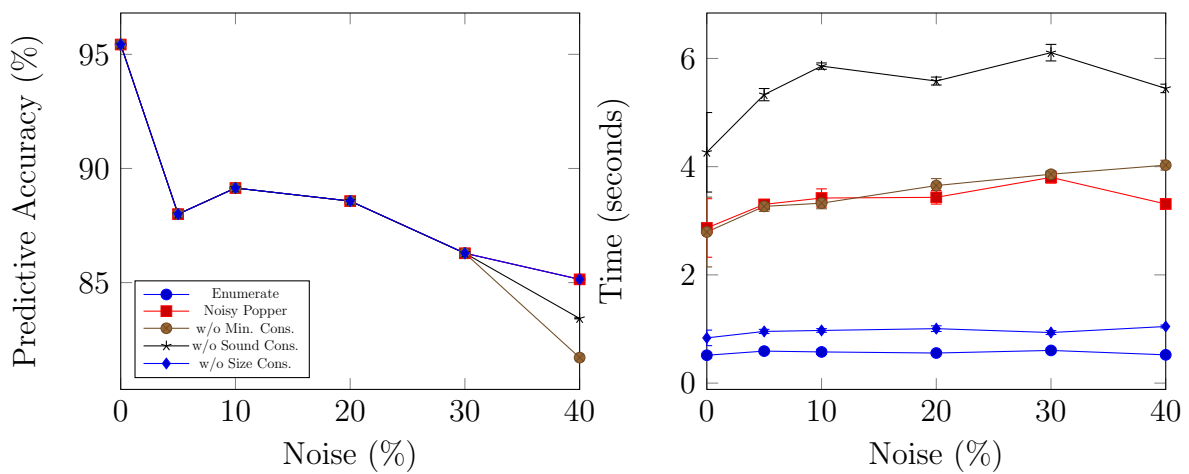

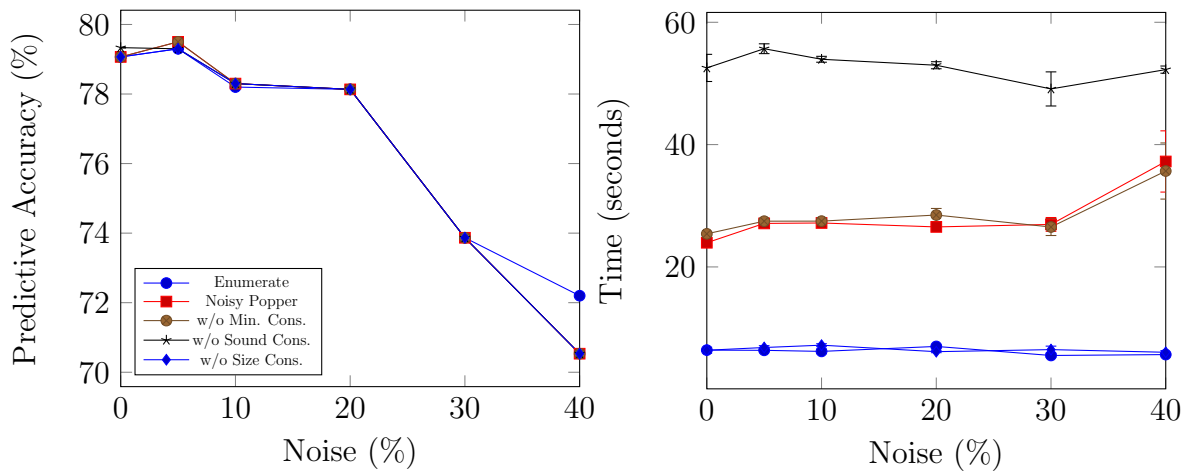
\begin{figure}[ht]
    \begin{subfigure}{.5\linewidth}
    \centering
        \begin{tikzpicture}
        \begin{axis}[
        width=\linewidth,
        legend pos=south west,
        legend style={nodes={scale=0.5, transform shape}},
        xmin=0,
        xmax=40,
        xlabel={Noise ($\%$)},
        ylabel={Predictive Accuracy ($\%$)}]
        
        \addplot+[error bars/.cd,y dir=both,y explicit]
        table [
        x expr=\thisrow{x_data} * 100,
        y=y_data,
        col sep=comma,
        y error plus expr=\thisrow{error},y error minus expr=\thisrow{error},
        ] {data2/rps-acc-vs-noise-enum.csv};
        
        \addplot+[color=red,mark=square*,error bars/.cd,y dir=both,y explicit]
        table [
        x expr=\thisrow{x_data} * 100,
        y=y_data,
        col sep=comma,
        y error plus expr=\thisrow{error},y error minus expr=\thisrow{error},
        ] {data2/rps-acc-vs-noise-noisy.csv};
        
        \addplot+[error bars/.cd,y dir=both,y explicit]
        table [
        x expr=\thisrow{x_data} * 100,
        y=y_data,
        col sep=comma,
        y error plus expr=\thisrow{error},y error minus expr=\thisrow{error},
        ] {data2/rps-acc-vs-noise-no-min.csv};
        
        \addplot+[error bars/.cd,y dir=both,y explicit]
        table [
        x expr=\thisrow{x_data} * 100,
        y=y_data,
        col sep=comma,
        y error plus expr=\thisrow{error},y error minus expr=\thisrow{error},
        ] {data2/rps-acc-vs-noise-no-learn.csv};
        
        \addplot+[error bars/.cd,y dir=both,y explicit]
        table [
        x expr=\thisrow{x_data} * 100,
        y=y_data,
        col sep=comma,
        y error plus expr=\thisrow{error},y error minus expr=\thisrow{error},
        ] {data2/rps-acc-vs-noise-no-size.csv};
        
        \legend{{Enumerate}, {Noisy Popper}, {w/o Min. Cons.}, {w/o Sound Cons.}, {w/o Size Cons.}}
        \end{axis}
        \end{tikzpicture}
    \end{subfigure}
    \begin{subfigure}{.5\linewidth}
    \centering
        \begin{tikzpicture}
        \begin{axis}[
        width=\linewidth,
        legend pos=north west,
        legend style={nodes={scale=0.3, transform shape}},
        xmin=0,
        xmax=40,
        xlabel={Noise ($\%$)},
        ylabel={Time (seconds)}]
        
        \addplot+[error bars/.cd,y dir=both,y explicit]
        table [
        x expr=\thisrow{x_data} * 100,
        y=y_data,
        col sep=comma,
        y error plus expr=\thisrow{error},y error minus expr=\thisrow{error},
        ] {data2/rps-time-vs-noise-enum.csv};
        
        \addplot+[color=red,mark=square*,error bars/.cd,y dir=both,y explicit]
        table [
        x expr=\thisrow{x_data} * 100,
        y=y_data,
        col sep=comma,
        y error plus expr=\thisrow{error},y error minus expr=\thisrow{error},
        ] {data2/rps-time-vs-noise-noisy.csv};
        
        \addplot+[error bars/.cd,y dir=both,y explicit]
        table [
        x expr=\thisrow{x_data} * 100,
        y=y_data,
        col sep=comma,
        y error plus expr=\thisrow{error},y error minus expr=\thisrow{error},
        ] {data2/rps-time-vs-noise-no-min.csv};
        
        \addplot+[error bars/.cd,y dir=both,y explicit]
        table [
        x expr=\thisrow{x_data} * 100,
        y=y_data,
        col sep=comma,
        y error plus expr=\thisrow{error},y error minus expr=\thisrow{error},
        ] {data2/rps-time-vs-noise-no-learn.csv};
        
        \addplot+[error bars/.cd,y dir=both,y explicit]
        table [
        x expr=\thisrow{x_data} * 100,
        y=y_data,
        col sep=comma,
        y error plus expr=\thisrow{error},y error minus expr=\thisrow{error},
        ] {data2/rps-time-vs-noise-no-size.csv};
        
        \end{axis}
        \end{tikzpicture}
    \end{subfigure}
\caption{IGGP rps task predictive accuracy and time of Noisy Popper variants (in seconds) when varying percentage of noisy training data. Standard error is depicted by bars.}
\label{fig:9}
\end{figure}

\section{Summary}
In this chapter, we empirically evaluated Noisy Popper's performance against Normal Popper's as well as the effects of the individual enhancements of the Noisy Popper system. Noisy Popper was shown to better generalize to noisy datasets than Normal Popper for some tasks, but many experiments suggested that Normal Popper enhanced with an anytime algorithm approach can often generalize very well to these datasets. The minimal constraints and sound hypothesis constraints are effective at pruning the hypothesis space and aiding the system in finding optimal solutions by generating fewer hypotheses than the brute force Enumerate method requires. However, this comes at the cost of the expected inefficiency when compared to Normal Popper and Enumerate. The grounding blowup of the size constraints makes generating several thousand programs to learn from infeasible for the system. The following chapter will discuss work to be done in the future to mitigate these inefficiencies and other additions which may be considered for the system. We will also give a brief summary of this project, its findings and contributions as well as its limitations.
\end{chapter}
\begin{chapter}
{Conclusions}
This paper has discussed the theoretical background, implementation details, and empirical analysis of the Noisy Popper ILP system, an extension of the Normal Popper system \cite{popper} which is capable of generalizing to noisy datasets. The following sections give a critical summary and analysis of the work and address future work which could improve Noisy Popper's capabilities.

\section{Summary and Evaluation}
 Handling missclassified training examples is an important task in machine learning, though many ILP systems are not naturally capable of doing so. We have shown that the learning from failures (LFF) approach which Normal Popper takes to prune its hypothesis search space is not naturally conducive to this task. The relaxed LFF setting introduced in this paper takes a less strict approach to hypothesis search and in doing so demonstrates better theoretical capabilities of finding hypotheses which generalize well to noisy data. We proved several theoretical claims over how comparing hypotheses can identify sets of hypotheses which perform suboptimally in this relaxed setting under two scoring measures: $S_{ACC}$ which measures the training accuracy of a hypothesis and $S_{MDL}$ which weighs training accuracy against the size of the hypothesis.
 
 \bigskip \noindent In implementation, Noisy Popper adapts the approach taken by Normal Popper, relaxing the ASP hypothesis constraints which prune the hypothesis space and instead typically only generating constraints which are sound in the relaxed setting. In this way, the theoretical claims made over the relaxed LFF system are translated into hypothesis constraints which reduce the hypothesis space during the system's search. Many of these constraints however create a large cost in the logical grounding required by the system which leads to significant runtime inefficiencies. The experimental work demonstrated that Noisy Popper never generalizes worse than Normal Popper for both noisy and non-noisy datasets and is capable of exceeding the predictive accuracy of Normal Popper on several datasets. However, enhancing Normal Popper with an anytime algorithm approach makes the system very capable of generalizing to noisy data on its own. The main deficiency of Noisy Popper is its inefficiency which future work must address to make the system viable in practice. Despite these shortcomings, Noisy Popper shows promise of being a useful ILP system which accurately and efficiently generalizes to noisy datasets.

\section{Future Work}
\paragraph{Recusive Cases}
Several of the theoretical claims proved in this paper only discussed the suboptimality of non-recursive programs. Generalizing these claims or constructing new ones which discuss the suboptimality of recursive programs would make these claims more complete. Such claims could also be used in practice to greatly improve the efficiency of hypothesis search.

\paragraph{Scoring Metrics}
In this paper, only two scoring functions were discussed, $S_{ACC}$ and $S_{MDL}$ and all theoretical claims were derived under these two settings. Additional theory should be explored under both of these scorings and additional scorings such as those which measure coverage or entropy. Whether these claims manifest as new noise handling systems or as additions to Noisy Popper itself, determining new cases for hypothesis pruning can be used by many ILP systems moving forward and can better improve the searching efficiency of such systems.

\paragraph{Grounding Bottleneck}
The largest bottleneck for Noisy Popper currently is the need to ground all programs for all applicable sizes for the hypothesis cosntraints with programs size. This leads to a blowup in number of required groundings which is the expected cause of the significant learning time difference from Normal Popper to Noisy Popper. Changing the ASP encodings for these size constraints to eliminate this blowup should massively improve the overall efficiency of the system and make it viable for much larger problems than those tested in Chpater 6. Insufficient time led to this problem remaining unresolved upon completion of this project.

\paragraph{Subsumption Checking}
Another large efficiency issue present in the implementation is the naive method in which Noisy Popper compares programs and checks for subsumption. Rather than maintaining all previously seen programs as a list, data structures such as subsumption lattices may be used to reduce the overall number of subsumption checks needed. Given how expensive the subsumption is to check, or rather the incomplete subsumption we use in implementation, this is an improvement that would again boost the overall efficiency of the system and allow the system to better handle checking exceptionally large hypotheses.

\paragraph{Parallelization}
Work to make use of multi-core machines and parallelize the Popper system has been completed and has been shown to greatly improve the learning rate of the system. Combining the noise handling approaches used in Noisy Popper with this parallized approach should vastly improve the efficiency of the system. Such work however is not trivial and determining new theoretical claims about such an environment are necessary for a sound implementation. 

\end{chapter}


\addcontentsline{toc}{chapter}{Bibliography}
\bibliography{refs}        
\bibliographystyle{plain}  

\end{document}